%% file: main.tex
\documentclass[11pt,twoside]{article} 

\usepackage{eqnarray,amsmath}
\usepackage[utf8]{inputenc} 
\usepackage[T1]{fontenc}    
\usepackage{booktabs}       
\usepackage{amsfonts}       
\usepackage{nicefrac}       
\usepackage{microtype}      
\usepackage{xcolor}         
\usepackage{subcaption}
\expandafter\def\csname 
ver@subfig.sty\endcsname{}
\usepackage{eqnarray,amsmath}
\usepackage{epsf}
\usepackage{epsfig}
\usepackage{fancyhdr}
\usepackage{graphics}
\usepackage{graphicx}
\usepackage{psfrag}
\usepackage{fullpage}
\usepackage{pdfpages}

\usepackage{url}

\usepackage{color}

\usepackage{amsthm}
\usepackage{amsmath}
\usepackage{amssymb,bbm}
\usepackage[skip=0pt]{caption}  

\usepackage{textcomp}
\usepackage{siunitx}
\usepackage{wrapfig}
\usepackage{algorithm}

\usepackage{multirow}
\usepackage{multicol}
\usepackage{makecell}
\usepackage{colortbl} 
\usepackage{changepage}

\definecolor{customyellow}{HTML}{fedf8a}
\usepackage{eqnarray,amsmath}
\usepackage{epsf}
\usepackage{epsfig}
\usepackage{fancyhdr}
\usepackage{graphics}
\usepackage{graphicx}
\usepackage{psfrag}
\usepackage{fullpage}
\usepackage{pdfpages}
\usepackage{ragged2e}
\usepackage[font=small,labelfont=bf]{caption}

\newtheorem{assumption}{Assumption}

\newtheorem{lemma}{Lemma}

\newtheorem{theorem}{Theorem}
\newtheorem{proposition}{Proposition}
\newtheorem{definition}{Definition}

\input{math_commands}

\begin{document}

\begin{center}

{\bf{\LARGE{A Statistical Theory of Gated Attention through the Lens \\ \vspace{0.1em} of Hierarchical Mixture of Experts}}}
  
\vspace*{.2in}
{\large{
\begin{tabular}{cccc}
Viet Nguyen$^{\star,\diamond}$ & Tuan Minh Pham$^{\star,\diamondsuit}$ & Thinh Cao$^{\star,\diamond}$ & Tan Dinh$^{\ddagger}$ \\
\multicolumn{4}{c}{Huy Nguyen$^{\dagger}$ \quad Nhat Ho$^{\star\star,\dagger}$ \quad Alessandro Rinaldo$^{\star\star,\dagger}$}
\end{tabular}
}}

\vspace*{.2in}

\begin{tabular}{c}
$^{\dagger}$The University of Texas at Austin\\
$^{\diamond}$Hanoi University of Science and Technology\\
$^{\diamondsuit}$Purdue University\\
$^{\ddagger}$Trivita AI
\end{tabular}

\vspace*{.2in}
\today


\begin{abstract}
    Self-attention has greatly contributed to the success of the widely used Transformer architecture by enabling learning from data with long-range dependencies. In an effort to improve performance, a gated attention model that leverages a gating mechanism within the multi-head self-attention has recently been proposed as a promising alternative. Gated attention has been empirically demonstrated to increase the expressiveness of low-rank mapping in standard attention and even to eliminate the attention sink phenomenon. Despite its efficacy, a clear theoretical understanding of gated attention's benefits remains lacking in the literature. To close this gap, we rigorously show that each entry in a gated attention matrix or a multi-head self-attention matrix can be written as a hierarchical mixture of experts. By recasting learning as an expert estimation problem, we demonstrate that gated attention is more sample-efficient than multi-head self-attention. In particular, while the former needs only a polynomial number of data points to estimate an expert, the latter requires exponentially many data points to achieve the same estimation error. Furthermore, our analysis also provides a theoretical justification for why gated attention yields higher performance when a gate is placed at the output of the scaled dot product attention or the value map rather than at other positions in the multi-head self-attention architecture.
\end{abstract}
\end{center}
\let\thefootnote\relax\footnotetext{$^{\star}$Equal contribution, $^{\star\star}$Co-last authors.}

\section{Introduction}
\label{sec:introduction}
The self-attention mechanism, introduced as a central component of the Transformer architecture by \cite{vaswani2017attention}, has fundamentally transformed sequence modeling tasks in deep learning. Unlike recurrent and convolutional neural networks that process tokens sequentially or within fixed local windows, self-attention allows every position in a sequence to directly attend to all other positions in a single parallel operation. In particular, for each token in the input sequence, it produces a context vector as a weighted sum of all tokens, allocating higher weights to those that are more relevant to the current token's context. Therefore, the self-attention is capable of capturing long-range dependencies with unprecedented flexibility and efficiency. As a consequence, Transformers have served as the backbone of virtually all state-of-the-art large language models (GPT series \cite{openai2024gpt4technicalreport}, Gemini \cite{comanici2025gemini25pushingfrontier}, LLaMA \cite{grattafiori2024llama3}, DeepSeek \cite{deepseekv3}, Qwen \cite{qwen2025}), vision language models (ViT \cite{dosovitskiy_image_2021}, Swin \cite{liu2021swin}), and multimodal models (CLIP \cite{radford2021clip}, LLaVA \cite{liu2023llava}).

\vspace{0.5em}
\noindent
Despite the remarkable success of the self-attention architecture, there is still room for improvement. Firstly, since the value and dense projections (i.e., the matrices $W_V$ and $W_O$; see Section \ref{sec:preliminaries} below for details) are two adjacent linear layers in the multi-head self-attention, they can be rewritten as one low-rank linear projection \cite{qiu2025gated}. This low-rank mapping turns out to limit the expressiveness of attention head linear combinations. 
Secondly, the attention sink phenomenon -- whereby a disproportionate amount of attention mass is allocated to a small number of initial tokens even when those tokens carry little or no semantic relevance to the current prediction -- is a widely observed behavior in autoregressive large language models \cite{xiao2024efficient,gu2024attentionsink,ramapuram2024sigmoidattention}. To mitigate these problems, \cite{qiu2025gated} proposed using gated attention where a gating mechanism is applied at one of five positions in the multi-head self-attention:  following the scaled dot product attention outputs (G1); after the value (G2), key (G3), and query projections (G4); and after the final dense output layer (G5). Through extensive experiments and comparisons, the authors noted that adopting gating mechanisms at positions G1 and G2 appears to yield the highest performance gains. This enhancement is attributed to two key properties of gating mechanisms: non-linearity and sparsity. In particular, the non-linearity augments the expressiveness of the aforementioned low-rank linear projection, while the sparsity helps circumvent the attention sink phenomenon. Although there is now compelling  empirical evidence to corroborate the benefits of gated attention, a theoretical understanding of gated attention is still lacking in the literature. 

\vspace{0.5em}
\noindent
\textbf{Contributions.} The main goal of this work is to lay the theoretical foundation for studying gated attention. Towards that end, we establish a novel relation between gated attention and  hierarchical mixture of experts (HMoE) models \cite{Jordan1993hmoe}. Leveraging this connection, we analyze the sample complexity of gated attention by recasting it as a problem of expert specialization  \cite{oldfield2024specialize}. Our contributions are threefold and can be summarized as follows. 

\vspace{0.5em}
\noindent
\emph{1. Gated attention meets HMoE.} In Section~\ref{sec:preliminaries}, we prove that each entry of a gated attention (resp. multi-head self-attention) matrix can be represented as a three-level HMoE with non-linear (resp. linear) experts.  

\vspace{0.5em}
\noindent
\emph{2. Sample complexity of multi-head self-attention.} Due to the simple linear structure of experts in the representation of multi-head self-attention, we show that the sample complexity of multi-head self-attention is exponential. Specifically, we prove that in order to estimate an expert within a given accuracy $\epsilon>0$ (see also Table~\ref{tab:rate_summary}), one needs exponentially many data points, on the order of $\mathcal{O}(\exp(\epsilon^{-1/\tau}))$ for some positive constant $\tau$. This result should discourage the use of vanilla self-attention.

\vspace{0.5em}
\noindent
\emph{3. Sample complexity of gated attention.} We investigate the sample complexity of the gated attention variants G1 and G2 that were experimentally found in \cite{qiu2025gated} to be most effective, which take non-linear forms. In our main result, we show that gating  significantly reduces the sample complexity, from an exponential order to a polynomial order $\mathcal{O}(\epsilon^{-4})$ (see also Table~\ref{tab:rate_summary}). Hence, we claim that gated attention is more sample-efficient than multi-head self-attention.

\vspace{0.5em}
\noindent
In addition, we perform several numerical experiments to corroborate our theories in Section~\ref{sec:experiments}. Finally, in Section~\ref{sec:discussion}, we highlight some practical implications of our findings, and discuss a few limitations of our work as well as some potential future directions.


\begin{table*}[t]
\centering
\captionsetup{justification=justified,singlelinecheck=false}
\caption{Summary of convergence rates for attention mechanisms under different gating configurations. The parameter estimation rate refers to the convergence of the mixing measure under the Voronoi losses $\mathcal{L}_1$ (Theorem~\ref{theorem:minimax_lower_bound_voronoi}) and $\mathcal{L}_2$ (Theorems~\ref{theorem:nonlinear_at_V} and~\ref{theorem:nonlinear_at_SDPA}) defined in Sections~\ref{sec:MHA} and~\ref{sec:gated_attention}, respectively. The sample complexity describes the number of data points required for expert estimation with error $\epsilon$.}
\label{tab:rate_summary}
\resizebox{\textwidth}{!}{
\begin{tabular}{|l|c|c|c|}
\hline
\textbf{Mechanisms} 
& \textbf{Expert estimation rates} 
& \textbf{Sample complexity}
& \textbf{Theorems}\\
\hline
Multi-head self-attention 
& Slower than any polynomial order 
& $\mathcal{O}(\exp(\epsilon^{-1/\tau}))$
& Theorem~\ref{theorem:minimax_lower_bound_voronoi}\\
\hline
Gated attention (Setting I) 
& $\mathcal{O}_P([\log(n)/n]^{\frac{1}{2}})$ to $\mathcal{O}_P([\log(n)/n]^{\frac{1}{4}})$
& $\mathcal{O}(\epsilon^{-4})$ 
& Theorem~\ref{theorem:nonlinear_at_V}\\
\hline
Gated attention (Setting II) 
& $\mathcal{O}_P([\log(n)/n]^{\frac{1}{2}})$ to $\mathcal{O}_P([\log(n)/n]^{\frac{1}{4}})$
& $\mathcal{O}(\epsilon^{-4})$ 
& Theorem~\ref{theorem:nonlinear_at_SDPA}\\
\hline
\end{tabular}
}
\end{table*}

\vspace{0.5em}
\noindent
\textbf{Notation.} For any $n\in\mathbb{N}$, we set $[n] = \{1,2,\ldots,n\}$. For a set $S$, $|S|$ denotes its cardinality. For a vector $v = (v_1,\ldots,v_d) \in \mathbb{R}^d$, $\|v\|$ is its Euclidean ($\ell_2$) norm. For a multi-index $\alpha = (\alpha_1,\ldots,\alpha_d) \in \mathbb{N}^d$, we write $|\alpha| = \sum_{i=1}^d \alpha_i$, $\alpha! = \alpha_1! \cdots \alpha_d!$, and for $v \in \mathbb{R}^d$, $v^{\alpha} = v_1^{\alpha_1} \cdots v_d^{\alpha_d}$. For positive sequences $(a_n)$ and $(b_n)$, we write $a_n = \mathcal{O}(b_n)$ or $a_n \lesssim b_n$ if $a_n \le C b_n$ for all $n$, with a universal constant $C>0$. For random positive sequences $(t_n)$ and $(s_n)$, $t_n = \mathcal{O}_P(s_n)$ means $t_n/s_n$ is stochastically bounded: for any $\epsilon>0$ there exists $M>0$ such that $\mathbb{P}(t_n/s_n > M) < \epsilon$ for all sufficiently large $n$.

\section{Preliminaries}
\label{sec:preliminaries}


In this section, we
present gated attention as an extension of multi-head attention, and then introduce hierarchical MoE. Later on, we will relate these two models and present a unified analysis. 

\vspace{0.5em}
\noindent
\textbf{Multi-head Self-attention (MHA).} Denote the input sequence by $\mathbb{X} := [x_1, \ldots, x_N]^\top \in \mathbb{R}^{N \times d}$,
where $N$ is the sequence length and $d$ is the embedding dimension. Scaled dot-product attention (SDPA) computes queries, keys, and values by linear projections of $\mathbb{X}$, yielding matrices $Q:= \mathbb{X}W_Q, K:= \mathbb{X}W_K, V:= \mathbb{X}W_V$
with learnable projection matrices $W_Q, W_K, W_V \in \mathbb{R}^{d \times d_v}$. The attention operator is then defined as
\[
\mathrm{Attn}(Q,K,V)
=
\mathrm{softmax}\!\left(\frac{QK^\top}{\sqrt{d_v}}\right)V  \in \mathbb{R}^{N \times d_v},
\]
where for a matrix $A$,  $\mathrm{softmax}(A)$ applies the softmax functions separately to the rows of $A$. Here and throughout, the dimensions $N$ and $d_v$ are assumed as given.
Multi-head self-attention (MHA) extends scaled dot-product attention by computing multiple attention heads in parallel, each of which processes the input sequence through a different learned projection subspace. Formally, the output of an MHA layer is calculated as 
\[
\mathrm{MHA}(\mathbb{X}_Q, \mathbb{X}_K, \mathbb{X}_V) :
=
\mathrm{Concat}\big(\mathrm{head}_1,\ldots,\mathrm{head}_H\big) W_O,
\]
where $\mathrm{head}_h = \mathrm{Attn}(XW_{Q,h}, XW_{K,h}, XW_{V,h})$. This output formulation can be explicitly written as 
$$\sum_{h=1}^H \mathrm{softmax}\left(\frac{\mathbb{X}W_{Q,h}W_{K,h}^\top\mathbb{X}^\top}{\sqrt{d_v}}\right)\mathbb{X}W_{V,h}W_{O,h},
$$
where $W_O := [W_{O,1},\ldots, W_{O,h}]^\top \in \mathbb{R}^{Hd_v \times d}$. 

\vspace{0.5em}
\noindent
\textbf{Gated Attention.} In conventional MHA, the composition of the two adjacent projections $W_{V,h}$ and $W_{O,h}$ results in an implicit low-rank linear transformation. To enhance the expressiveness of this low-rank structure, \cite{qiu2025gated} empirically demonstrated that introducing non-linearity either following the scaled dot-product attention (SDPA)  output or directly after the value output enhances the expressive power of the model. 

\vspace{0.5em}
\noindent
When a non-linear activation $\varphi$ is applied to the value output, the output of the gated attention layer can be expressed as
$$
\sum_{h=1}^H \mathrm{softmax}\left(\frac{\mathbb{X}W_{Q,h}W_{K,h}^\top\mathbb{X}^\top}{\sqrt{d_v}}\right)\varphi(\mathbb{X}W_{V,h})W_{O,h}.
$$
Alternatively, if the non-linearity is applied to the SDPA output, the gated attention layer can be written as
$$
\sum_{h=1}^H \varphi\left(\mathrm{softmax}\left(\frac{\mathbb{X}W_{Q,h}W_{K,h}^\top\mathbb{X}^\top}{\sqrt{d_v}}\right)\mathbb{X}W_{V,h}\right)W_{O,h}.
$$

\vspace{0.5em}
\noindent
\textbf{Hierarchical mixture of experts (HMoE).}
The HMoE architecture \cite{Jordan1993hmoe} extends and adds structure to the standard MoE model \cite{Jacobs1991adaptive} by organizing local experts into a tree-structured hierarchy rather than a flat architecture. This structure employs a multi-level probabilistic routing mechanism to partition complex input spaces.

\vspace{0.5em}
\noindent
For ease of presentation, consider, for example, a two-level HMoE formulation.  The first level consists of a root gating network $\phi(x)$, while the second level comprises conditional gating networks $\psi_{v|u}(x)$ nested within the branches. The model output is computed as a nested convex combination of the leaf experts $\mathcal{E}_{v|u}(x)$, i.e.
$$
\hat{y} = \sum_{u=1}^{M} \phi_u(x) \sum_{v=1}^{K} \psi_{v|u}(x) \cdot \mathcal{E}_{v|u}(x).
$$
Typically, the gating functions $\phi$ and $\psi$ employ a softmax activation to ensure valid probability weights.

\vspace{0.5em}
\noindent
\textbf{Gated Attention meets Hierarchical Mixture of Experts.} Below, we show that the gated attention output can be interpreted as an HMoE model. Let $x = \mathrm{Vec}(\mathbb{X}) = (x_1^\top,\ldots,x_N^\top)^\top \in \mathbb{R}^{Nd}$ be the vectorization of $\mathbb{X}$ and set  $P_h:= \frac{W_{Q,h}(W_{K,h})^\top}{\sqrt{d_v}}$ and $J_i := \boldsymbol{e}_i^\top \otimes I_d\in \mathbb{R}^{d \times Nd}$, where $\otimes$ stands for Kronecker product.
Then, $J_i$ extracts the transpose of the $i^{th}$ row of matrix $\mathbb{X}$:  $J_i
x = x_i$.  Letting $M_{h,ij} = J_i^\top P_hJ_j$ and $A_{h,j} = J_j^\top W_{V,h}$, we see that the $(i,j)-{th}$ entry of the softmax output can be expressed as 
\begin{equation*}
    [\mathbb{X} P_h\mathbb{X}^{\top}]_{i,j} = x_i^{\top} P_h x_j =  x^{\top}J_i^\top P_hJ_jx= x^{\top}M_{h,ij}x.
\end{equation*}
As a result, the transpose of the $i^{th}$ row of the multi-head self-attention matrix, $[\mathrm{MHA}(\mathbb{X}_Q, \mathbb{X}_K, \mathbb{X}_V)]^{\top}_{i}$,  is given by 
\begin{equation*}
     \sum_{h=1}^{H}\sum_{j=1}^N\dfrac{\exp(x^{\top}M_{h,ij}x)}{\sum_{l=1}^N\exp(x^{\top}M_{h,il}x)}\cdot W_{O,h}^{\top}  (A_{h,j}^{\top})x.
\end{equation*}
Set $W_{O,h}=(\omega_{h,ii'})_{i\in[d_v],i'\in[d]}$ and let $a_{h,j,k}\in\mathbb{R}^{Nd}$ be the $k$-th column vector of the matrix $A_{h,j}$ Then one can verify that the $(i,i')$-th entry of the multi-head self-attention matrix can be written in a HMoE form:
\begin{equation*}
     \sum_{h=1}^{H}\sum_{k=1}^{d_v}\omega_{h,ki'}\sum_{j=1}^N\dfrac{\exp(x^{\top}M_{h,ij}x)}{\sum_{l=1}^N\exp(x^{\top}M_{h,il}x)}\cdot a_{h,j,k}^{\top}x.
\end{equation*}
Two other variants of gated attention are considered later in equations~\eqref{eq:gated_attention_model} and \eqref{eq:gated_SPDA_model}.

\section{Sample Complexity of Multi-head Attention}\label{sec:MHA}
In this section, we  analyze the statistical sample complexity of multi-head self-attention in the expert specialization problem \cite{oldfield2024specialize} by leveraging its HMoE representation. In particular, we analyze how fast an expert learns a specific region of the data. To begin with, let us formally present the problem setup. 

\vspace{0.5em}
\noindent
\textbf{Problem setup.} Suppose that the data $\{(X_i,Y_i)\}_{i=1}^{n}\subset\mathbb{R}^{\bar{d}}\times\mathbb{R}$ are i.i.d samples from 
the regression model
\begin{align}
    Y_i=f_{\Gs}(X_i)+\varepsilon_i, \quad i=1,2,\ldots,n, 
    \label{eq:regression_model}
\end{align}
where the regression function $x \mapsto f_{\Gs}(x)$ is the $(H^*,K^*,N^*)$-HMoE model given by
\begin{align}
    \label{eq:mha_moe}
\sum_{h=1}^{H^*}\sum_{k=1}^{K^*}\omega^*_{h,k}\sum_{i=1}^{N^*}\frac{\exp(x^{\top}M^*_{h,i}x)}{\sum_{j=1}^{N^*}\exp(x^{\top}M^*_{h,j}x)}
    \cdot (a^*_{h,i,k})^{\top}x.
\end{align}
Above, $\varepsilon_1,\varepsilon_2,\ldots,\varepsilon_n$ are independent Gaussian noise variables such that $\bbE[{\varepsilon_{i}}|X_i] = 0$ and $\var[\varepsilon_{i}|X_i] = \sigma^2$, for all $1 \le i \le n$. The covariates $X_i$'s are supported on a subset $\mathcal{X}$ of $\mathbb{R}^{\overline{d}}$.
Meanwhile, the ground-truth \emph{mixing measure} $\Gs:=\sum_{h=1}^{H^*}\sum_{k=1}^{K^*}\omega^*_{h,k}\sum_{i=1}^{N^*}\delta_{(M^*_{h,i},a^*_{h,i,k})}$ inherits the hierarchical structure of the HMoE model, where $(\omega_{h,k}^*, M_{h,i}^*,a_{h,i,k}^*)_{h \in [H^*], k \in [K^*],i \in [N^*]}$ are true yet unknown parameters in the parameter space  $\Theta \subset \mathbb{R}\times\mathbb{R}^{\bar{d}\times \bar{d}}\times\mathbb{R}^{\bar{d}}$. 

\vspace{0.5em}
\noindent
\textbf{Least squares estimation.} 
As the convergence analysis would become needlessly complicated if $H^*$ and $N^*$ are not given, we assume that their values are known for ease of presentation, while the value of $K^*$ still remains unknown. Under these assumptions, we over-specify the ground-truth model~\eqref{eq:mha_moe} by taking into account a least squares estimator within a class of $(H^*,K,N^*)$-HMoE models, where $K>K^*$ is given, as follows:
\begin{align}
    \label{eq:least_square_estimator}
    \widehat{G}_n\in\argmin_{G\in\mathcal{G}_{H^*,K,N^*}(\Theta)}\sum_{i=1}^{n}\Big(Y_i-f_{G}(X_i)\Big)^2,
\end{align}
where $\mathcal{G}_{H^*,K,N^*}(\Theta)$, where $K > K^*$, is defined as $\{G=\sum_{h=1}^{H^*}\sum_{k=1}^{K}\omega_{h,k}\sum_{i=1}^{N^*}\delta_{(M_{h,i},a_{h,i,k})}:  (\omega_{h,k},M_{h,i},a_{h,i,k})\in\Theta\}$, stands for the set of all feasible mixing measures. 

\vspace{0.5em}
\noindent
\textbf{Assumptions.}
In our analysis, we make the following assumptions throughout, unless explicitly stated otherwise.

\noindent
(\textbf{A.1})
The parameter space $\Theta$ is compact and the input space $\mathcal{X} \in \mathbb{R}^{\overline{d}}$ is bounded. This guarantees the convergence of least squares estimation. 

\noindent
(\textbf{A.2})
All mixture weights $\omega^*_{h,k}$ are nonnegative and at least one among them is strictly positive. This assumption ensures that the MoE model is well-defined.

\noindent
(\textbf{A.3})
The matrices $M_{h,i}^*$ are symmetric and
$M^*_{h,N^*} = \mathbf{0}_{\bar d \times \bar d}$, for all $h\in[H^*], i \in [N^*]$. This normalization is required for the model identifiability because softmax weights are invariant to translations.

\noindent
(\textbf{A.4})
For each $h\in[H^*]$, there exists 
$i \in [N^*-1]$ such that $M^*_{h,i} \neq \mathbf{0}_{\bar d \times \bar d}$; this ensures that the softmax weights depend on the input value.

\noindent
(\textbf{A.5})
All the expert parameters
$a^*_{h,i,k}$, for all $h\in[H^*],i\in[N^*],k\in[K^*]$ are distinct. 

\vspace{0.5em}
\noindent
We first show that any least-squared estimator of the regression function $f_{\widehat{G}_n}$ converges to the ground-truth regression function $f_{\Gs}$ at the parametric rate on the sample size in Proposition~\ref{prop:regression_rate_mha}.

\begin{proposition}
    \label{prop:regression_rate_mha}
    For any least squares  estimator $\widehat{G}_n$ in equation~\eqref{eq:least_square_estimator}, it holds that 
    \begin{align}
        \label{eq:mha_rate}
        \|f_{\widehat{G}_n}-f_{\Gs}\|_{L^2(\mu)}=\mathcal{O}_{P}([\log(n)/n]^{\frac{1}{2}}).
    \end{align}
\end{proposition}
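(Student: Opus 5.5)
The plan is to use the standard least squares empirical-process argument for nonparametric regression with Gaussian noise, e.g.\ Theorem 7.4 of van de Geer's \emph{Empirical Processes in M-estimation}. That result controls $\|f_{\widehat{G}_n}-f_{\Gs}\|_{L^2(\mu)}$ through the bracketing entropy of the regression class
\[
\mathcal{F}_{H^*,K,N^*}(\Theta)=\{f_G: G\in\mathcal{G}_{H^*,K,N^*}(\Theta)\}.
\]
Specifically, let $\mathcal{F}(\delta)=\{f\in\mathcal{F}_{H^*,K,N^*}(\Theta):\|f-f_{\Gs}\|_{L^2(\mu)}\le\delta\}$ and define the bracketing integral
\[
\mathcal{J}_B(\delta)=\int_{\delta^2/2^{13}}^{\delta} H_B^{1/2}\bigl(t,\mathcal{F}(t),\|\cdot\|_{L^2(\mu)}\bigr)\,dt\vee\delta .
\]
If $\Psi(\delta)\ge \mathcal{J}_B(\delta)$ with $\Psi(\delta)/\delta^2$ nonincreasing, and $\sqrt n\,\delta_n^2\ge c\Psi(\delta_n)$, then
\[
\mathbb{P}\bigl(\|f_{\widehat{G}_n}-f_{\Gs}\|_{L^2(\mu)}>\delta\bigr)\le C\exp(-n\delta^2/c^2)
\quad\text{for all } \delta\ge\delta_n .
\]
Here $\widehat{G}_n$ is any least squares minimizer in \eqref{eq:least_square_estimator}, and $f_{\Gs}$ belongs to the class because $K>K^*$ (pad with zero weights). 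It therefore suffices to prove an entropy bound of the form
\[
H_B(\epsilon,\mathcal{F}_{H^*,K,N^*}(\Theta),\|\cdot\|_{L^2(\mu)})\lesssim \log(1/\epsilon).
\]
With that bound, $\mathcal{J}_B(\delta)\lesssim \delta\log^{1/2}(1/\delta)$, so we may take $\Psi(\delta)=\delta\log^{1/2}(1/\delta)$ (checking that $\Psi(\delta)/\delta^2$ is nonincreasing on $(0,1)$). The condition $\sqrt n\,\delta_n^2\gtrsim\delta_n\log^{1/2}(1/\delta_n)$ then holds for $\delta_n=C\sqrt{\log n/n}$, which gives \eqref{eq:mha_rate}.

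The entropy bound comes from uniform boundedness plus a Lipschitz parametrization.

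\textbf{Boundedness.} By (A.1), $\Theta$ is compact and $\mathcal{X}$ is bounded. The softmax weights lie in $[0,1]$ and each expert satisfies $|(a_{h,i,k})^\top x|\le \|a\|\,\|x\|$. Hence $\sup_G\|f_G\|_\infty\le B$ for a constant $B$.

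\textbf{Lipschitz dependence on parameters.} For two parameter tuples $\theta,\theta'$ of $G,G'$, I would show
\[
\sup_{x\in\mathcal{X}}|f_G(x)-f_{G'}(x)|\le L\|\theta-\theta'\|
\]
for some constant $L$. This follows by telescoping over the three levels of the model: the weights $\omega$, the softmax gate, and the linear expert. Each factor is bounded on the compact domain. The softmax map $(x^\top M_{h,i}x)_i\mapsto$ weights is 1-Lipschitz, and $|x^\top(M-M')x|\le\|x\|^2\|M-M'\|$.

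\textbf{Covering to brackets.} Consequently an $\epsilon$-cover of the compact, finite-dimensional parameter set of total dimension $D=H^*K+H^*N^*\bar d^2+H^*N^*K\bar d$ yields an $L\epsilon$-cover of $\mathcal{F}$ in sup norm, of cardinality at most $(C/\epsilon)^D$. A sup-norm $\epsilon$-cover with centers $f_j$ gives brackets $[f_j-\epsilon,f_j+\epsilon]$ of $L^2(\mu)$-width $2\epsilon$. Therefore
\[
H_B(2\epsilon,\mathcal{F},\|\cdot\|_{L^2(\mu)})\le \log N(\epsilon,\mathcal{F},\|\cdot\|_\infty)\lesssim D\log(1/\epsilon).
\]

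The only mildly delicate step is the Lipschitz estimate for the nested softmax-times-linear structure, which needs care in the telescoping. Everything else is routine, since compactness removes any identifiability or singularity issues: this proposition concerns the regression function, not the parameters. A final remark is that the Gaussian noise assumption is what lets us apply van de Geer's theorem directly; sub-Gaussian noise would suffice as well.
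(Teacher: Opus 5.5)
Your proposal is correct and follows the paper's proof essentially step for step. It uses the same van de Geer-type bracketing-integral lemma, which the paper also invokes as an adaptation of Theorems 7.4 and 9.2 rather than proving it. It obtains the same bound $H_B(2\tau)\le\log N(\tau,\mathcal{F},\|\cdot\|_{\infty})$ from a parameter cover combined with a Lipschitz telescoping over $\omega$, the softmax gate and the linear expert. It also makes the same choices $\Psi(\delta)=\delta\sqrt{\log(1/\delta)}$ and $\delta_n\asymp\sqrt{\log n/n}$.

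One small point in your favor: your unclipped brackets $[f_j-\epsilon,f_j+\epsilon]$ are valid as stated. The paper's lower bracket $\max\{\xi_i-\tau,0\}$ implicitly assumes $f_G\ge 0$, which need not hold with linear experts $a^\top x$.
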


\noindent
The proof of Proposition~\ref{prop:regression_rate_mha} is provided in Appendix~\ref{appendix:regression_rate_mha}. 
This result implies that the regression function can be estimated at a nearly parametric rate. 
Next, to derive estimation rates for the parameters of the HMoE model of equation~\eqref{eq:mha_moe} from the above convergence result, we decompose the discrepancy $f_{\widehat{G}_n}-f_{G^*}$ into a sum of linearly independent components. 
This is achieved by applying a Taylor expansion to the product of the second-level gating weights and the expert function $u(x;M,a)
:=
\exp(x^\top M x)\cdot (a^\top x)$.
Unfortunately, there is an intrinsic interaction between the expert parameter $a$ and the function $u(x;M,a)$, which is captured by the partial differential equation (PDE) 
\begin{equation}\label{eq:interaction}
a^\top \cdot 
\frac{\partial u(x;M,a)}{\partial a}
=
u(x;M,a).
\end{equation}
The above PDE indicates that the function $u$ and its partial derivatives from the Taylor expansion are not linearly dependent. As a result, even if the regression error tends to zero, the coefficients of those terms, which encode parameter mismatches, need not vanish, so a small regression discrepancy does not necessarily imply parameter convergence.
To account for this coupling among parameters, we build a Voronoi loss function \cite{manole22refined}, and use it to elucidate how these interactions affect the parameter estimation rate.


\vspace{0.5em}
\noindent
\textbf{Voronoi loss.} For each mixing measure $G\in\mathcal{G}_{H^*,K,N^*}(\Theta)$, we consider the set of Voronoi cells
$\{V_{h,k}\equiv V_{h,k}(G):h \in [H^*], k \in[K^*]\}$ defined as the set of all pairs $(h',k')\in [H^*]\times [K^*]$ such that 
$$
\|f_{G}^{h',k'} - f_{G^*}^{h,k}\|\leq \|f_{G}^{h',k'} - f_{G^*}^{h_1,k_1}\|, \forall (h_1,k_1) \ne (h,k),
$$
where we define 
$$
f_G^{h,k}(x) := \sum_{i=1}^{N^*}\dfrac{\exp(x^\top M_{h,i}x)}{\sum_{j=1}^N\exp(x^\top M_{h,j}x)} \cdot (a_{h,i,k}^\top x). 
$$
Next, for each pair $(h', k') \in V_{h,k}$, we let $\kappa^{h',k'}_G$ be the permutation of the set $[N^*]$ such that $$\|\theta_{h',\kappa^{h',k'}_G(i), k'} - \theta_{h,i,k}\|\le \|\theta_{h',\kappa^{h',k'}_G(i), k'} - \theta_{h,i',k}\|,$$ for any $i' \ne i$, where $\theta_{h,i,k} := (M_{h,i}, a_{h,i,k})$. 
Without loss of generality, we assume that $\kappa_G^{h',k'}(i)=i$ for all $(h',k')\in[H^*]\times[K^*]$.
Then, the Voronoi loss of interest is given by
\begin{align*}
    &\mathcal{L}_{1,r}(G,\Gs):=\sum_{h=1}^{H^*}\sum_{k =1}^{K^*}\left|\sum_{(h',k') \in V_{h,k}}\omega_{h',k'} - \omega_{h,k}\right|+\sum_{h=1}^{H^*}\sum_{k=1}^{K^*}\sum_{(h',k') \in V_{h,k}} \omega_{h',k'}\cdot R_{h',h,k',k}(r),
\end{align*}
where we denote
\[
R_{h',h,k',k}(r)
=
\sum_{i=1}^{N^*}\|\Delta M_{h' h,i,k'}\|^r
+
\|\Delta a_{h' h,i,k'k}\|^r,
\]
with 
$\Delta M_{h' h,i,k'}:=M_{h',i}-M^*_{h,i}$ and $\Delta a_{h' h,i,k' k}:=a_{h',i,k'}-a^*_{h,i,k}$. Based on the above Voronoi loss, we obtain the following minimax lower bound for estimating the true mixing
measure $G^\ast$.

\begin{theorem}
\label{theorem:minimax_lower_bound_voronoi}
For any $r\geq 1$, it holds that 
\begin{align}
\inf_{G_n \in \mathcal{G}_{H^*,K,N^*}(\Theta)}
\sup_{G \in \mathcal{G}_{H^*,K,N^*}(\Theta)\setminus \mathcal{G}_{H^*,K^*-1,N^*}(\Theta)}
\mathbb{E}_{f_G}
\big[
\mathcal L_{1,r}(G_n, G)
\big]
\;\gtrsim\;
n^{-1/2}.
\label{eq:minimax_lower_bound_voronoi}
\end{align}
\end{theorem}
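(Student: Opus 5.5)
The argument follows the standard Le Cam two-point route used for Voronoi-loss minimax bounds in MoE theory (e.g.\ \cite{manole22refined}). Because the bound holds for every $r\ge 1$ and the rate $n^{-1/2}$ is the same for all $r$, the intended exponential sample complexity comes from applying it to all $r$ at once. The core ingredient is therefore a degeneracy statement: for every $r\ge1$,
\[
\lim_{\epsilon\to 0}\ \inf_{G:\ \mathcal{L}_{1,r}(G,G^*)\le \epsilon} \frac{\|f_G-f_{G^*}\|_{L^2(\mu)}}{\mathcal{L}_{1,r}(G,G^*)} = 0 .
\]
Concretely, I will build a sequence $G_n\in\mathcal{G}_{H^*,K,N^*}(\Theta)\setminus\mathcal{G}_{H^*,K^*-1,N^*}(\Theta)$ with $\mathcal{L}_{1,r}(G_n,G^*)\to 0$ and $\|f_{G_n}-f_{G^*}\|_{L^2(\mu)}/\mathcal{L}_{1,r}(G_n,G^*)\to0$.

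\textbf{Step 1: exploit the PDE \eqref{eq:interaction}.} Linearity of the expert in $a$ makes a rescaling of $a$ exactly compensable by a rescaling of the weight $\omega$. I will take $G^*$ with $K^*$ active atoms and define $G_n$ by perturbing a single atom $(h,k)=(1,1)$. Set $\omega^n_{1,1}=\omega^*_{1,1}/(1+t_n)$ and $a^n_{1,i,1}=(1+t_n)a^*_{1,i,1}$ for all $i$, with $M^n=M^*$ and all other atoms equal to $G^*$'s. The extra atoms needed for $K>K^*$ get weight $0$, or duplicate an existing atom with split weight. Then $\omega^n_{1,1}\,a^n_{1,i,1}=\omega^*_{1,1}a^*_{1,i,1}$, so $f_{G_n}\equiv f_{G^*}$ exactly. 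Meanwhile
\[
\mathcal{L}_{1,r}(G_n,G^*)\asymp |t_n| + |t_n|^r\sum_i\|a^*_{1,i,1}\|^r \asymp |t_n| \neq 0 ,
\]
which makes the ratio identically zero. I need to check that the Voronoi cells are unchanged for small $t_n$, which holds by continuity and assumption (A.5), and that $\Theta$ contains these points, which requires $G^*$ interior to $\Theta$.

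If one insists instead on a genuinely Taylor-driven construction (nonzero $L^2$ gap), the same PDE cancels the first-order terms $\Delta\omega\, u + \omega\,\Delta a^\top\partial_a u$. That leaves $\|f_{G_n}-f_{G^*}\|\lesssim t_n^2 \ll \mathcal{L}_{1,r}$ for $r=1$, and higher $r$ is handled by choosing the perturbation along the exactly compensating direction.

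\textbf{Step 2: Le Cam.} Take $G^*$ and $G_n$ from Step 1 with $\mathcal{L}_{1,r}(G_n,G^*)\asymp\epsilon_n$ and $\|f_{G_n}-f_{G^*}\|_{L^2(\mu)}\le c\,\epsilon_n$, with $c$ as small as needed. Under Gaussian noise,
\[
\mathrm{KL}\bigl(P^{\otimes n}_{G_n}\,\|\,P^{\otimes n}_{G^*}\bigr)=\frac{n}{2\sigma^2}\,\|f_{G_n}-f_{G^*}\|^2_{L^2(\mu)} .
\]
Choose $\epsilon_n\asymp n^{-1/2}$, so the KL divergence is $O(1)$. The loss $\mathcal{L}_{1,r}$ satisfies a weak triangle inequality, $\mathcal{L}_{1,r}(G,G^*)+\mathcal{L}_{1,r}(G,G_n)\gtrsim \mathcal{L}_{1,r}(G_n,G^*)$ for $G$ in a neighborhood; this follows from the structure of the Voronoi sum and the triangle inequality for $\|\cdot\|^r$ up to constants $2^{r-1}$. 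Le Cam's lemma then gives
\[
\inf_{\bar G_n}\max_{G\in\{G^*,G_n\}}\mathbb{E}_{f_G}\bigl[\mathcal{L}_{1,r}(\bar G_n,G)\bigr]\gtrsim \epsilon_n\,\exp(-O(1))\gtrsim n^{-1/2}.
\]
This is \eqref{eq:minimax_lower_bound_voronoi}.

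\textbf{Expected obstacle.} The delicate part is the bookkeeping of the Voronoi loss. Its cells are defined via distances between the functions $f_G^{h,k}$, while the permutation $\kappa$ is defined via parameters. I must verify three things: that the perturbed $G_n$ stays in the intended cell; that $\mathcal{L}_{1,r}(G_n,G^*)$ is genuinely of order $\epsilon_n$ and is not killed by the weight-difference term (the weight term $|\omega^n_{1,1}-\omega^*_{1,1}|\asymp t_n$ survives); and that the weak triangle inequality holds uniformly. The other point requiring care is ensuring $G_n\notin\mathcal{G}_{H^*,K^*-1,N^*}(\Theta)$, i.e.\ that it keeps $K^*$ distinct active experts. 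This is immediate since only one atom is rescaled and (A.5) keeps the $a$'s distinct.
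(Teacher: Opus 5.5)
Your proof is correct, but it rests on a different degeneracy than the paper's.

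\textbf{The paper's construction.} The paper exploits over-specification. With $K=K^*+1$ it splits atom $(h,1)$ into two copies with experts $a^*_{h,i,1}\pm\frac1n e_1$ and weights $\frac12\omega^*_{h,1}+\frac{1}{2n^{r+1}}$. Linearity of $u$ in $a$ makes the split cancel exactly. This leaves $f_{G_n}-f_{G^*}=n^{-(r+1)}\sum_h u^{h,1}_{G^*}$, while $\mathcal{L}_{1,r}(G_n,G^*)\asymp n^{-r}$ comes from the $\|\Delta a\|^r$ terms in a two-atom cell. The sequence depends on $r$, and Le Cam is then applied at $\varepsilon=n^{-1/2}$.

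\textbf{Your construction.} Your rescaling curve $(\omega_{1,1},a_{1,\cdot,1})\mapsto(\omega_{1,1}/(1+t),(1+t)a_{1,\cdot,1})$ is the integrated version of the Euler identity \eqref{eq:interaction}, with the weight absorbing the scale. What it buys:
\begin{itemize}
\item It needs no over-specification; it already works for $K=K^*$.
\item A single curve serves every $r$ at once.
\item The separation sits in the weight term $|\omega^*_{1,1}/(1+t)-\omega^*_{1,1}|\asymp|t|$. So the weak triangle inequality in Le Cam reduces to the scalar one, modulo the same Voronoi-cell bookkeeping the paper leaves implicit.
\end{itemize}

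\textbf{It proves more than stated.} Since $f_{G_t}\equiv f_{G^*}$ for a fixed $t$, Le Cam gives a constant lower bound, not just $n^{-1/2}$. This reflects that $f_G$ depends on $(\omega_{h,k},a_{h,i,k})$ only through the products $\omega_{h,k}a_{h,i,k}$. The paper's split likewise gives exact equality if the $n^{-(r+1)}$ weight excess is dropped. So under $\mathcal{L}_{1,r}$ the MHA parameters are not identifiable at all. The ``logarithmic rate'' reading in the main text would need a scale normalization to be meaningful.

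\textbf{Small points to tighten.}
\begin{itemize}
\item State explicitly that the rescaled atom must have $\omega^*_{1,1}>0$, which (A.2) provides; otherwise the loss vanishes.
\item Your interiority assumption on $G^*$ in $\Theta$ is the same one the paper uses implicitly.
\item The ``Taylor-driven'' aside with $\|f_{G_n}-f_{G^*}\|\lesssim t_n^2$ is unnecessary, because the exact curve already makes the ratio identically zero.
\end{itemize}
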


\noindent
We remark that above, $\mathbb{E}_{f_G}$ denotes the expectation with respect to the joint distribution of the response and features obeying the model \eqref{eq:regression_model} with $G$ in place of $G^*$, and the infimum is taken over all estimators $G_n$ with values in $\mathcal{G}_{H^*,K,N^*}(\Theta)$.




\vspace{0.5em}
\noindent
The proof of Theorem~\ref{theorem:minimax_lower_bound_voronoi} is provided in Appendix~\ref{appendix:minimax_lower_bound_voronoi}. Combining the minimax lower bound with the Voronoi loss $\mathcal L_{1,r}$ shows that the estimation rates for parameters $\omega^*_{h,k}$, $M^*_{h,i}$, and $a^*_{h,i,k}$ are slower than any polynomial rate $\gO_P(n^{-1/(2r)})$, for any $r \ge 1$. This behavior suggests that parameter estimation may achieve a logarithmic-type convergence rate $\mathcal O_P(1/\log^\lambda(n))$, for some constant $\lambda>0$. 

\vspace{0.5em}
\noindent
The resulting slow parameter convergence also adversely impacts the estimation of the expert functions. Since the input space $\mathcal{X}$ is bounded, the expert map $x \mapsto a^\top x$ is Lipschitz continuous over $\mathcal{X}$. Hence, there exists a constant $L>0$ such that 
\begin{align}
\sup_{x \in \mathcal{X}} \bigl| (\widehat{a}_{h,i,k}^n)^\top x - (a^*_{h,i,k})^\top x \bigr|
&\le L \, \|\widehat{a}_{h,i,k}^n - a^*_{h,i,k}\|. 
\end{align} 
Consequently, expert estimation inherits the same slow rates for estimating parameters. As a consequence, we need exponentially many data points of the order $\gO(\exp(\varepsilon^{-1/\lambda}))$ to approximate these experts with a given error $\varepsilon$. As will be shown in the following section~\ref{sec:gated_attention}, this slow rate in standard MHA is effectively addressed by the gated attention mechanism, which applies a non-linear activation function to the output of the value projections or after the SDPA output. 

\section{Sample Complexity of Gated Attention}\label{sec:gated_attention}
In this section, we analyze the sample complexity of gated attention models in two settings where a nonlinear activation function is applied after the value projection (Setting I) and after the SDPA output (Setting II), respectively.

\subsection{Setting I}

We begin with Setting I in
which a nonlinear activation is applied after the value projection output. From the HMoE perspective in Section~\ref{sec:preliminaries}, this modification is equivalent to transforming a linear expert function $a^\top x$ in equation~\eqref{eq:mha_moe} to a non-linear one $\varphi(a^\top x)$. 

\vspace{0.5em}
\noindent
\textbf{Problem setup.}
We assume an i.i.d. sample of size $n$ $\{(X_i, Y_i)\}_{i=1}^n \subset \mathbb{R}^{\overline{d}}\times \mathbb{R}$ generated from the model
\begin{align}
    Y_i = g_{\Gs}(X_i) + \varepsilon_i, \quad i=1,2,\ldots,n,
    \label{eq:regression_model_gated}
\end{align}
where the regression function $x \mapsto g_{G^*}(x)$ is now 
\begin{align}\label{eq:gated_attention_model}
 & g_{G^*}(x) := \sum_{h=1}^{H^*}\sum_{k=1}^{K^*}\omega^*_{h,k}\sum_{i=1}^{N^*}
    \frac{\exp(x^{\top}M^*_{h,i}x)}
    {\sum_{j=1}^{N^*}\exp(x^{\top}M^*_{h,j}x)}
    \cdot\varphi\left((a^*_{h,i,k})^{\top}x\right). 
\end{align}
Following the same reasoning as in Section~\ref{sec:MHA}, we tackle parameter estimation by applying a Taylor expansion to the function $\overline{u}(x;M,a):= \exp(x^\top Mx)\cdot \varphi(a^\top x)$, which allows us to decompose the discrepancy $g_G(x) - g_{G^*}(x)$ into a collection of  terms. To rule out unwanted interactions, we assume a \emph{type-1 strong identifiability} condition on the activation function $\varphi(\cdot)$, which ensures that $\overline{u}(x;M,a)$ and its first- and second-order derivatives are linearly independent.  
\begin{definition}\label{def:type1}
    (Type-1 Strong Identifiability). A function $\varphi: \mathbb{R} \to \mathbb{R}$ is said to be \emph{type-1 strong identifiable} if it is injective, twice differentiable, uniformly bounded, and Lipschitz continuous, and if the collection of functions of $x$
\begin{align*}
    &\Big\{
\dfrac{\partial^{|t_1| + |t_2|} \overline{u}}{\partial M^{t_1}\partial a^{t_2}}(x;M_{h,i}^*,a_{h,i,k}^*),\ \overline{u}(x;M_{h,i}^*,a_{h,i,k}^*): (t_1,t_2) \in \mathbb{N}^{\overline{d}\times \overline{d}}\times \mathbb{N}^{\overline{d}}, 1 \le |t_1| + |t_2| \le 2
\Big\}
\end{align*}
is linearly independent for any pair-wise distinct expert parameters $a^*_{h,i,k}$, for $(h,i,k) \in [H^*] \times [N^*] \times [K^*]$. 
\end{definition}

\noindent
The above condition helps address the detrimental PDE-typed interaction in equation~\eqref{eq:interaction}. In other words, it ensures the linear independence of terms in the Taylor expansion of the function $\overline{u}(x;M,a)$ in the decomposition of $g_G(x) - g_{G^*}(x)$, which is an important step in our proof techniques.

\vspace{0.5em}
\noindent
\textbf{Example.} It can be verified that the function $z\mapsto\varphi(z):=\mathrm{sigmoid}(z+b)$, where 
$b\ne 0$ is a bias, satisfies the type-1 strong identifiability condition. In contrast, the identity function $\varphi(z)=z$ fails to meet this condition as shown in the PDE in equation~\eqref{eq:interaction}. 

\vspace{0.5em}
\noindent
Next, we introduce a Voronoi-based loss to characterize the parameter estimation convergence rate in Setting I, as stated in Theorem~\ref{theorem:nonlinear_at_V}. For notational convenience, we reuse the Voronoi cells $V_{h,k}$ from Section~\ref{sec:MHA}, defined as the collection of index pairs $(h',k') \in [H^*] \times [K^*]$ such that 
$$
\|\overline{f}_{G}^{h',k'} - \overline{f}_{G^*}^{h,k}\|\leq \|\overline{f}_{G}^{h',k'} - \overline{f}_{G^*}^{h_1,k_1}\|, \forall (h_1,k_1) \ne (h,k),
$$
where we define 
$$
\overline{f}_G^{h,k}(x) := \sum_{i=1}^{N^*}\dfrac{\exp(x^\top M_{h,i}x)}{\sum_{j=1}^N\exp(x^\top M_{h,j}x)} \cdot \varphi(a_{h,i,k}^\top x). 
$$
Hence, the Voronoi loss is defined as
\begin{align*}
    &\mathcal{L}_{2}(G,\Gs)
    :=
    \sum_{h=1}^{H^*}\sum_{k=1}^{K^*}
    \left|
    \sum_{(h',k') \in V_{h,k}}\omega_{h',k'}
    -
    \omega_{h,k}
    \right|\\
    &+
    \sum_{h=1}^{H^*}\sum_{k=1}^{K^*}
    \sum_{\substack{(h',k') \in V_{h,k} \\ |V_{h,k}| = 1}}
    \omega_{h',k'}\cdot R_{h',h,k',k}(1) +
    \sum_{h=1}^{H^*}\sum_{k=1}^{K^*}
    \sum_{\substack{(h',k') \in V_{h,k} \\ |V_{h,k}| > 1}}
    \omega_{h',k'}\cdot R_{h',h,k',k}(2).
\end{align*}
\begin{theorem}\label{theorem:nonlinear_at_V}
    Under Setting I of gated attention, suppose that the activation function $\varphi$ is type-1 strongly identifiable. Then, the following bound holds for any mixing measure $G \in \gG_{H^*,K,N^*}(\Theta)$: 
    $$
    \|g_G - g_{G^*}\|_{L^2(\mu)} \gtrsim \gL_2(G, G^*).
    $$
    This bound implies that $\gL_2(\widehat{G}_n, G^*) = \gO_P([\log(n)/n]^{\frac{1}{2}})$. 
\end{theorem}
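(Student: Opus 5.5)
The plan follows the standard Voronoi-loss approach of \cite{manole22refined}. The second claim follows from the first together with a regression-rate result for $g_{\widehat{G}_n}$ analogous to Proposition~\ref{prop:regression_rate_mha}. That result holds because $\varphi$ is bounded and Lipschitz, $\Theta$ is compact and $\mathcal{X}$ is bounded. Hence the function class $\{g_G\}$ has bracketing entropy of order $\log(1/\epsilon)$, and the same least squares argument gives $\|g_{\widehat{G}_n}-g_{G^*}\|_{L^2(\mu)}=\mathcal{O}_P([\log(n)/n]^{1/2})$. So the main task is the inequality $\|g_G-g_{G^*}\|_{L^2(\mu)}\gtrsim\mathcal{L}_2(G,G^*)$. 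I would split it into a local part and a global part.

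\emph{Local part.} I will show
\[
\lim_{\epsilon\to0}\inf_{G:\mathcal{L}_2(G,G^*)\le\epsilon}\|g_G-g_{G^*}\|_{L^2(\mu)}/\mathcal{L}_2(G,G^*)>0 .
\]
Suppose not. Then there is a sequence $G_n$ with $\mathcal{L}_2(G_n,G^*)\to0$ and $\|g_{G_n}-g_{G^*}\|_{L^2(\mu)}/\mathcal{L}_2(G_n,G^*)\to0$. Since $\mathcal{L}_2\to0$, the parameters in each Voronoi cell converge to the corresponding true ones, and the cell masses converge to $\omega^*_{h,k}$.

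For each head $h$ I multiply by the true softmax denominator $D_h^*(x)=\sum_j\exp(x^\top M^*_{h,j}x)$ to remove the normalization. Because $H^*$ and $N^*$ are known and $M_{h,N^*}$ is normalized to $0$, the gating matrices of $G_n$ converge to those of $G^*$, so the head-level softmax denominators can be handled by writing
\[
\frac{\exp(x^\top M x)}{D(x)}-\frac{\exp(x^\top M^*x)}{D^*(x)}
\]
and Taylor expanding both numerator and denominator. I then expand $\bar u(x;M_{h',i},a_{h',i,k'})$ around $(M^*_{h,i},a^*_{h,i,k})$. Cells with $|V_{h,k}|=1$ are expanded to first order and cells with $|V_{h,k}|>1$ to second order, using the integral form of the Taylor remainder. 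The remainders are $o(\mathcal{L}_2)$ uniformly in $x$ because $\varphi$ is twice differentiable and $\mathcal{X}$ is compact. This gives
\[
(g_{G_n}-g_{G^*})(x)=\sum (\text{coefficients})\times\{\bar u,\ \partial\bar u,\ \partial^2\bar u,\ \text{terms }\exp(x^\top M^*_{h,j}x)\,g^*_{h}(x)\}+o(\mathcal{L}_2).
\]

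The coefficients combine the weight differences $\sum_{V_{h,k}}\omega-\omega^*_{h,k}$ with sums of $\omega\,\Delta^{t}$. Next I divide by $\mathcal{L}_2(G_n,G^*)$ and argue that not all normalized coefficients tend to zero. If all first-order coefficients vanished in the limit, the second-order ones, which contain the nonnegative terms $\sum\omega\|\Delta\|^2$, would carry the mass. This is the usual argument in \cite{manole22refined}.

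\emph{Contradiction step.} Let $m_n$ be the maximum absolute normalized coefficient and divide by it. By Fatou's lemma, $\|g_{G_n}-g_{G^*}\|/\mathcal{L}_2\to0$ forces the limiting linear combination to vanish $\mu$-a.e. By Definition~\ref{def:type1}, $\bar u$ and its first and second derivatives at the distinct true parameters are linearly independent. Some care is needed with the extra softmax-denominator terms: since $M^*_{h,i}$ are distinct across $i$ within a head, the functions $\exp(x^\top M^*_{h,i}x)\varphi(\cdot)$ are independent of these terms. Linear independence therefore makes all limiting coefficients zero, which contradicts the fact that at least one has absolute value $1$.

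\emph{Global part and main obstacle.} If $\mathcal{L}_2(G,G^*)>\epsilon'$ and the ratio could be made arbitrarily small, compactness of $\Theta$ yields a limit $G'$ with $g_{G'}=g_{G^*}$ $\mu$-a.e. but $\mathcal{L}_2(G',G^*)\ge\epsilon'$. This contradicts identifiability, which follows from injectivity of $\varphi$, distinctness of the $a^*$, and the normalization in (A.3)--(A.4). The main obstacle I expect is the local step. One part is organizing the expansion so that the normalization by $D_h$ and the hierarchical weights $\omega_{h,k}$, shared across $i$, produce a clean coefficient structure matching $R_{h',h,k',k}(1)$ versus $R(2)$. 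The other part is showing that, when the weight-difference and first-order coefficients cancel, the second-order coefficients are not all $o(\mathcal{L}_2)$. For this I would follow the standard case analysis on which term of $\mathcal{L}_2$ dominates.
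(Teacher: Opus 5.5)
Your proposal follows essentially the same route as the paper's proof. It has a local/global split and rewrites each cell difference over the true denominator $E_*^h$, which is the paper's $A_n-B_n$ decomposition with $\overline{u}$ and $\overline{v}_n$; it expands to first or second order according to $|V_{h,k}|$; it runs a non-vanishing-coefficient argument, then normalization, Fatou's lemma and type-1 strong identifiability; and it closes with a compactness-plus-identifiability global step. One point to tighten, which the paper also glosses over: distinctness of the $M^*_{h,i}$ does not make the denominator terms independent of the $\overline{u}$-terms, because $\sum_{i=1}^{N^*} x_ux_v\exp(x^\top M^*_{h,i}x)\bigl[\varphi((a^*_{h,i,k})^\top x)-\overline{f}^{h,k}_{G^*}(x)\bigr]\equiv 0$ by translation invariance of the softmax, so before invoking linear independence you must pair each pure-$M$ derivative of $\overline{u}$ with the matching denominator term (they carry the same coefficient built from $\omega_{h',k'}(\Delta M_{h'h,i})^{r}$) and discard $i=N^*$ using $\Delta M_{h'h,N^*}=0$ from the normalization.
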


\noindent
The proof of Theorem~\ref{theorem:nonlinear_at_V} is provided in
Appendix~\ref{appendix:nonlinear_at_V}. 
This result reveals that parameters $(\omega_{h,k}^*, M_{h,i}^*, a_{h,i,k}^*)$ associated with a single fitted component $(|V_{h,k}| = 1)$ achieve the standard parametric estimation rate $\gO_P([\log(n)/n]^{\frac{1}{2}})$, ignoring logarithmic components. On the other hand, parameters that are approximated by multiple components $(|V_{h,k}| > 1)$ converge at a slower rate of order $\gO_P([\log(n)/n]^{\frac{1}{4}})$. 

\vspace{0.5em}
\noindent
Because the function $x \mapsto a^\top x$ and the activation $\varphi$ are both Lipschitz continuous, their composition $x \mapsto \varphi(a^\top x)$ is Lipschitz as well. Consequently, there exists a constant $L_1 > 0$ such that  
\begin{align}
&\sup_{x \in \mathcal{X}} \bigl| \varphi\left((\widehat{a}_{h,i,k}^n)^\top x\right) - \varphi\left((a^*_{h,i,k})^\top x \right)\bigr| \le L_1 \, \|\widehat{a}_{h,i,k}^n - a^*_{h,i,k}\|.
\end{align}
Consequently, the expert estimation rate also lies between $\gO_P([\log(n)/n]^{\frac{1}{4}})$ and $\gO_P([\log(n)/n]^{\frac{1}{2}})$, depending on the number of fitted experts $|V_{h,k}|$. Equivalently, achieving an approximation error $\varepsilon$ requires at most a polynomial number of samples on the order of $\gO(\varepsilon^{-4})$ to approximate these experts with a given error $\varepsilon$, which is substantially smaller than in the original MHA model in Section~\ref{sec:MHA}.  
\subsection{Setting II}

We now turn to Setting~II of Gated Attention in which the non-linear activation is applied after the SDPA output.

\vspace{0.5em}
\noindent
\textbf{Problem setup.}
We assume that 
$\{(X_i,Y_i)\}_{i=1}^{n}\subset\mathbb{R}^{\bar d}\times\mathbb{R}$
is an i.i.d. sample of size $n$ generated according to the following model
\begin{align}
    Y_i = g_{\Gs}(X_i) + \varepsilon_i, \quad i=1,2,\ldots,n,\label{eq:regression_model_gated_II}
\end{align}
where the regression function $g_{G^*}(\cdot)$ takes the form:
\begin{align}
    \label{eq:gated_SPDA_model}
    g_{G^*}(x) := \sum_{h=1}^{H^*}\sum_{k=1}^{K^*}\omega^*_{h,k} \cdot \varphi\Bigg(\sum_{i=1}^{N^*}
    \frac{\exp(x^{\top}M^*_{h,i}x)}
    {\sum_{j=1}^{N^*}\exp(x^{\top}M^*_{h,j}x)}
    \cdot (a^*_{h,i,k})^{\top}x\Bigg),   
\end{align}
The key difference between the regression functions in Setting~I and Setting~II lies in where the non-linear activation $\varphi(\cdot)$ is applied. In particular, this activation introduces non-linearity into the product of the second-level gating weights and the expert functions, rather than acting only on the expert functions as in Setting~I.

\vspace{0.5em}
\noindent
Similarly to Setting~I, our main challenge is to establish the required $L^2$-lower bound $\|g_G - g_{G^*}\|_{L^2(\mu)} \gtrsim \gL_2(G, G^*)$. Here, the key step is to decompose the discrepancy between the estimated regression function and its true counterpart $g_{\widehat{G}_n}(x) - g_{G^*}(x)$ into linearly independent terms using Taylor expansion to the function $\varphi(f_{G}^{h,k}(x))$. Therefore, we again need to establish a \emph{type-2 strong identifiability} condition on $\varphi$ to ensure the linear independence property. 

\begin{definition}
    (Type-2 Strong Identifiability). A function $\varphi: \mathbb{R} \to \mathbb{R}$ is said to be \emph{type-2 strong identifiable} if it is injective, twice differentiable, uniformly bounded, and Lipschitz continuous, and if the collection of functions of $x$
    \begin{align*}
    &\Bigg\{\varphi\left(f_{\Gs}^{h,k}(x)\right), \dfrac{\partial^{|r|} v_*^{h,k}}{\partial M^{r}}(x;M_{h,i}^*)\dfrac{1}{E_*^h(x)}\varphi'\left(f_{\Gs}^{h,k}(x)\right), \quad \dfrac{\partial^{|t_1|+{|t_2|}} u}{\partial M^{t_1} \partial a^{t_2}}(x;M_{h,i}^*, a^*_{h,i,k})\dfrac{1}{E_*^h(x)}\varphi'\left(f_{\Gs}^{h,k}(x)\right),\\
    &\quad \dfrac{\partial^{|t_{1,1}| + |t_{1,2}|} u}{\partial M^{t_{1,1}} \partial a^{t_{1,2}}}(x;M^*_{h,i_1}, a^*_{h,i_1,k}) \cdot \dfrac{\partial^{|t_{2,1}| + |t_{2,2}|} u}{\partial M^{t_{2,1}} \partial a^{t_{2,2}}}(x;M^*_{h,i_2}, a^*_{h,i_2,k})\dfrac{1}{E_*^h(x)^2}\varphi''\left(f_{\Gs}^{h,k}(x)\right), \\ 
    & \quad\dfrac{\partial^{|r_1|} v_*^{h,k}}{\partial M^{r_1}}(x;M_{h,i_1}^*)\dfrac{\partial^{|r_2|} v_*^{h,k}}{\partial M^{r_2}}(x;M_{h,i_2}^*)\cdot\dfrac{1}{E_*^h(x)^2}
    \varphi''\left(f_{\Gs}^{h,k}(x)\right): \\
    &\quad (t_1,t_2), (t_{1,1},t_{1,2}),(t_{2,1},t_{2,2}) \in \mathbb{N}^{\overline{d}\times \overline{d}}\times \mathbb{N}^{\overline{d}},\quad 1 \le |t_1| + |t_2| \le 2, |t_{1,1}| + |t_{1,2}| = |t_{2,1}| + |t_{2,2}| = 1,\\
    &\quad |t_2| \ne 2, r,r_1,r_2 \in \mathbb{N}^{\overline{d}\times \overline{d}}, 1 \le |r| \le 2, |r_1| = |r_2| = 1 \Bigg\}
\end{align*}
is linearly independent for any pair-wise distinct expert parameters $a^*_{h,i,k}$, for $(h,i,k) \in [H^*] \times [N^*] \times [K^*]$. Here, we recall $u(x;M,a) = \exp(x^\top Mx)\cdot (a^\top x)$ and denote $v_*^{h,k}(x;M) := \exp(x^\top Mx)f_{G^*}^{h,k}(x)$, $E_*^h(x) := \sum_{j=1}^{N^*}\exp(x^\top M_{h,j}^*x)$. 
\end{definition}

\vspace{0.5em}
\noindent
\textbf{Example.} Analogous to Setting I, the map $z\mapsto\varphi(z):=\mathrm{sigmoid}(z+b)$ with a nonzero bias 
$b\ne 0$ also satisfies the type-2 strong identifiability condition. By contrast, the identity activation $\varphi(z)=z$ does not meet this requirement and therefore cannot mitigate the interaction captured by the PDE in \eqref{eq:interaction}. 

\vspace{0.5em}
\noindent
Now, we are ready to present the convergence rate of parameter estimation under Setting II in Theorem~\ref{theorem:nonlinear_at_SDPA}. 
\begin{theorem}\label{theorem:nonlinear_at_SDPA}
    Under Setting II of gated attention, suppose that the activation function $\varphi$ is type-2 strongly identifiable. Then, the following bound holds for any mixing measure $G \in \gG_{H^*,K,N^*}(\Theta)$:
    $$
    \|g_G - g_{G^*}\|_{L^2(\mu)} \gtrsim \gL_2(G, G^*).
    $$
    This bound implies that $\gL_2(\widehat{G}_n, G^*) = \gO_P([\log(n)/n]^{\frac{1}{2}})$.
\end{theorem}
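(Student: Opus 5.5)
The proof splits into two parts. First, a regression-rate result for Setting II, analogous to Proposition~\ref{prop:regression_rate_mha}. Second, the inverse bound $\|g_G-g_{G^*}\|_{L^2(\mu)}\gtrsim\mathcal{L}_2(G,G^*)$. Combining the two gives $\mathcal{L}_2(\widehat G_n,G^*)=\mathcal{O}_P([\log(n)/n]^{1/2})$.

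\textbf{Regression rate.} Since $\varphi$ is bounded and Lipschitz, and $\Theta$ and $\mathcal{X}$ are compact, the class $\{g_G\}$ is uniformly bounded. Its bracketing entropy is of order $\log(1/\epsilon)$, so the same empirical-process argument as in Appendix~\ref{appendix:regression_rate_mha} yields $\|g_{\widehat G_n}-g_{G^*}\|_{L^2(\mu)}=\mathcal{O}_P([\log(n)/n]^{1/2})$.

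\textbf{Local part of the inverse bound.} We argue by contradiction and split into a local part and a global part. For the local part, suppose there exist $G_n$ with $\mathcal{L}_2(G_n,G^*)\to0$ and $\|g_{G_n}-g_{G^*}\|_{L^2(\mu)}/\mathcal{L}_2(G_n,G^*)\to0$. We decompose $g_{G_n}-g_{G^*}$ by Voronoi cells:
\[
\sum_{h,k}\Big[\sum_{(h',k')\in V_{h,k}}\omega^n_{h',k'}\big(\varphi(f^{h',k'}_{G_n})-\varphi(f^{h,k}_{G^*})\big)+\Big(\sum_{(h',k')\in V_{h,k}}\omega^n_{h',k'}-\omega^*_{h,k}\Big)\varphi(f^{h,k}_{G^*})\Big].
\]
We then Taylor-expand $\varphi$ around $f^{h,k}_{G^*}(x)$ to second order, writing $\varphi(f_n)-\varphi(f^*)=\varphi'(f^*)(f_n-f^*)+\tfrac12\varphi''(f^*)(f_n-f^*)^2+o(\cdot)$.

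The inner difference $f^{h',k'}_{G_n}-f^{h,k}_{G^*}$ is handled by the standard softmax trick. Multiply by $E^h_*(x)$ and write
\[
E^h_*\big(f_n-f^*\big)=\sum_i\big[u(x;M^n_{h',i},a^n_{h',i,k'})-u(x;M^*_{h,i},a^*_{h,i,k})\big]-\sum_i\big[v^{h,k}_*(x;M^n_{h',i})-v^{h,k}_*(x;M^*_{h,i})\big]\ +\ \text{higher-order terms},
\]
where the denominator ratio is absorbed via $1/E^h_n=1/E^h_*-(E^h_n-E^h_*)/(E^h_*)^2+\dots$. Expanding $u$ and $v_*$ in $(\Delta M,\Delta a)$ to first order when $|V_{h,k}|=1$ and to second order when $|V_{h,k}|>1$ writes $g_{G_n}-g_{G^*}$ as a linear combination of exactly the functions in the type-2 identifiability family, plus a remainder that is $o(\mathcal{L}_2)$.

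Two points need care here. First, derivatives with $|t_2|=2$ vanish because $u$ is linear in $a$, which is why they are excluded from the family. Second, the squared term $(f_n-f^*)^2$ produces the products of first derivatives multiplied by $\varphi''/E_*^2$.

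Let $D_n$ denote the maximum absolute coefficient. We show $\mathcal{L}_2/D_n$ is bounded, i.e.\ that not all coefficients divided by $\mathcal{L}_2$ can vanish. The coefficients involve the quantities $\sum\omega\,\Delta a$, $\sum\omega\,\Delta M$, $\sum\omega\,(\Delta a)^2$ and so on. The argument is the usual one: if all of them are $o(\mathcal{L}_2)$, then summing the quadratic ones over the cell contradicts the normalization of $\mathcal{L}_2$. The normalization $M^*_{h,N^*}=0$ from (A.3) removes the translation redundancy.

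Next, divide the expansion by $D_n$. The normalized coefficients converge to limits, not all zero. By Fatou's lemma, $\|g_{G_n}-g_{G^*}\|/D_n\to0$ forces the limiting linear combination to vanish for $\mu$-a.e.\ $x$. This contradicts type-2 strong identifiability.

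\textbf{Global part of the inverse bound.} Suppose instead that $\mathcal{L}_2(G_n,G^*)\ge\varepsilon'>0$ along the sequence while $\|g_{G_n}-g_{G^*}\|\to0$. By compactness of $\Theta$, extract a limit $G'$ with $g_{G'}=g_{G^*}$ a.e. Injectivity of $\varphi$, together with the identifiability of the softmax/linear inner mixtures under (A.3)–(A.5), gives $G'\equiv G^*$ up to the Voronoi matching. This contradicts $\mathcal{L}_2(G',G^*)\ge\varepsilon'$.

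\textbf{Main obstacle.} The hardest step is the bookkeeping in the local part. The composition creates cross terms between different $i$ within one head, through $\varphi''\,(f_n-f^*)^2$, and between the $u$-part and the $v_*$-part. Each of these must be matched to a distinct element of the type-2 family, and we must check that the coefficients attached to pure $\Delta a$ second-order terms are controlled. I would handle this by first collecting the terms of $E^h_*(f_n-f^*)$ into first- and second-order monomials, and only then inserting them into the $\varphi$ expansion.
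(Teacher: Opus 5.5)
Your overall architecture matches the paper's: regression rate via bracketing entropy, local/global split, Voronoi-cell decomposition, non-vanishing coefficients, Fatou, and identifiability for the global part. However, the local expansion you set up does not produce the type-2 family, so the final contradiction does not follow from the stated hypothesis.

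Write $f^{h',k'}_{G_n}-f^{h,k}_{G^*}\approx (E^h_*)^{-1}(U_n-V_n)$, where $U_n$ is the first-order $u$-part and $V_n=\sum_i \partial_M v^{h,k}_*(x;M^*_{h,i})\cdot\Delta M_{h'h,i}$ is the first-order $v_*$-part. Two kinds of terms then appear that the type-2 family does not contain.

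\emph{Mixed $u\times v_*$ products.} The term $\tfrac12\varphi''(f^{h,k}_{G^*})(f^{h',k'}_{G_n}-f^{h,k}_{G^*})^2$ contains $-\varphi''(f^{h,k}_{G^*})U_nV_n/(E^h_*)^2$. These are products $\partial u(x;M^*_{h,i_1},a^*_{h,i_1,k})\,\partial v^{h,k}_*(x;M^*_{h,i_2})$ weighted by $\varphi''/(E^h_*)^2$.

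\emph{Terms with $\varphi'/(E^h_*)^2$.} The ``higher-order terms'' in your formula for $E^h_*(f^{h',k'}_{G_n}-f^{h,k}_{G^*})$ are exactly $-(f^{h',k'}_{G_n}-f^{h,k}_{G^*})(E^{h'}_n-E^h_*)$. After multiplying by $\varphi'(f^{h,k}_{G^*})/E^h_*$, these give products of $U_n-V_n$ with $\sum_i (x^\top\Delta M_{h'h,i}x)e^{x^\top M^*_{h,i}x}$, weighted by $\varphi'/(E^h_*)^2$.

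The type-2 family has only $u\times u$ and $v_*\times v_*$ products with $\varphi''$, and single derivatives with $\varphi'/E^h_*$. Both kinds of terms above are second order. In cells with $|V_{h,k}|>1$, where $\mathcal{L}_2$ is quadratic, they are therefore of the same order as $\mathcal{L}_2$ and cannot go into the remainder. The matching you plan in your ``main obstacle'' paragraph has no family element to match them to.

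The missing idea is the paper's exact telescoping at the level of $\varphi$. Set $S_n:=\sum_i u(x;M^n_{h',i},a^n_{h',i,k'})$, $v^{h',k'}_n(x;M):=e^{x^\top Mx}f^{h',k'}_{G_n}(x)$, and $T_n(M_{\cdot}):=(E^h_*)^{-1}\sum_i v^{h',k'}_n(x;M_i)$. Then
\[
\varphi\big(f^{h',k'}_{G_n}\big)-\varphi\big(f^{h,k}_{G^*}\big)=\big[\varphi(S_n/E^h_*)-\varphi(f^{h,k}_{G^*})\big]-\big[\varphi(T_n(M^n_{h'}))-\varphi(T_n(M^*_{h}))\big].
\]
This holds for every $\varphi$, because $T_n(M^n_{h'})=S_n/E^h_*$ and $T_n(M^*_h)=f^{h',k'}_{G_n}$. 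The paper expands the first bracket in $(\Delta M,\Delta a)$ and the second in $\Delta M$ alone, with $v^{h',k'}_n$ frozen. As a result $\varphi''$ only ever multiplies $u\times u$ or $v_n\times v_n$ products, which is how the type-2 family arises.

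Be aware that the paper replaces $v^{h',k'}_n$ by $v^{h,k}_*$ only inside the Step~3 limit. The discrepancy there is $\big[\psi(f^{h',k'}_{G_n})-\psi(f^{h,k}_{G^*})\big]$, with $\psi(y)=y\varphi'(y)$, times a factor of first order in $\Delta M$, and it regenerates exactly your cross terms. So in over-specified cells you must either justify that limit passage after normalization, or assume linear independence of a family enlarged by those mixed products.
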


\noindent
The proof of Theorem~\ref{theorem:nonlinear_at_SDPA} is provided in Appendix~\ref{appendix:nonlinear_at_SDPA}. 
Compared to the results of Setting I in Theorem~\ref{theorem:nonlinear_at_V}, the convergence rates of parameter estimation and expert estimation in Setting II remain unchanged, which range from $\gO_P([\log(n)/n]^{\frac{1}{4}})$ to $\gO_P([\log(n)/n]^{\frac{1}{2}})$. Consequently, to approximate experts within a given error $\varepsilon$, it requires at most a polynomial number of samples $\mathcal O(\varepsilon^{-4})$. 

\vspace{0.5em}
\noindent
\textbf{Gated attention versus Multi-head self-attention.} Together, Theorems~\ref{theorem:nonlinear_at_V} and~\ref{theorem:nonlinear_at_SDPA} reveal that gated attention, either the non-linearity is applied at the value projection (Setting I) or at the SDPA output (Setting II), achieves a polynomial-order sample complexity for expert estimation. This stands in sharp contrast to the exponential-order sample complexity induced by the standard multi-head self-attention derived in Theorem~\ref{theorem:minimax_lower_bound_voronoi}. Therefore, through the lens of HMoE, we claim that gated attention is more sample-efficient than multi-head self-attention.

\subsection{Practical implications}

Our theoretical results offer two notable insights for the design of attention mechanisms.

\vspace{0.5em}
\noindent
\emph{(I.1) Gated attention is more sample-efficient than Multi-head self-attention.} Our key theoretical insight is that gated attention substantially enhances the sample efficiency compared to multi-head self-attention. As summarized in Table~\ref{tab:rate_summary}, while multi-head self-attention requires exponentially many samples to recover experts, gated attention needs only a polynomial number of samples. This provides a clear statistical justification for adopting gated attention to obtain better performance as suggested in \cite{qiu2025gated}.

\vspace{0.5em}
\noindent
\emph{(I.2) Placing a non-linear activation at appropriate locations fundamentally improves sample efficiency.} Within gated attention architectures, not all placements of non-linearity are equally effective. In this work, we demonstrate that placing the non-linearity specifically after SDPA outputs or immediately following value projections leads to a substantial increase in the model's sample efficiency. In particular, our theories attribute the exponential-order sample complexity of multi-head self-attention to the PDE-typed interaction in equation~\eqref{eq:interaction}, that is, \begin{equation*}
a^\top \cdot 
\frac{\partial u(x;M,a)}{\partial a}
=
u(x;M,a),
\end{equation*}
where $u(x;M,a)=
\exp(x^\top M x)\cdot (a^\top x)$. This interaction is caused by the linear form of the experts $a^{\top}x$. Therefore, when applying a non-linear activation after the SDPA output or following the values, the experts become non-linear, e.g., $\varphi(a^{\top}x)$. However, if we place the activation after the queries or keys, the experts will remain linear, according to the connection between gated attention and HMoE in Section~\ref{sec:preliminaries}. As a result, the PDE-type interaction~\ref{eq:interaction} still holds and, thus, the model's sample efficiency will not be improved. 


\section{Numerical Experiments}
\label{sec:experiments}
In this section, we conduct numerical experiments to empirically validate our theoretical findings that gated attention mechanisms are more sample-efficient compared to multi-head self-attention. 


\vspace{0.5em}
\noindent
\textbf{Data generation:} 
For each sample size $n$, we generate independent and identically distributed samples $\{(X_i, Y_i)\}_{i=1}^n$ by first drawing $X_i$'s from the uniform distribution over $[-1,1]^d$ and then sampling $Y_i$ from the true regression function specified in each theorem configuration: $f_{\Gs}(X)$ for multi-head self-attention (Model~\eqref{eq:mha_moe}), $g_{G^*}(X)$ for gated attention Setting I (Model~\eqref{eq:gated_attention_model}), and $g_{G^*}(X)$ for gated attention Setting II (Model~\eqref{eq:gated_SPDA_model}), with additive Gaussian noise $\varepsilon_i \sim \mathcal{N}(0, \nu^2)$.

\vspace{0.5em}
\noindent
The input data dimension is $d = 2$. We employ $H^* = 2$ heads, $N^* = 2$ experts per head, and $K^* = 2$ channels. The activation function $\varphi$ is the sigmoid function. The variance of Gaussian noise $\varepsilon_i$ is $\nu = 0.1$.

\vspace{0.5em}
\noindent
\textbf{Experimental setup:}
\begin{figure*}[!h]
    \centering
    \begin{subfigure}[b]{0.30\textwidth}
        \centering
        \includegraphics[width=\textwidth]{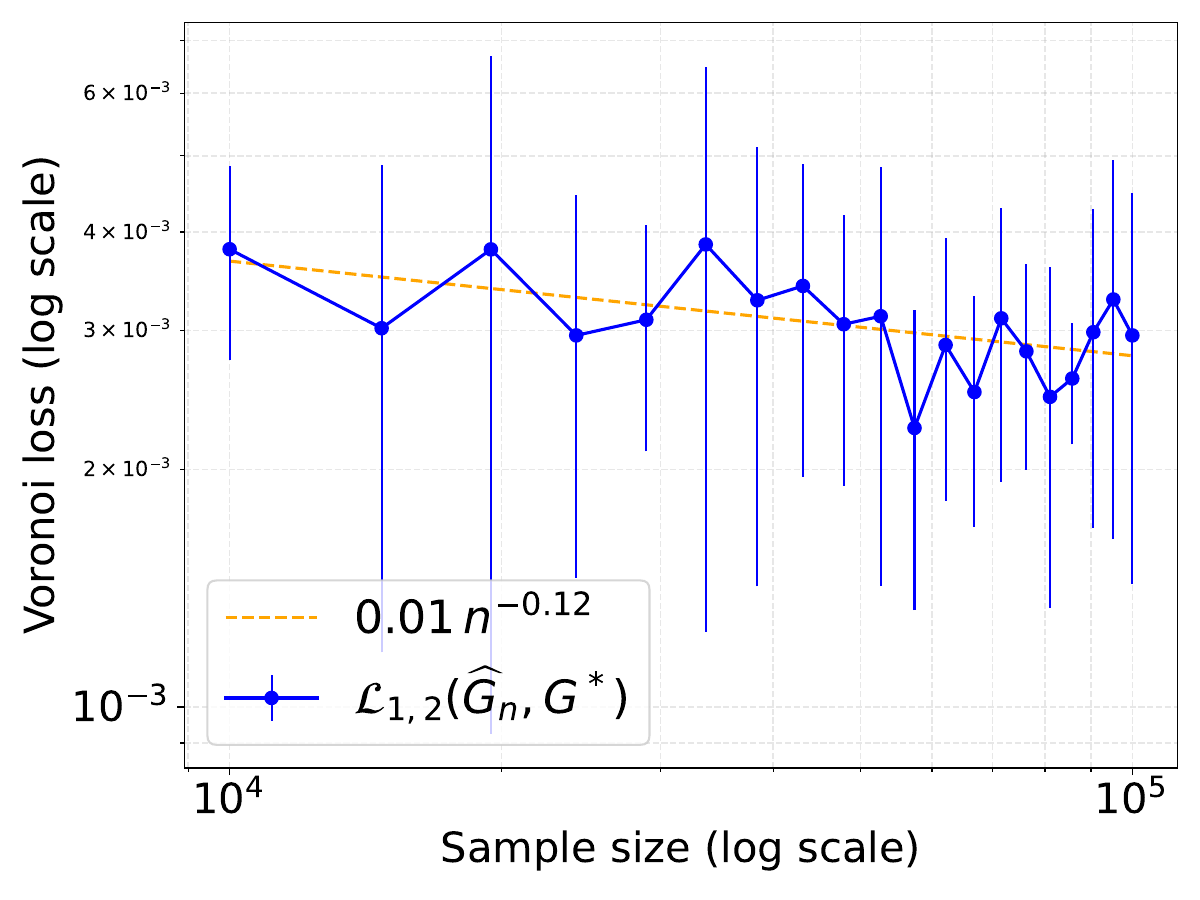}
        \caption{MHA ($k=3$)}  
        \label{fig:sub1}
    \end{subfigure}
    \hfill
    \begin{subfigure}[b]{0.30\textwidth}
        \centering
        \includegraphics[width=\textwidth]{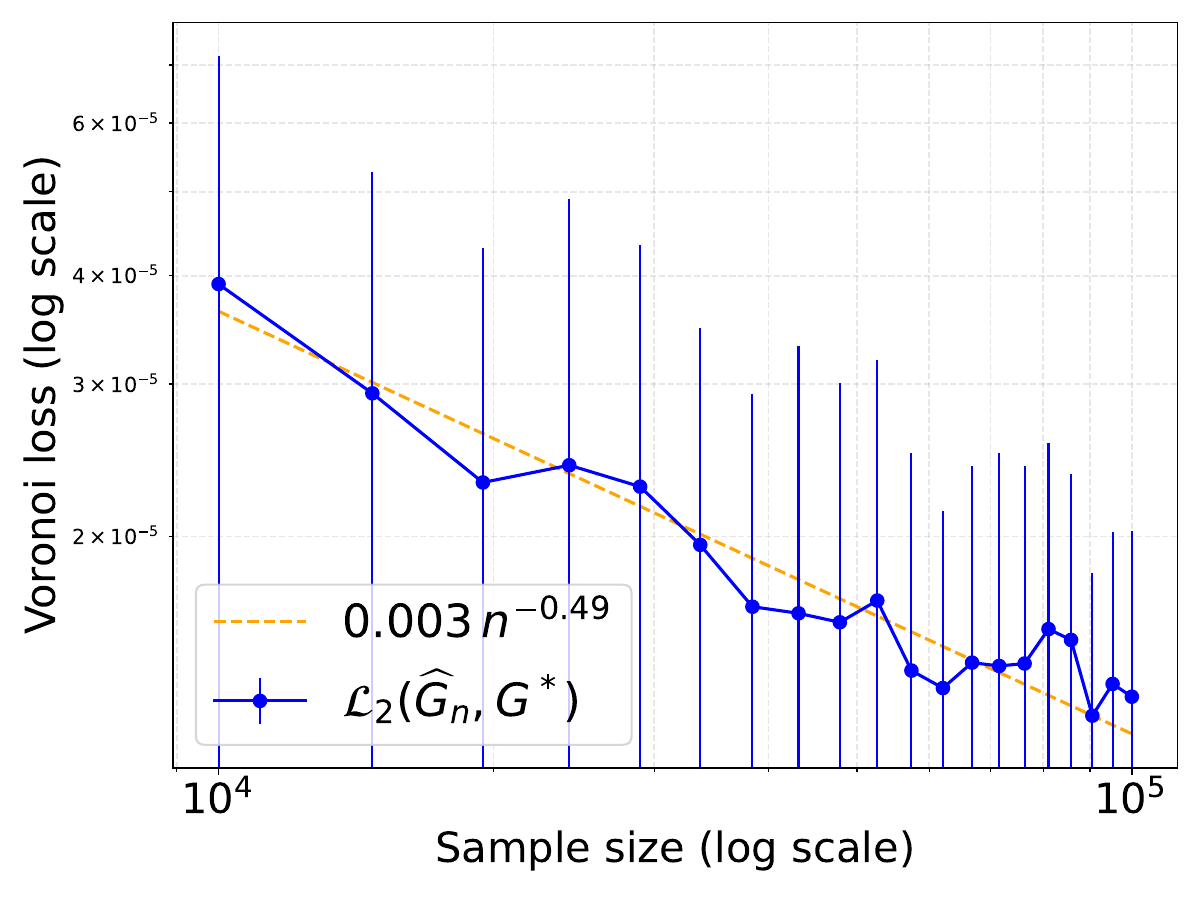}
        \caption{Value Gated Attention ($k=3$)}
        \label{fig:sub2}
    \end{subfigure}
    \hfill
    \begin{subfigure}[b]{0.30\textwidth}
        \centering
        \includegraphics[width=\textwidth]{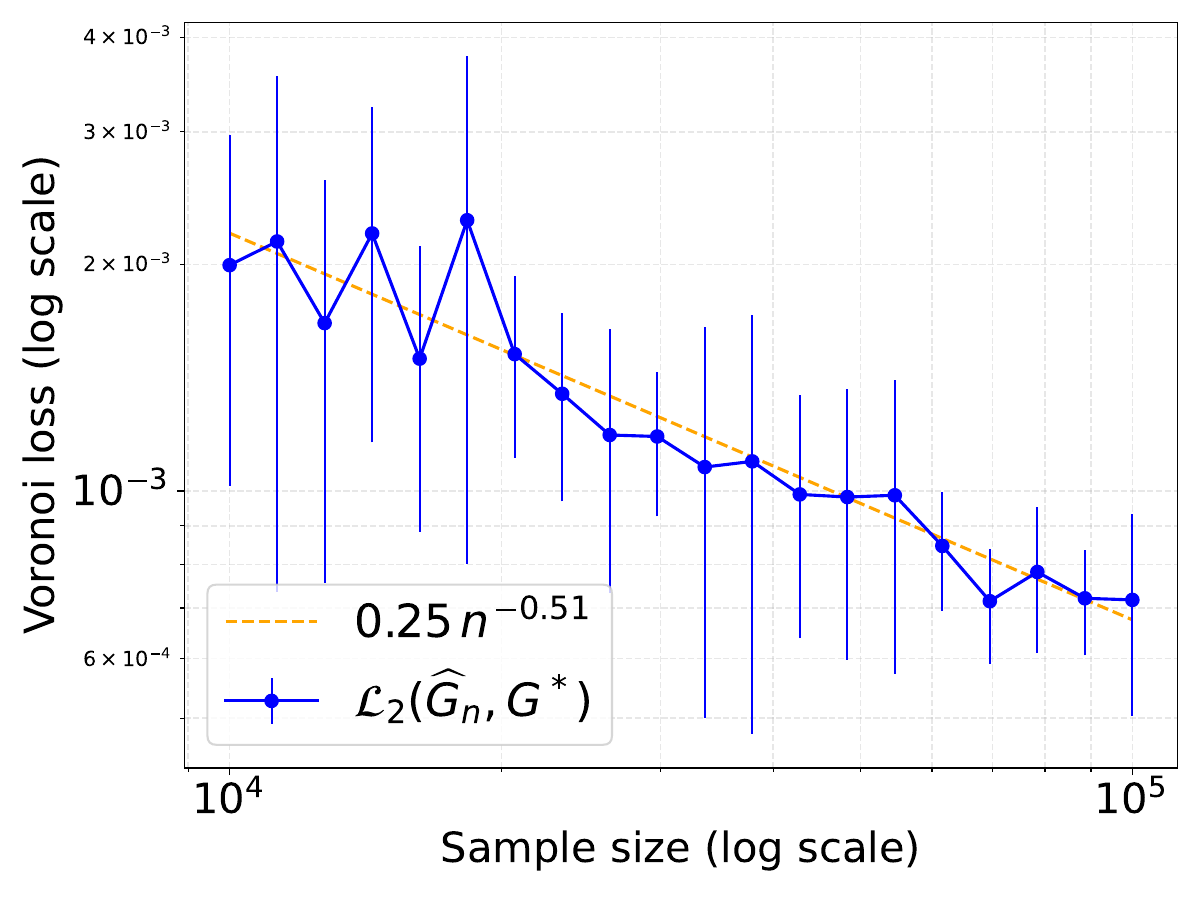}
        \caption{SDPA Gated Attention ($k=3$)}
        \label{fig:sub3}
    \end{subfigure}
    
    \vspace{0.1cm}
    \begin{subfigure}[b]{0.30\textwidth}
        \centering
        \includegraphics[width=\textwidth]{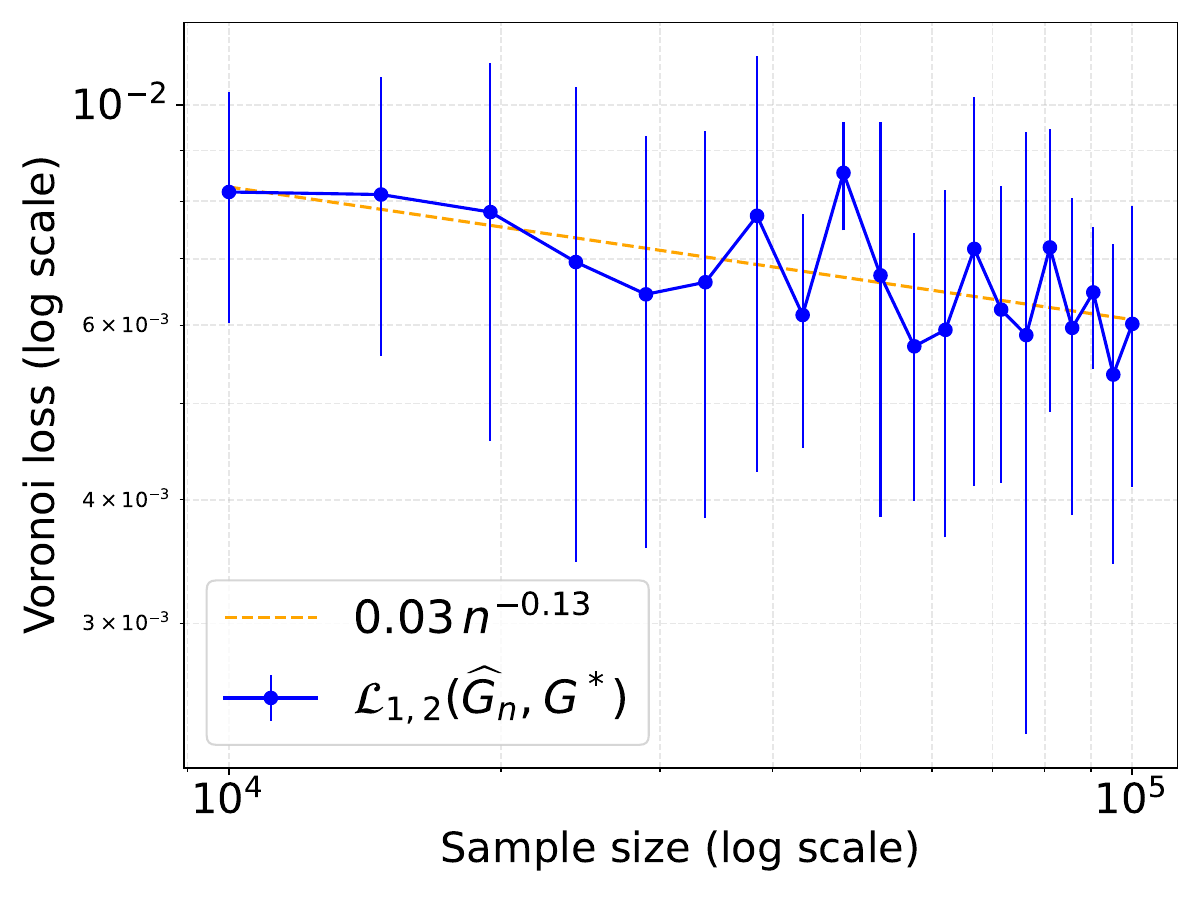}
        \caption{MHA ($k=4$)}
        \label{fig:sub4}
    \end{subfigure}
    \hfill
    \begin{subfigure}[b]{0.30\textwidth}
        \centering
        \includegraphics[width=\textwidth]{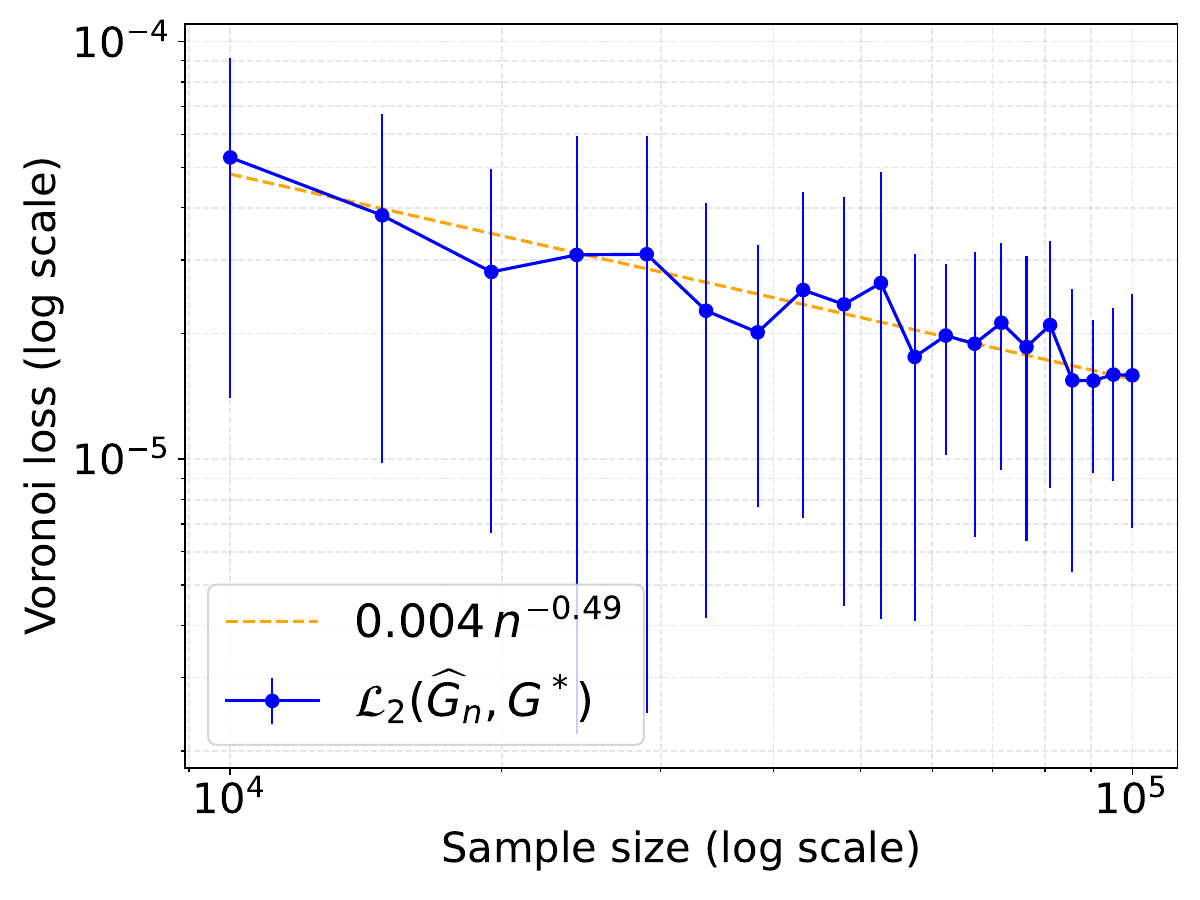}
        \caption{Value Gated Attention ($k=4$)}
        \label{fig:sub5}
    \end{subfigure}
    \hfill
    \begin{subfigure}[b]{0.30\textwidth}
        \centering
        \includegraphics[width=\textwidth]{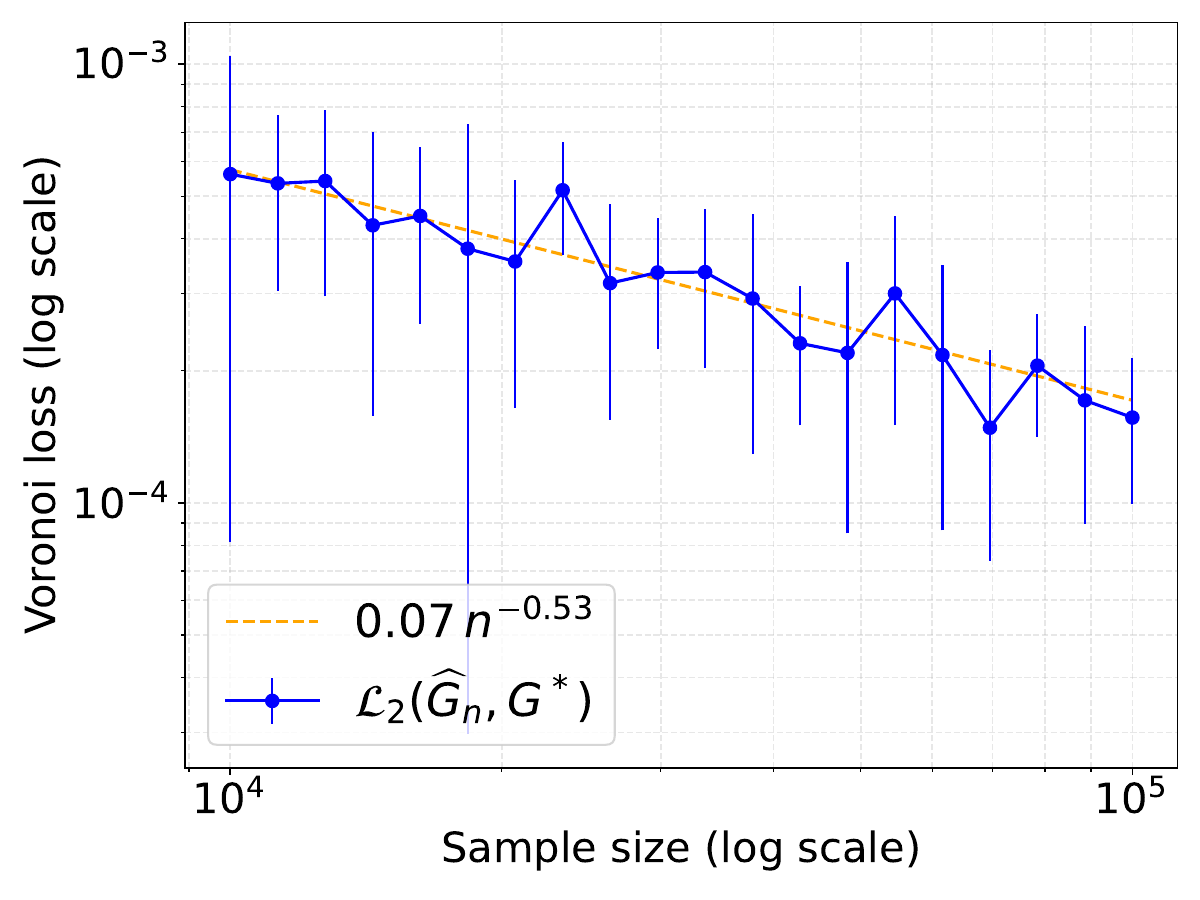}
        \caption{SDPA Gated Attention ($k=4$)}
        \label{fig:sub6}
    \end{subfigure}
\captionsetup{justification=justified,singlelinecheck=false}
    \caption{Log-log plots of empirical Voronoi losses versus sample size $n$ for multi-head self-attention and gated attention mechanisms. Subplots (a) and (d) depict the convergence rates of Voronoi losses corresponding to the settings of standard multi-head self-attention. Subplots (b) and (e) show the results for gated attention under Setting I, while subplots (c) and (f) illustrate Setting II of the gated attention. Error bars represent two standard deviations across 10 independent trials. The dashed lines are the fitted lines for the least squares error.}
    \label{fig:combined}
\end{figure*}
We summarize the ground-truth parameters for the gating and expert parameters in Appendix~\ref{appendix:experiment} and Table~\ref{tab:true_params}.

\vspace{0.5em}
\noindent
\textbf{Training procedure:}
For each sample size $n$, spanning from $10^4$ to $10^5$, we perform 10 experiments. In every experiment, we test both $K = 3$ and $K = 4$ fitted channels to examine the over-specified setting under different degrees of over-specification. The parameter initialization is adjusted to be near the true parameters with perturbations of magnitude $O(n^{-0.083})$. Subsequently, we use the gradient descent algorithm with backtracking line search across a maximum of 1000 epochs, employing learning rates of $\eta = 0.05$ for both $K = 3$ and $K = 4$ to fit a model to the synthetic data. For each experiment, we calculate the Voronoi losses for every model and report the mean values for each sample size in Figure~\ref{fig:combined}.

\vspace{0.5em}
\noindent
\textbf{Results:} In Figure~\ref{fig:combined}, when employing the gated attention mechanism under Setting I (with nonlinearity $\varphi$ applied after the scaled dot-product attention or value projection), the Voronoi loss approaches zero at empirical rates of approximately $O(n^{-0.49})$ as shown in Figure~\ref{fig:sub2} and~\ref{fig:sub5}. Similarly, under Setting II (with nonlinearity applied to the full attention output), Figure~\ref{fig:sub3} and~\ref{fig:sub6} show convergence rates of $O(n^{-0.51})$ and $O(n^{-0.53})$. On the other hand, the baseline multi-head self-attention without gating exhibits substantially slower convergence rates of $O(n^{-0.12})$ and $O(n^{-0.13})$ as shown in Figure~\ref{fig:sub1} and~\ref{fig:sub4}. These empirical observations show that gated attention is more sample-efficient than standard multi-head self-attention.

\section{Discussion}
\label{sec:discussion}

This paper lays a comprehensive theoretical foundation for analyzing the benefits of gated attention. First, we establish a novel link between gated attention/multi-head self-attention and hierarchical mixture of experts. From the perspective of  expert specialization in HMoE, we find that multi-head self-attention yields an exponential-order sample complexity due to a PDE-typed interaction. On the other hand, owing to the non-linearity introduced either after the SDPA output or following the values, the sample complexity of gated attention is substantially improved to be of polynomial order. These results imply that gated attention is more sample-efficient than multi-head self-attention. More importantly, our theories provide an important insight that the PDE-typed interaction in the multi-head self-attention is caused by the linear form of experts. Therefore, placing the non-linearity either after the SDPA output or following the values output addresses this issue by making the experts non-linear, whereas applying it at other positions leaves the problem unresolved as the experts remain linear.  


\vspace{0.5em}
\noindent
There are some potential research directions arising from our work. First, our theoretical analysis is conducted within a well-specified setting where data are generated from a regression framework where the regression function is formulated as a hierarchical mixture-of-experts model. In future development, we can extend the results of this paper to the scenario where the regression function is not necessarily an HMoE model. Second, in this paper, we compare gated attention with multi-head self-attention under a statistical problem of estimating experts in their representations. We believe that a comparison of these two models from another perspective, namely the universal approximation capability as in \cite{ramapuram2024sigmoidattention}, would strengthen the claim that gated attention yields better performance than multi-head self-attention.

\appendix
\vspace{1cm}
\centering
\textbf{\Large{Appendices for 
``A Statistical Theory of Gated Attention Mechanism through the Lens of Hierarchical Mixture of Experts''}}

\justifying
\setlength{\parindent}{0pt}
\vspace{0.5cm}
In this supplementary material, we first expand on related works in Section~\ref{sec:related_works}, providing additional context and connections to existing literature. We then present rigorous proofs for results regarding the convergence rates of parameter estimation under the hierarchical mixture of experts representations of attention mechanisms in Section~\ref{appendix:main_results}. Specifically, we provide proofs for Theorem~\ref{theorem:minimax_lower_bound_voronoi} on multi-head self-attention in Appendix~\ref{appendix:minimax_lower_bound_voronoi}, Theorem~\ref{theorem:nonlinear_at_V} on gated attention (Setting I) in Appendix~\ref{appendix:nonlinear_at_V}, and Theorem~\ref{theorem:nonlinear_at_SDPA} on gated attention (Setting II) in Appendix~\ref{appendix:nonlinear_at_SDPA}. Next, we provide proofs for auxiliary results including Proposition~\ref{prop:regression_rate_mha} in Appendix~\ref{appendix:auxiliary_results}. Finally,  we provide the true parameter configurations used in our numerical experiments in Appendix~\ref{appendix:experiment}.

\section{Related Work}
\label{sec:related_works}

\textbf{Self-Attention and Transformers.}
The self-attention mechanism, originally introduced by \cite{vaswani2017attention}, has become the de facto backbone of modern deep learning, enabling the modeling of long-range dependencies in foundation models. Despite its immense success, recent studies have highlighted inherent limitations in the standard dot-product attention. Notably, the "attention sink" phenomenon has been observed in autoregressive large language models (LLMs), where the model disproportionately allocates attention scores to initial tokens (e.g., the start-of-sentence token) regardless of their semantic relevance \cite{xiao2024efficient, gu2024attentionsink}. In addition, \cite{bhojanapalli2021lowrank} demonstrate that the standard multi-head parameterization induces a low-rank bottleneck on attention matrices, restricting the expressivity of learned patterns. Furthermore, the quadratic time and memory scaling of the softmax kernel poses a prohibitive cost for long-context applications, leading to the development of linear-time reformulations \cite{katharopoulos2020transformers} and kernel-based approximations \cite{choromanski2021rethinking}.
Together, these results indicate that standard dot-product attention suffers from inherent structural and computational limitations, posing  significant challenges for efficient streaming applications.

\vspace{0.5em}

\textbf{Mixture of Experts (MoE) and Hierarchical MoE.}
Our theoretical framework is deeply rooted in the literature of Mixture of Experts (MoE). Originating from the seminal works of \cite{Jacobs1991adaptive} and \cite{Jordan1993hmoe}, MoE models scale model capacity by conditionally activating a sparse subset of parameters. More recently, Hierarchical MoE (HMoE) has been proposed to handle complex data structures by organizing experts into multi-level architectures \cite{jiang1999hmoe}. For example, \cite{li2025hmoe} proposed a two-level HMoE architecture that improves generalization by learning hierarchical partitions of the representation space, while \cite{liao2025hmora} introduced a hierarchical mixture of low-rank experts for parameter-efficient adaptation of large language models.
From a statistical learning perspective, the theoretical properties of MoE have attracted significant attention. While early works focused on density estimation using likelihood-based methods \cite{jiang_approximation_1999}, recent advances have shifted towards understanding parameter estimation and expert specialization. For instance, \cite{chen2022theory} study theoretical properties of MoE layers in deep networks and characterize conditions under which experts specialize. \cite{ho2022gaussian,nguyen2024gaussian} provided convergence rates for Gaussian MoE, while
\cite{nguyen2023demystifying,nguyen2024general} analyzed the intricate behavior of Softmax gating. Crucially, \cite{nguyen2024hmoe} recently extended these analyses to HMoE, highlighting how specific gating functions can influence the convergence rates of expert estimation.

\vspace{0.5em}
\textbf{Gated Attention Mechanisms.}
To address the limitations of standard attention, incorporating gating mechanisms into the attention sub-layers has emerged as a promising direction. \cite{qiu2025gated} formally introduced the Gated Attention architecture, empirically demonstrating that non-linear gating (e.g., via Sigmoid or SiLU) enhances the expressiveness of low-rank mappings and naturally eliminates the attention sink phenomenon through induced sparsity. Concurrently, \cite{ramapuram2024sigmoidattention} and \cite{yan2025sigmoid} provided comprehensive analyses of Sigmoid-based self-attention, establishing best practices for its implementation.
However, existing works on Gated Attention are predominantly empirical. A rigorous statistical justification explaining \textit{why} the non-linear gating breaks the parameter interactions found in standard attention remains absent. In this work, we bridge this gap by formalizing Gated Attention as a specific instance of HMoE and leveraging the rich statistical tools from the MoE literature to prove its superior sample efficiency.

\section{Proofs of Main Results}
\label{appendix:main_results}

\subsection{Proof of Theorem~\ref{theorem:minimax_lower_bound_voronoi}\label{appendix:minimax_lower_bound_voronoi}}

\paragraph{Proof overview.}
We begin by establishing that
\begin{align}
\lim_{\varepsilon \to 0}
\inf_{G \in \mathcal{G}_k(\Theta):\, \mathcal L_{1, r}(G, G^\ast) \le \varepsilon}
\frac{\| f_G - f_{G^\ast} \|_{L^2(\mu)}}{\mathcal L_{1, r}(G, G^\ast)} = 0
\label{eq:vanishing_ratio}
\end{align}

for any $r \geq 1$. With this result, we then obtain the minimax lower bound stated in Theorem~\ref{theorem:minimax_lower_bound_voronoi}:
\begin{align}
\inf_{G_n \in \mathcal{G}_{H^*,K,N^*}(\Theta)}
\sup_{G \in \mathcal{G}_{H^*,K,N^*}(\Theta)\setminus \mathcal{G}_{H^*,K^*-1,N^*}(\Theta)}
\mathbb{E}_{f_G}
\big[
\mathcal L_{1,r}(G_n, G)
\big]
\;\gtrsim\;
n^{-1/2}.
\label{eq:minimax_lower_bound}
\end{align}

\paragraph{Proof of equation \eqref{eq:vanishing_ratio}.}
To show this, it is enough to exhibit a sequence of mixing measures $(G_n)$ with
\begin{align}
\mathcal L_{1, r}(G_n, G^\ast) \to 0
\quad \text{and} \quad
\frac{\| f_{G_n} - f_{G^\ast} \|_{L_2(\mu)}}{\mathcal L_{1, r}(G_n, G^\ast)} \to 0,
\label{eq:vanishing_ratio_sequence}
\end{align}
as $n \to \infty$.

Consider the sequence $(G_n)$ given by
\begin{align}
G_n := \sum_{h=1}^{H^*}\sum_{k=1}^{K}\omega^{(n)}_{h,k}
\sum_{i=1}^{N^*}\exp\!\big(c^{(n)}_{h,i}\big)\,\delta_{(M^{(n)}_{h,i},a^{(n)}_{h,i,k})},
\end{align}
where $K = K^*+1$ and for all $h\in[H^*]$ and $i\in[N^*]$, the parameters are specified as follows:
\begin{align}
\omega^{(n)}_{h,1} = 
\omega^{(n)}_{h,2} = \frac{1}{2}\omega^{*}_{h,1} + \frac{1}{2n^{r+1}},
\qquad
\omega^{(n)}_{h,k} = \omega^{*}_{h,k - 1}, \quad  3 \leq k \leq K,
\end{align}
\begin{align}
M^{(n)}_{h,i} = M^\ast_{h,i}, \quad 1 \le i \le N^\ast,
\end{align}
and for the expert parameters,
\begin{align}
a^{(n)}_{h,i,1} = a^\ast_{h,i,1} + \frac{1}{n}e_1,
\qquad
a^{(n)}_{h,i,2} = a^\ast_{h,i,1} - \frac{1}{n}e_1,
\qquad
a^{(n)}_{h,i,k} = a^\ast_{h,i,k-1}, \quad  3 \leq k \leq K,
\end{align}
with $e_1=(1,0,\ldots,0)^\top$.


Under the above construction, the Voronoi cells $\{V_{h,k}(G_n)\}$ can be chosen as follows.
For each $h\in[H^*]$, we have
\[
V_{h,1}(G_n)=\{(h,1),(h,2)\},
\qquad
V_{h,k}(G_n)=\{(h,k+1)\},\quad 2\le k\le K^*.
\]
Indeed, the two components $(h,1)$ and $(h,2)$ are both constructed to be closest to
$f_{G^*}^{h,1}$, while for each $k\ge 2$ the component $(h,k+1)$ is an exact copy of
$f_{G^*}^{h,k}$ up to the index shift.

Next, for $(h',k')\in V_{h,1}(G_n)$, we have
\[
\Delta M_{h',i,\;k'}=0,
\qquad
\Delta a_{h',i,\;k'}
=
a^{(n)}_{h,i,k'}-a^*_{h,i,1}
=
\frac{(-1)^{k'-1}}{n}e_1,
\qquad k'\in\{1,2\},
\]
while for $2\le k\le K^*$ and $(h',k')=(h,k+1)\in V_{h,k}(G_n)$, we have
\[
\Delta M_{h',i,\;k'}=0,
\qquad
\Delta a_{h',i,\;k'}
=
a^{(n)}_{h,i,k+1}-a^*_{h,i,k}
=0,
\]
so that the corresponding $(k\ge 2)$ terms in $\mathcal L_{1,r}(G_n,G^*)$ vanish.

Hence, by the definition of $\mathcal L_{1,r}$, we obtain
\begin{align}
\mathcal L_{1,r}(G_n,G^*)
&=
\sum_{h=1}^{H^*}\sum_{k =1}^{K^*}
\left|\sum_{(h',k') \in V_{h,k}(G_n)}\omega^{(n)}_{h',k'} - \omega^*_{h,k}\right|
\nonumber\\
&\quad+
\sum_{h=1}^{H^*}\sum_{k=1}^{K^*}\sum_{(h',k') \in V_{h,k}(G_n)} \omega^{(n)}_{h',k'}
\sum_{i=1}^{N^*}\Big(\|\Delta M_{h',i,\;k'}\|^r+\|\Delta a_{h',i,\;k'}\|^r\Big)
\nonumber\\
&=
\frac{H^*}{n^{r+1}}
+
\sum_{h=1}^{H^*}\Big(\omega^{(n)}_{h,1}+\omega^{(n)}_{h,2}\Big)
\sum_{i=1}^{N^*}\big\|\frac{1}{n}e_1\big\|^{\,r}
\nonumber\\
&=
\frac{H^*}{n^{r+1}}
+
\sum_{h=1}^{H^*}(\omega^*_{h,1} + \frac{1}{n})\,N^*\,\frac{1}{n^r}
=
O\!\left(\frac{1}{{n^r}}\right).
\label{eq:voronoi_rate_sequence}
\end{align}

The above formulation ensures that
$\mathcal L_{1, r}(G_n, G^\ast) \to 0$ as $n \to \infty$. Therefore, it remains to establish \eqref{eq:vanishing_ratio_sequence}.

For this purpose, 
for each $(h,k)\in[H^\ast]\times[K]$, define
\begin{align}
u^{h,k}_G(x)
:=
\sum_{i=1}^{N^\ast}
\frac{\exp\!\big(x^\top M_{h,i}x\big)}
{\sum_{t=1}^{N^\ast}\exp\!\big(x^\top M_{h,t}x\big)}
\,(a_{h,i,k})^\top x,
\end{align}
so that
\begin{align}
f_G(x)
=
\sum_{h=1}^{H^\ast}\sum_{k=1}^{K}\omega_{h,k}\,u^{h,k}_G(x).
\end{align}
We can partition
\begin{align}
f_{G_n}(x)-f_{G^\ast}(x)
=
\sum_{h=1}^{H^\ast}
\Bigg[
\sum_{k=1}^{K}\omega^{(n)}_{h,k}\,u^{h,k}_{G_n}(x)
-
\sum_{k=1}^{K}\omega^\ast_{h,k}\,u^{h,k}_{G^\ast}(x)
\Bigg].
\end{align}

Next, for each fixed $h\in[H^\ast]$, we decompose
\begin{align}
& 
\Bigg[
\sum_{k=1}^{K}\omega^{(n)}_{h,k}\,u^{h,k}_{G_n}(x)
-
\sum_{k=1}^{K}\omega^\ast_{h,k}\,u^{h,k}_{G^\ast}(x)
\Bigg]
\nonumber\\
&=
\Bigg[
\omega^{(n)}_{h,1}u^{h,1}_{G_n}(x)
+
\omega^{(n)}_{h,2}u^{h,2}_{G_n}(x)
-
\big(\omega^{(n)}_{h,1}+\omega^{(n)}_{h,2}\big)u^{h,1}_{G^\ast}(x)
\Bigg]
\nonumber\\
&\quad+
\Big(\omega^{(n)}_{h,1}+\omega^{(n)}_{h,2}-\omega^\ast_{h,1}\Big)\,u^{h,1}_{G^\ast}(x)
\nonumber\\
&\quad+
\sum_{k=2}^{K}
\Big[
\omega^{(n)}_{h,k+1}u^{h,k+1}_{G_n}(x)
-
\omega^\ast_{h,k}u^{h,k}_{G^\ast}(x)
\Big]
\nonumber\\
&=: A_{n,h}(x) + B_{n,h}(x) + C_{n,h}(x).
\end{align}

We now evaluate the three terms. By the construction of $G_n$, the head-shift part
satisfies
$
\omega^{(n)}_{h,k+1} = \omega^\ast_{h,k},\ u^{h,k+1}_{G_n}(x)=u^{h,k}_{G^\ast}(x),\  2\le k\le K,
$
and hence the corresponding term vanishes:
\begin{align}
C_{n,h}(x)=0.
\end{align}
Moreover, using $\omega^{(n)}_{h,1}=\omega^{(n)}_{h,2}$ together with the symmetric
perturbation
$
a^{(n)}_{h,i,1}=a^\ast_{h,i,1}+\frac{1}{n}e_1,\
a^{(n)}_{h,i,2}=a^\ast_{h,i,1}-\frac{1}{n}e_1,
$
we obtain exact cancellation in the pivot term:
\begin{align}
A_{n,h}(x)
=
\omega^{(n)}_{h,1}
\Big(u^{h,1}_{G_n}(x)+u^{h,2}_{G_n}(x)-2u^{h,1}_{G^\ast}(x)\Big)
=0.
\end{align}
Finally, since
$
\omega^{(n)}_{h,1}+\omega^{(n)}_{h,2}-\omega^\ast_{h,1}=\frac{1}{n^{r+1}},
$
the remaining term satisfies
\begin{align}
B_{n,h}(x)
=
\frac{1}{n^{r+1}}
u^{h,1}_{G^\ast}(x)
=
O\!\left(\frac{1}{n^{r+1}}\right).
\end{align}

Combining the above, we conclude that ,
\[
f_{G_n}(x)-f_{G^\ast}(x)
=
\sum_{h=1}^{H^*}(A_{n,h}(x)+B_{n,h}(x)+C_{n,h}(x))
=
\sum_{h=1}^{H^*}B_{n,h}(x),
\]
where $A_{n,h}(x)=0$, $C_{n,h}(x)=0$, and $B_{n,h}(x)=O(\frac{1}{n^{r+1}})$ for almost every $x$.

Since $\mathcal L_{1,r}(G_n,G^\ast)=O(\frac{1}{n^r})$, it follows that

\begin{align}
\frac{f_{G_n}(x)-f_{G^\ast}(x)}{\mathcal L_{1,r}(G_n,G^\ast)}
&=
\frac{\sum_{h=1}^{H^*}B_{n,h}(x)}
{\mathcal L_{1,r}(G_n,G^\ast)}
\nonumber\\
&\to 0
\quad \text{for almost every } x.
\end{align}

As a consequence,
\[
\frac{\| f_{G_n} - f_{G^\ast} \|_{L_2(\mu)}}{\mathcal L_{1, r}(G_n, G^\ast)} \to 0
\quad \text{as } n \to \infty.
\]
This completes the proof of \eqref{eq:vanishing_ratio}.

\paragraph{Proof of equation \eqref{eq:minimax_lower_bound}.}
Given that the noise variables $\varepsilon_i\mid x_i$ are Gaussian,
\[
Y_i \mid x_i \sim \mathcal{N^\ast}(f_{G^\ast}(x_i), \sigma^2)
\quad \text{for all } i \in [n].
\]
From \eqref{eq:vanishing_ratio}, for sufficiently small $\varepsilon>0$ and some constant $c>0$ (to be chosen later), we can find $G\ast' \in \mathcal{G}_k(\Theta)$ such that
\[
\mathcal L_{1, r}(G_\ast', G^\ast) = 2 \varepsilon
\quad \text{and} \quad
\| f_{G_\ast'} - f_{G^\ast} \|_{L_2(\mu)} \le c \cdot \varepsilon.
\]

Applying Le Cam's lemma~\cite{yu97lecam} and using the weak triangle inequality for $\mathcal L_{1, r}$, we obtain
\begin{align}
&\inf_{G_n \in \mathcal{G}_{H^*,K,N^*}(\Theta)}
\sup_{G \in \mathcal{G}_{H^*,K,N^*}(\Theta)\setminus \mathcal{G}_{H^*,K^*-1,N^*}(\Theta)}
\mathbb{E}_{f_G}
\big[ \mathcal L_{1, r}(G_n, G) \big]
\nonumber\\
&\gtrsim
\frac{\mathcal L_{1, r}(G_\ast', G^\ast)}{8}
\exp\!\Big(
- n \, \mathbb{E}_{x \sim \mu}
\big[
\mathrm{KL}(
\mathcal{N^\ast}(f_{G_\ast'}(x), \sigma^2),
\mathcal{N^\ast}(f_{G^\ast}(x), \sigma^2)
)
\big]
\Big)
\nonumber\\
&\gtrsim
\varepsilon \cdot
\exp\!\big(
- n \| f_{G_\ast'} - f_{G^\ast} \|_{L_2(\mu)}^2
\big)
\gtrsim
\varepsilon \cdot \exp(- c n \varepsilon^2).
\label{eq:lecam_bound}
\end{align}

Here the second inequality uses
$
\mathrm{KL}\!\left(
\mathcal{N^\ast}(f_{G_\ast'}(x), \sigma^2),
\mathcal{N^\ast}(f_{G^\ast}(x), \sigma^2)
\right)
=
\frac{\big(f_{G_\ast'}(x)-f_{G^\ast}(x)\big)^2}{2\sigma^2}.
$

Choosing $\varepsilon = n^{-1/2}$ gives
\[
\varepsilon \cdot \exp(- c n \varepsilon^2)
= n^{-1/2} \exp(-c).
\]
Thus the minimax lower bound \eqref{eq:minimax_lower_bound} follows, completing the proof.

\subsection{Proof of Theorem~\ref{theorem:nonlinear_at_V}}
\label{appendix:nonlinear_at_V}

In this proof, we first show that
\begin{equation}\label{eq:main_goal_in_theorem4}
    \inf_{G \in \gG_{H^*,K,N^*}(\Theta)}\|g_G - g_{G^*}\|_{L^2(\mu)}/ \gL_2(G,G^*) > 0
\end{equation}
We separate the proof of \eqref{eq:main_goal_in_theorem4} into a global estimate and a local estimate. The local portion requires showing that \eqref{eq:main_goal_in_theorem4} holds whenever $\gL_2(G,\Gs)$ is small enough as follows
\begin{equation}\label{theorem4:local}
    \lim_{\varepsilon \to 0} \inf_{G \in \gG_{H^*,K,N^*}(\Theta): \gL_2(G,G^*) \le \varepsilon} \|g_G - g_{G^*}\|_{L^2(\mu)}/ \gL_2(G,G^*) > 0. 
\end{equation}
In the local regime, the problem is addressed using Taylor expansion as the main analytical tool. In contrast, the global part of the proof examines the behavior when $\gL_2(G,\Gs)$ is sufficiently large.
\begin{equation}\label{theorem4:global}
    \inf_{G \in \gG_{H^*,K,N^*}(\Theta): \gL_2(G,G^*) > \varepsilon} \|g_G - g_{G^*}\|_{L^2(\mu)}/ \gL_2(G,G^*) > 0.
\end{equation}
\textbf{Proof of the local part (Equation~\eqref{theorem4:local}).}

Suppose that this local inequality fails. Then there exists a sequence of mixing measures $(G_n) \in \gG_{H^*K^*N}(\Theta)$ such that $\gL_2(G_n,G^*) \to 0$ and $\|g_{G_n} - g_{G^*}\|_{L^2(\mu)}/ \gL_2(G_n,G^*) \to 0$ when $n \to \infty$.\\
\textbf{Step 1.} First of all, we denote
$$
\overline{f}_{G_n}^{h',k'}(x) = \sum_{i=1}^{N^*}\frac{\exp(x^{\top}M^n_{h',i}x)}{\sum_{j=1}^{N^*}\exp(x^{\top}M^n_{h',j}x)}\cdot \varphi((a^n_{h',i,k'})^{\top}x);
$$
$$
\overline{f}^{h,k}_{G^*}(x) = \sum_{i=1}^{N^*}\frac{\exp(x^{\top}M^*_{h,i}x)}{\sum_{j=1}^{N^*}\exp(x^{\top}M^*_{h,j}x)}\cdot \varphi
((a^*_{h,i,k})^{\top}x).
$$
Then, we have $g_{G_n}(x)=\sum_{h'=1}^{H^*}\sum_{k'=1}^{K}\omega_{h',k'}^n\cdot \overline{f}_{G_n}^{h',k'}(x)$,
$g_{G^*}(x)=\sum_{h=1}^{H^*}\sum_{k=1}^{K^*}\omega_{h,k}^*\cdot \overline{f}_{G^*}^{h,k}(x)$. 
Consequently, the Voronoi loss function between $G_n$ and $G^*$ becomes
\begin{align*}
    \gL_{2n}:= \gL_2(G_n,\Gs)&=\sum_{h=1}^{H^*}\sum_{k =1}^{K^*}\left|\sum_{(h',k') \in V_{h,k}}\omega_{h',k'}^{n} - \omega_{h,k}\right|\\
    &+\sum_{h=1}^{H^*}\sum_{k=1}^{K^*}\sum_{\substack{(h',k') \in V_{h,k}, \\ |V_{h,k}| = 1}} \omega_{h',k'}^{n}\Big(\sum_{i=1}^{N^*}\|\Delta M_{h' h,i}\|+\|\Delta a_{h' h,i,k' k}\|\Big) \\ 
    &+\sum_{h=1}^{H^*}\sum_{k=1}^{K^*}\sum_{\substack{(h',k') \in V_{h,k}, \\ |V_{h,k}| > 1}} \omega_{h',k'}^{n}\Big(\sum_{i=1}^{N^*}\|\Delta M_{h' h,i}\|^2+\|\Delta a_{h' h,i,k' k}\|^2\Big),
\end{align*}
where $\Delta M_{h' h, i} = M_{h',i} - M_{h,i}$ and $\Delta a_{h' h, i , k' k} = a_{h',i,k'} - a_{h,i,k}$ for any $i \in [N^*]$.
We now decompose the following difference:
\begin{align}
    \gD_n(x) &:= g_{G_n}(x) - g_{G^*}(x) \nonumber\\
    & = \sum_{h=1}^{H^*}\sum_{k=1}^{K^*}\sum_{(h',k') \in V_{h,k}}\omega_{h',k'}^n\overline{f}^{h',k'}_{G_n}(x) - \sum_{h=1}^{H^*}\sum_{k=1}^{K^*}\omega_{h,k}^*\overline{f}^{h,k}_{G^*}(x) \nonumber\\
    &= \sum_{h=1}^{H^*}\sum_{k=1}^{K^*} \sum_{(h',k') \in V_{h,k}}\omega_{h',k'}^n\left[\overline{f}^{h',k'}_{G_n}(x) - \overline{f}^{h,k}_{G^*}(x)\right] + \sum_{h=1}^{H^*}\sum_{k=1}^{K^*}\left(\sum_{(h',k') \in V_{h,k}}\omega_{h',k'}^n - \omega_{h,k}^*\right) \overline{f}^{h,k}_{G^*}(x)\nonumber\\
    &= \sum_{h=1}^{H^*}\sum_{k=1}^{K^*} \sum_{(h',k') \in V_{h,k}}\omega_{h',k'}^n\gD_{n}^{h' h, k' k} + \sum_{h=1}^{H^*}\sum_{k=1}^{K^*}\left(\sum_{(h',k') \in V_{h,k}}\omega_{h',k'}^n - \omega_{h,k}^*\right) \overline{f}^{h,k}_{G^*}(x), \label{appendixSDPA:first-level decompose I}
\end{align}
where $\gD^{h' h, k' k}_n(x) := \overline{f}_{G_n}^{h',k'}(x)-\overline{f}_{G^*}^{h,k}(x)$. We define 
\begin{align*}
    E^{h'}_n(x) &= \sum_{j=1}^{N}\exp(x^{\top}M^n_{h',j}x) \\
    E^h_*(x) &= \sum_{j=1}^{N^*}\exp(x^{\top}M^*_{h,j}x) \\
    \overline{u}(x;M,a) &= \exp(x^\top Mx)\cdot \varphi(a^\top x) \\
    \overline{v}_n^{h',k'}(x;M) &= \exp(x^\top Mx)\cdot \overline{f}^{h',k'}_{G_n}(x),
\end{align*}
for all $h,h' \in [H^*]$ and $k' \in [K]$. 
Each term $\gD_n^{h' h, k' k}(x)$ can be decomposed as follows:
\begin{align*}
    \gD_n^{h' h, k' k}(x) =\,& \sum_{i=1}^{N^*}\dfrac{1}{E_n^{h'}(x)}\overline{u}(x;M_{h',i}^n, a^n_{h',i,k'}) - \sum_{i=1}^{N^*}\dfrac{1}{E_*^h(x)}\overline{u}(x;M_{h,i}^*, a^*_{h,i,k}) \\ 
    =\,& \dfrac{1}{E^h_*(x)}\sum_{i=1}^{N^*}\overline{u}(x;M^n_{h',i}, a^n_{h',i,k'}) - \dfrac{1}{E^h_*(x)}\sum_{i=1}^{N^*}\overline{u}(x;M^*_{h,i}, a^*_{h,i,k}) \\
    & - \left[\dfrac{1}{E^h_*(x)}\sum_{i=1}^{N^*}\overline{v}_n^{h', k'}(x;M^n_{h',i}) - \dfrac{1}{E^h_*(x)}\sum_{i=1}^{N^*}\overline{v}_n^{h',k'}(x;M^*_{h,i})\right] \\
    :=\,& A_n^{h' h, k' k}(x) - B_n^{h' h, k' k}(x)
\end{align*}
We now expand each term $A_n^{h' h, k' k}(x)$ and $ B_n^{h' h, k' k}(x)$ via a second-order Taylor expansion. Specifically,
\begin{align*}
     &A_n^{h' h, k' k}(x)
   \\=\,& \dfrac{1}{E^h_*(x)}\sum_{i=1}^{N^*}\overline{u}(x;M^n_{h',i}, a^n_{h',i,k'}) - \dfrac{1}{E^h_*(x)}\sum_{i=1}^{N^*}\overline{u}(x;M^*_{h,i}, a^*_{h,i,k}) \\
     =\,& \dfrac{1}{E^h_*(x)}\sum_{i=1}^{N^*}\left(\overline{u}(x;M^n_{h',i}, a^n_{h',i,k'}) - \overline{u}(x;M^*_{h,i}, a^*_{h,i,k})\right) \\
    =\,& \dfrac{1}{E^h_*(x)}\left (\sum_{i=1}^{N^*}\sum_{1 \le |t|\le 2} \dfrac{1}{t!} \left(\Delta M_{h' h,i}\right)^{t_1}\left(\Delta a_{h' h,i,k' k}\right)^{t_2}
    \dfrac{\partial^{|t_1|+|t_2|} \overline{u}}{\partial M^{t_1} \partial a^{t_2}}(x;M^*_{h,i}, a^*_{h,i,k})\right)  + \gR_{n,1}^{h' h, k' k}(x),
\end{align*}

where $t = (t_1, t_2) \in \mathbb{N}^{\overline{d} \times \overline{d}}\times \mathbb{N}^{\overline{d}}$ and  $\gR_{n,1}^{h' h, k' k}(x)$ are Taylor remainders such that $\gR_{n,1}^{h' h, k' k}(x) / \gL_{2n} \to 0$ when $n \to \infty$. 
Next, we denote
\begin{align*}
    A_n(x)&:=
    \sum_{h=1}^{H^*}\sum_{k=1}^{K^*} \sum_{(h',k') \in V_{h,k}}\omega_{h',k'}^nA_n^{h' h, k' k}(x)
\end{align*}
\begin{align*}
    B_n(x)&:=
    \sum_{h=1}^{H^*}\sum_{k=1}^{K^*} \sum_{(h',k') \in V_{h,k}}\omega_{h',k'}^nB_n^{h' h, k' k}(x)
\end{align*}
Recall that $\gL_{2n}$ is given by summing first-order terms $|\Delta M_{h' h, i}|$ and $|\Delta a_{h' h, i, k' k}|$ over singleton Voronoi cells $(|V_{h,k}|=1)$, and second-order terms $|\Delta M_{h' h,i}|^2$ and $|\Delta a_{h' h,i,k' k }|^2$ over cells with $|V_{h,k}|>1$.
Therefore, $A_n$ can be approximated by retaining only the dominant contributions:
\begin{align*}
    A_n(x)
    =\,& \sum_{h=1}^{H^*}\sum_{k=1}^{K^*} \dfrac{1}{E_*^h(x)}\sum_{\substack{(h',k') \in V_{h,k} \\ |V_{h,k}| = 1}} \omega_{h',k'}^{n}\sum_{i=1}^{N^*}\sum_{ |t| = 1} \dfrac{1}{t!} \left(\Delta M_{h' h,i}\right)^{t_1}\left(\Delta a_{h' h,i,k' k}\right)^{t_2}
    \dfrac{\partial^{|t_1|+|t_2|} \overline{u}}{\partial M^{t_1} \partial a^{t_2}}(x;M^*_{h,i}, a^*_{h,i,k})\\
&+    \sum_{\substack{(h',k') \in V_{h,k} \\ |V_{h,k}| > 1}} \omega_{h',k'}^n\sum_{i=1}^{N^*}\sum_{1 \le |t|\le 2} \dfrac{1}{t!} \left(\Delta M_{h' h,i}\right)^{t_1}\left(\Delta a_{h' h,i,k' k}\right)^{t_2}
    \dfrac{\partial^{|t_1|+|t_2|} \overline{u}}{\partial M^{t_1} \partial a^{t_2}}(x;M^*_{h,i}, a^*_{h,i,k}) + \gR_{n,2}(x)
\end{align*}

Similarly, decomposing $B_n(x)$ using the same reasoning yields
\begin{align*}
    B_n(x)
&= 
\sum_{h=1}^{H^*}\sum_{k=1}^{K^*}\dfrac{1}{E_*^h(x)} \sum_{\substack{(h',k') \in V_{h,k} \\ |V_{h,k}| = 1}}\omega_{h',k'}^n\sum_{i=1}^{N^*}\sum_{ |r| = 1} \dfrac{1}{r!} \left(\Delta M_{h' h,i}\right)^{r}
    \dfrac{\partial^{|r|} \overline{v}_n^{h',k'}}{\partial M^{r}}(x;M^*_{h,i})\\
    & +   \sum_{\substack{(h',k') \in V_{h,k} \\ |V_{h,k}| > 1}}\omega_{h',k'}^n\sum_{i=1}^{N^*} \sum_{1 \le |r|\le 2} \dfrac{1}{r!} \left(\Delta M_{h' h,i}\right)^{r}
    \dfrac{\partial^{|r|} \overline{v}_n^{h',k'}}{\partial M^{r}}(x;M^*_{h,i}) +\gR_{n,3}(x)
\end{align*}
where $r \in \mathbb{N}^{\overline{d} \times \overline{d}}$ and $\gR_{n,3}(x)$ is Taylor remainder such that $\gR_{n,3}(x)/\gL_{2n} \to 0$ as $n \to \infty$.

Now, we define $\gJ_t := \{(t_1,t_2) \in \mathbb{N}^{\overline{d}\times \overline{d}}\times \mathbb{N}^{\overline{d}}: |t_1| + |t_2| = |t|\}$. Combining the preceding results, the function $\gD_n^{h' h,k' k}(x)$ can be represented as 
\begin{align*}
    &\gD_n(x) \\
    &= \sum_{h=1}^{H^*}\sum_{k=1}^{K^*}\dfrac{1}{E_*^h(x)}\sum_{i=1}^{N^*}\sum_{|t|=1}^{1 + \mathbf{1}_{|V_{h,k}| > 1}}\sum_{(t_1,t_2) \in \gJ_t}\left (\frac{1}{t!} \sum_{(h',k') \in V_{h,k}}\omega_{h',k'}^n \left(\Delta M_{h' h,i}\right)^{t_1}\left(\Delta a_{h' h,i,k' k}\right)^{t_2}\right )\\
    &\hspace{9cm}\times\dfrac{\partial^{|t_1|+|t_2|} \overline{u}}{\partial M^{t_1} \partial a^{t_2}}(x;M^*_{h,i}, a^*_{h,i,k})\\
    &+\sum_{h=1}^{H^*}\sum_{k=1}^{K^*}\dfrac{1}{E_*^h(x)}\sum_{i=1}^{N^*}\sum_{|r|=1}^{1 + \mathbf{1}_{|V_{h,k}| > 1}}\frac{1}{r!}\sum_{(h',k') \in V_{h,k}}  \omega_{h',k'}^n\left(\Delta M_{h' h,i}\right)^{r}
    \dfrac{\partial^{|r|} \overline{v}_n^{h',k'}}{\partial M^{r}}(x;M^*_{h,i})\\
    &+ \sum_{h=1}^{H^*}\sum_{k=1}^{K^*}\left(\sum_{(h',k') \in V_{h,k}}\omega_{h',k'}^n - \omega_{h,k}^*\right) \overline{f}^{h,k}_{G^*}(x)+ \gR_{n,4}(x), 
\end{align*}

where $\gR_{n,4}(x)$ is Taylor remainder such that $\gR_{n,4}(x)/\gL_{2n} \to 0$ as $n \to \infty$ and we denote
\begin{align*}
    \overline{Y}_{t_1,t_2}^{h, k}(i) &= \frac{1}{t!} \sum_{(h',k') \in V_{h,k}}\omega_{h',k'}^n \left(\Delta M_{h' h,i}\right)^{t_1}\left(\Delta a_{h' h,i,k' k}\right)^{t_2}, \\
    \overline{Z}_{r}^{h' h}(i) &= \frac{1}{r!}\omega_{h',k'}^n\left(\Delta M_{h' h,i}\right)^{r}
\end{align*}

\textbf{Step 2 (Non-vanishing coefficients). } \\ 
In this step, we will show that at least one coefficient in $\{\overline{Y}_{t_1,t_2}^{h, k}(i)/ \gL_{2n};  \left(\sum_{(h',k') \in V_{h,k}}\omega_{h',k'}^n - \omega_{h,k}^*\right)/ \gL_{2n}\}$ does not go to $0$ as $n \to \infty$. Suppose, for contradiction, that every such coefficient go to $0$. Summing over the coefficients of $\overline{f}_{\Gs}^{h,k}(x)$ for all $1 \le h \le H^*, 1 \le k \le K^*$, we obtain 
\begin{equation}\label{appendix3:term11}
    \dfrac{1}{\gL_{2n}
    }\sum_{h=1}^{H^*}\sum_{k=1}^{K^*} |\sum_{(h',k') \in V_{h,k}}\omega_{h',k'}^n - \omega_{h,k}^*| \to 0.
\end{equation}
For index $(h,k) \in [H^*]\times [K^*]$ such that $|V_{h,k}| = 1$, we take $(t_1,t_2) = (e_{uv},\mathbf{0}_{\overline{d}})$ and $ (t_1,t_2) = (\mathbf{0}_{\overline{d}\times \overline{d}},e_u)$, where $e_{uv}$ denotes the canonical basis matrix in $\mathbb{R}^{\overline{d}\times \overline{d}}$ with a $1$ in the $(u,v)$-th entry and $0$ elsewhere. Summing the limits $\overline{Y}_{t_1,t_2}^{h, k}(i)/\gL_2 \to 0$ for all $i \in [N^*]$, we obtain 
\begin{equation}\label{appendix3:term12}
    \dfrac{1}{\gL_2}\sum_{h=1}^{H^*}\sum_{k=1}^{K^*}\sum_{\substack{(h',k') \in V_{h,k}, \\ |V_{h,k}| = 1}} \omega_{h',k'}^{n}\Big(\sum_{i=1}^{N^*}\|\Delta M_{h' h,i}\|+\|\Delta a_{h' h,i,k' k}\|\Big)\to 0
\end{equation}
For index $(h,k) \in [H^*]\times [K^*]$ such that  $|V_{k | h}| > 1$, set $(t_1,t_2) = (2e_{uv},\mathbf{0}_{\overline{d}})$ and $ (t_1,t_2) = (\mathbf{0}_{\overline{d}\times \overline{d}},2e_u)$. Taking the summation with respect to the limits $\overline{Y}_{t_1,t_2}^{h, k}(i)/\gL_2 \to 0$ for all $i \in [N^*]$, we have     
\begin{equation}\label{appendix3:term13}
    \dfrac{1}{\gL_2}\sum_{h=1}^{H^*}\sum_{k=1}^{K^*}\sum_{\substack{(h',k') \in V_{h,k}, \\ |V_{h,k}| > 1}} \omega_{h',k'}^{n}\Big(\sum_{i=1}^{N^*}\|\Delta M_{h' h,i}\|^2+\|\Delta a_{h' h,i,k' k}\|^2\Big)\to 0
\end{equation}
Combining the results in Equations \eqref{appendix3:term11}, \eqref{appendix3:term12}, \eqref{appendix3:term13} yields $ 1 = \dfrac{\gL_2}{\gL_2} \to 0$, which is a contradiction. Consequently, at least one coefficient of the linearly independent functions $\gD_n(x)/\gL_2$ does not vanish as $n \to \infty$. 

\textbf{Step 3 (Application of the Fatou's lemma). } 
Let $\overline{m}_n$ be the maximum the absolute values among the coefficients of the linear independent functions in  $\gD_n(x)/\gL_{2n}$. Since at least one of these coefficients does not vanish, we have $1/\overline{m}_n \not \to 0$ as $n \to \infty$. Applyinh the Fatou's lemma, we obtain 
$$
0 = \lim_{n\to\infty}\dfrac{\|g_{G_n} - g_{G^*}\|_{L^2(\mu)}}{\overline{m}_n\gL_{2n}} \ge \int\liminf_{n \to \infty}\dfrac{|g_{G_n} - g_{G^*}|}{\overline{m}_n\gL_{2n}} d\mu(x) \ge 0.
$$
As a result, we achieve that 
$$
\liminf_{n \to \infty}\dfrac{|g_{G_n} - g_{G^*}|}{\overline{m}_n\gL_{2n}} = 0.
$$
When $n \to \infty$, we denote
\begin{align*}
    \dfrac{\sum_{(h',k') \in V_{h,k}}\omega_{h',k'}^n  - \omega_{h,k}^*}{\overline{m}_n\gL_{2n}} &\to \overline{\lambda}^{h,k}_{\omega};\quad  
\dfrac{\overline{Y}_{t_1,t_2}^{ h, k}(i)}{\overline{m}_n\gL_{2n}} \to \overline{\lambda}^{h, k}_{y,t_1,t_2,i}; \quad
\dfrac{\overline{Z}_r^{h' h}(i)}{\overline{m}_n\gL_{2n}} \to \overline{\lambda}^{h' h}_{z,r,i}; \quad \sum_{(h',k') \in V_{h,k}}\overline{\lambda}^{h' h}_{z,r,i} = \overline{\lambda}^{ h}_{z,r,i}
\end{align*} 

Moreover, since $\sum_{(h',k') \in V_{h,k}}\omega_{h',k'}^n \to \omega_{h,k}^*$ and $\overline{f}_{G_n}^{h' ,k'}(x) \to \overline{f}_{G^*}^{h,k}(x)$ as $n\to \infty$ for all $(h',k') \in V_{h,k}$, we have
\begin{align*}
    \overline{v}_n^{h',k'}(x;M) &\to \exp(x^\top Mx) \cdot \overline{f}_{G^*}^{h,k}(x):= \overline{v}^{h,k}_*(x;M)
\end{align*}
 Since $\overline{f}_{G^*}^{h,k}(x) = \dfrac{1}{E_*^h(x)}\sum_{i=1}^{N^*}\overline{u}(x;M_{h,i}^*, a_{h,i,k}^*)$, the limit $\displaystyle\liminf_{n \to \infty}\dfrac{|g_{G_n} - g_{G^*}|}{\overline{m}_n\gL_2} = 0$ can be expressed as 
\begin{align*}
&\dfrac{1}{E_*^h(x)}\sum_{h=1}^{H^*}\sum_{k=1}^{K^*}\sum_{i=1}^{N^*}\sum_{|t|=1}^{1 + \mathbf{1}_{|V_{h,k}| > 1}}\sum_{(t_1,t_2) \in \gJ_t}\overline{\lambda}^{h, k}_{y,t_1,t_2,i}\dfrac{\partial^{|t_1|+|t_2|} \overline{u}}{\partial M^{t_1} \partial a^{t_2}}(x;M^*_{h,i}, a^*_{h,i,k})\\
    &+\dfrac{1}{E_*^h(x)}\sum_{h=1}^{H^*}\sum_{k=1}^{K^*}\sum_{i=1}^{N^*}\sum_{|r|=1}^{1 + \mathbf{1}_{|V_{h,k}| > 1}}\overline{\lambda}^{ h}_{z,r,i}
    \dfrac{\partial^{|r|} \overline{v}_*^{h,k}}{\partial M^{r}}(x;M^*_{h,i})\\
    &+ \dfrac{1}{E_*^h(x)}\sum_{h=1}^{H^*}\sum_{k=1}^{K^*}\overline{\lambda}^{h,k}_{\omega} \sum_{i=1}^{N^*}\overline{u}(x;M_{h,i}^*, a_{h,i,k}^*)\\
    &=0
\end{align*}
for almost every $x$. Since the function $\varphi(\cdot)$ is type-1 strong identifiable, the set of functions 
$$
\left\{\dfrac{\partial^{|t_1|+|t_2|} \overline{u}}{\partial M^{t_1} \partial a^{t_2}}(x;M^*_{h,i}, a^*_{h,i,k}), \overline{u}(x;M_{h,i}^*,a_{h,i,k}^*), \dfrac{\partial^{|r|} \overline{v}_*^{h,k}}{\partial M^{r}}(x;M^*_{h,i})  \right\}
$$
is linearly independent for almost every $x$, for any $(t_1,t_2) \in \mathbb{N}^{\overline{d}\times \overline{d}}\times \mathbb{N}^{\overline{d}}, 1 \le |t_1| + |t_2| \le 2$, $1 \le |r| \le 2$.
Hence, all coefficients $\overline{\lambda}_{y,t_1,t_2,i}^{h,k}$ and $\overline{\lambda}_{z,r,i}^h$ must be zero. This yields to a contradiction, establishing the desired result.

\textbf{Proof of the global part (Equation~\eqref{theorem4:global}).}

Assume, by contradiction, that the equation~\eqref{theorem4:global} is not true. Then, there exists a sequence $G_n' \in \gG_{H^*,K,N^*}(\Theta)$ such that $\gL_2(G_n', G^*) > \varepsilon$ and $\|g_{G_n'} - g_{G^*}\|_{L^2(\mu)} \to 0$, as $n \to \infty$. Recall that $\Theta$ is a compact set, therefore, we can replace the sequence $G_n'$ by one of its subsequences that converge to a mixing measure $G' \in \gG_{H^*,K,N^*}(\Theta)$. Since $\gL_2(G_n', G^*) > \varepsilon$, we deduce that $\gL_2(G', G^*) > \varepsilon$. 

Next, by invoking the Fatou's lemma, we have that 
\begin{equation}
    0 = \lim_{n\to \infty}\|g_{G'_n} - g_{G^*}\|_{L^2(u)}^2 \ge \int \liminf_{n\to \infty} \left|g_{G'_n}(x) - g_{G^*}(x)\right|^2\mathrm{d}\mu(x)
\end{equation}
Consequently, we obtain that $g_{G'}(x) = g_{G^*}(x)$ for almost every $x$. Then, we have 
$$
\sum_{h=1}^{H^*}\sum_{k=1}^{K}\omega'_{h,k}\sum_{i=1}^{N^*}
    \frac{\exp(x^{\top}M'_{h,i}x)}
    {\sum_{j=1}^{N^*}\exp(x^{\top}M'_{h,j}x)} \cdot \varphi\left((a'_{h,i,k})^{\top}x\right) = \sum_{h=1}^{H^*}\sum_{k=1}^{K^*}\omega^*_{h,k}\sum_{i=1}^{N^*}
    \frac{\exp(x^{\top}M^*_{h,i}x)}
    {\sum_{j=1}^{N^*}\exp(x^{\top}M^*_{h,j}x)} \cdot  \varphi\left((a^*_{h,i,k})^{\top}x\right)
$$
Since the function $\varphi(\cdot)$ is type-$1$ strong identifiable, the set of functions $\varphi(a^\top x)$ is linearly independent with distinct parameters $a$. Therefore, we deduce that for each $i \in [N^*]$ and $(h,k) \in [H^*]\times [K^*]$, there exists a set $V_{i,h,k} = \Big\{(i_1,h_1,k_1) \in [N^*]\times [H^*] \times [K]: \varphi((a'_{h_1,i_1,k_1})^\top x) = \varphi((a_{h,i,k}^*)^\top x)\Big\}$. Without loss of generality, we assume that $\varphi((a'_{h_1,i,k_1})^\top x) = \varphi((a^*_{h,i,k})^\top x)$ for all $i \in [N^*]$ and $(h_1,k_1) \in V_{h,k}$ for all $(h,k)\in [H^*] \times [K^*]$. Consequently, we have $a'_{h_1,i,k_1} = a_{h,i,k}^*$ (since the function $\varphi(\cdot)$ is injective) and 
$$
\sum_{(h_1,k_1) \in V_{h,k}} \omega'_{h_1,k_1}\dfrac{\exp(x^\top M_{h_1,i}'x)}{\sum_{j=1}^{N^*}\exp(x^\top M_{h_1,j}'x)} = \omega^*_{h,k}\dfrac{\exp(x^\top M_{h_1,i}^*x)}{\sum_{j=1}^{N^*}\exp(x^\top M_{h_1,j}^*x)},
$$
for all $(i,h,k) \in [N^*]\times [H^*] \times [K^*]$. 
By summing the above equation over $i \in [N^*]$, we deduce that 
\begin{equation}\label{appendix:omega1}
 \sum_{(h_1,k_1) \in V_{h,k}} \omega'_{h_1,k_1} = \omega_{h,k}^*,\forall \, (h,k) \in [H^*] \times [K^*].   
\end{equation}
Consequently, we have 
$
\dfrac{\exp(x^\top M_{h_1,i}'x)}{\sum_{j=1}^{N^*}\exp(x^\top M_{h_1,j}'x)} =\dfrac{\exp(x^\top M_{h_1,i}^*x)}{\sum_{j=1}^{N^*}\exp(x^\top M_{h_1,j}^*x)}, \forall \, (h_1,k_1) \in V_{h,k}.
$
Since the set of matrices $\{M_{h,i}\}_{i=1}^{N^*}$ is invariant to translations, we deduce that 
\begin{equation}\label{appendix:m1}
    M'_{h_1,i} = M_{h,i}^*, \forall \, i \in [N^*], (h_1,k_1) \in V_{h,k}.
\end{equation}
Combining \eqref{appendix:omega1} with \eqref{appendix:m1}, we obtain that $\gL_2(G', G^*) = 0$, which yields a contradiction. This completes the proof. 
\subsection{Proof of Theorem~\ref{theorem:nonlinear_at_SDPA}}\label{appendix:nonlinear_at_SDPA}
Our main goal is to demonstrate that
\begin{equation}\label{eq:main_goal_in_theorem3}
    \inf_{G \in \gG_{H^*,K,N^*}(\Theta)}\|g_G - g_{G^*}\|_{L^2(\mu)}/ \gL_2(G,G^*) > 0
\end{equation}
By following the same line of reasoning as in Theorem~\ref{theorem:nonlinear_at_V}, we decompose the inequality \eqref{eq:main_goal_in_theorem3} into a global and a local component. The local part is to establish equation~\ref{eq:main_goal_in_theorem3} when $\gL_2(G,\Gs)$ is small enough as follows
\begin{equation}\label{theorem3:local}
    \lim_{\varepsilon \to 0} \inf_{G \in \gG_{H^*,K,N^*}(\Theta): \gL_2(G,G^*) \le \varepsilon} \|g_G - g_{G^*}\|_{L^2(\mu)}/ \gL_2(G,G^*) > 0. 
\end{equation}
By contrast, the global part of the proof focuses on the behavior of this property in the regime where $\gL_2(G,\Gs)$ becomes sufficiently large.
\begin{equation}\label{theorem3:global}
    \inf_{G \in \gG_{H^*,K,N^*}(\Theta): \gL_2(G,G^*) > \varepsilon} \|g_G - g_{G^*}\|_{L^2(\mu)}/ \gL_2(G,G^*) > 0.
\end{equation}
\textbf{Proof of the local part (Equation~\eqref{theorem3:local}).}

Assume that this local inequality does not hold. Consequently, we have a sequence of mixing measures $(G_n) \in \gG_{H^*,K,N^*}(\Theta)$ such that $\gL_2(G_n,G^*) \to 0$ and $\|g_{G_n} - g_{G^*}\|_{L^2(\mu)}/ \gL_2(G_n,G^*) \to 0$ when $n \to \infty$.

\textbf{Step 1 (Decomposing the discrepancy between regression functions).} First of all, we recall that 
$$
f_{G_n}^{h',k'}(x) = \sum_{i=1}^{N^*}\frac{\exp(x^{\top}M^n_{h',i}x)}{\sum_{j=1}^{N^*}\exp(x^{\top}M^n_{h',j}x)}\cdot (a^n_{h',i,k'})^{\top}x;
$$
$$
f^{h,k}_{G^*}(x) = \sum_{i=1}^{N^*}\frac{\exp(x^{\top}M^*_{h,i}x)}{\sum_{j=1}^{N^*}\exp(x^{\top}M^*_{h,j}x)}\cdot (a^*_{h,i,k})^{\top}x.
$$
Now, we have $g_{G_n}(x) = \sum_{h'=1}^{H^*}\sum_{k'=1}^{K}\omega_{h',k'}^n\varphi\left(f^{h',k'}_{G_n}(x)\right)$, $g_{G^*}(x) = \sum_{h=1}^{H^*}\sum_{k=1}^{K^*}\omega_{h,k}^*\varphi\left(f^{h,k}_{G^*}(x)\right)$.  As a result, the Voronoi loss function between $G_n$ and $G^*$ becomes
\begin{align*}
    \gL_{2n}:= \gL_2(G_n,\Gs)&=\sum_{h=1}^{H^*}\sum_{k =1}^{K^*}\left|\sum_{(h',k') \in V_{h,k}}\omega_{h',k'}^{n} - \omega_{h,k}\right|\\
    &+\sum_{h=1}^{H^*}\sum_{k=1}^{K^*}\sum_{\substack{(h',k') \in V_{h,k}, \\ |V_{h,k}| = 1}} \omega_{h',k'}^{n}\Big(\sum_{i=1}^{N^*}\|\Delta M_{h' h,i}\|+\|\Delta a_{h' h,i,k' k}\|\Big) \\ 
    &+\sum_{h=1}^{H^*}\sum_{k=1}^{K^*}\sum_{\substack{(h',k') \in V_{h,k}, \\ |V_{h,k}| > 1}} \omega_{h',k'}^{n}\Big(\sum_{i=1}^{N^*}\|\Delta M_{h' h,i}\|^2+\|\Delta a_{h' h,i,k' k}\|^2\Big),
\end{align*}
where $\Delta M_{h' h, i} = M_{h',i} - M_{h,i}$ and $\Delta a_{h' h, i , k' k} = a_{h',i,k'} - a_{h,i,k}$ for any $i \in [N^*]$.
Next, we decompose the following difference as follows
\begin{align}
    \gD_n(x) &:= g_{G_n}(x) - g_{G^*}(x) \nonumber\\
    & = \sum_{h=1}^{H^*}\sum_{k=1}^{K^*}\sum_{(h',k') \in V_{h,k}}\omega_{h',k'}^n\varphi\left(f^{h',k'}_{G_n}(x)\right) - \sum_{h=1}^{H^*}\sum_{k=1}^{K^*}\omega_{h,k}^*\varphi\left(f^{h,k}_{G^*}(x)\right) \nonumber\\
    &= \sum_{h=1}^{H^*}\sum_{k=1}^{K^*} \sum_{(h',k') \in V_{h,k}}\omega_{h',k'}^n\left[\varphi\left(f^{h',k'}_{G_n}(x)\right) - \varphi\left(f^{h,k}_{G^*}(x)\right)\right] \nonumber\\
    &\qquad + \sum_{h=1}^{H^*}\sum_{k=1}^{K^*}\left(\sum_{(h',k') \in V_{h,k}}\omega_{h',k'}^n - \omega_{h,k}^*\right) \varphi\left(f^{h,k}_{G^*}(x)\right)\nonumber\\
    &= \sum_{h=1}^{H^*}\sum_{k=1}^{K^*} \sum_{(h',k') \in V_{h,k}}\omega_{h',k'}^n\gD_{n}^{h' h, k' k} + \sum_{h=1}^{H^*}\sum_{k=1}^{K^*}\left(\sum_{(h',k') \in V_{h,k}}\omega_{h',k'}^n - \omega_{h,k}^*\right) \varphi\left(f^{h,k}_{G^*}(x)\right), \label{appendixSDPA:first-level decompose}
\end{align}
where $\gD^{h' h, k' k}_n(x) := \varphi\left(f^{h',k'}_{G_n}(x)\right) - \varphi\left(f^{h,k}_{G^*}(x)\right)$. Next, we denote
\begin{align*}
    E^{h'}_n(x) &= \sum_{j=1}^{N}\exp(x^{\top}M^n_{h',j}x); \\
    E^h_*(x) &= \sum_{j=1}^{N^*}\exp(x^{\top}M^*_{h,j}x); \\
    u(x;M,a) &= \exp(x^\top Mx)\cdot a^\top x; \\
    v_n^{h',k'}(x;M) &= \exp(x^\top Mx)\cdot f^{h',k'}_{G_n}(x),
\end{align*}
for all $h,h' \in [H^*]$ and $k' \in [K]$. We now decompose each term $\gD_n^{h' h, k' k}(x)$ as follows
\begin{align*}
    \gD_n^{h' h, k' k}(x) =\,& \varphi\left(\sum_{i=1}^{N^*}\dfrac{1}{E_n^{h'}(x)}u(x;M_{h',i}^n, a^n_{h',i,k'})\right) - \varphi\left(\sum_{i=1}^{N^*}\dfrac{1}{E_*^h(x)}u(x;M_{h,i}^*, a^*_{h,i,k})\right) \\ 
    =\,& \varphi\left(\dfrac{1}{E^h_*(x)}\sum_{i=1}^{N^*}u(x;M^n_{h',i}, a^n_{h',i,k'})\right) - \varphi\left(\dfrac{1}{E^h_*(x)}\sum_{i=1}^{N^*}u(x;M^*_{h,i}, a^*_{h,i,k})\right) \\
    & - \left[\varphi\left(\dfrac{1}{E^h_*(x)}\sum_{i=1}^{N^*}v_n^{h', k'}(x;M^n_{h',i})\right) - \varphi\left(\dfrac{1}{E^h_*(x)}\sum_{i=1}^{N^*}v_n^{h',k'}(x;M^*_{h,i})\right)\right] \\
    :=\,& A_n^{h' h, k' k}(x) - B_n^{h' h, k' k}(x)
\end{align*}
We now decompose each term $A_n^{h' h, k' k}(x)$ and $ B_n^{h' h, k' k}(x)$ using Taylor expansion. In particular, by means of second-order Taylor expansion, we have
\begin{align*}
    &A_n^{h' h, k' k}(x) \\ 
    =\,& \varphi\left(\dfrac{1}{E^h_*(x)}\sum_{i=1}^{N^*}u(x;M^n_{h',i}, a^n_{h',i,k'})\right) - \varphi\left(\dfrac{1}{E^h_*(x)}\sum_{i=1}^{N^*}u(x;M^*_{h,i}, a^*_{h,i,k})\right) \\
    =\,& \varphi'\left(\dfrac{1}{E^h_*(x)}\sum_{i=1}^{N^*}u(x;M^*_{h,i}, a^*_{h,i,k})\right) \\
    &\times  
    \Bigg[\dfrac{1}{E^h_*(x)}\sum_{i=1}^{N^*}\sum_{1 \le |t|\le 2} \dfrac{1}{t!} \left(\Delta M_{h' h,i}\right)^{t_1}\left(\Delta a_{h' h,i,k' k}\right)^{t_2}
    \dfrac{\partial^{|t_1|+|t_2|} u}{\partial M^{t_1} \partial a^{t_2}}(x;M^*_{h,i}, a^*_{h,i,k}) \Bigg]\\
    & + \dfrac{1}{2}\varphi''\left(\dfrac{1}{E^h_*(x)}\sum_{i=1}^{N^*}u(x;M^*_{h,i}, a^*_{h,i,k})\right)\\
    & \times \Bigg[\dfrac{1}{E^h_*(x)}\sum_{i=1}^{N^*}\sum_{ |t| = 1} \dfrac{1}{t!} \left(\Delta M_{h' h,i}\right)^{t_1}\left(\Delta a_{h' h,i,k' k}\right)^{t_2}
    \dfrac{\partial^{|t_1|+|t_2|} u}{\partial M^{t_1} \partial a^{t_2}}(x;M^*_{h,i}, a^*_{h,i,k})\Bigg]^2 + \gR_{n,1}^{h' h, k' k}(x),
\end{align*}
where $t = (t_1, t_2) \in \mathbb{N}^{\overline{d} \times \overline{d}}\times \mathbb{N}^{\overline{d}}$ and  $\gR_{n,1}^{h' h, k' k}(x)$ are Taylor remainders such that $\gR_{n,1}^{h' h, k' k}(x) / \gL_{2n} \to 0$ when $n \to \infty$. Recall that the Voronoi loss function $\gL_{2n}$ is defined as the sum of first-order terms $\|\Delta M_{h' h, i}\|$ and $ \|\Delta a_{h' h, i, k' k}\|$ for singleton Voronoi cells $(|V_{h,k}|=1)$, and second-order terms $\|\Delta M_{h' h,i}\|^2$ and $ \|\Delta a_{h' h,i,k' k }\|^2$ for Voronoi cells with $|V_{h,k}|>1$. Consequently, we can provide a reduced representation of $A_n^{h'h,k'k}$ by focusing on the dominant terms as follows
\begin{align*}
    A_n^{h' h, k' k}(x)
    =\,& \sum_{|V_{h,k}| = 1}\sum_{i=1}^{N^*}\sum_{ |t| = 1} \dfrac{1}{t!} \left(\Delta M_{h' h,i}\right)^{t_1}\left(\Delta a_{h' h,i,k' k}\right)^{t_2}
    \dfrac{\partial^{|t_1|+|t_2|} u}{\partial M^{t_1} \partial a^{t_2}}(x;M^*_{h,i}, a^*_{h,i,k})\varphi_{1n}^{h,k}(x)\\
&+   \sum_{|V_{h,k}| > 1}\sum_{i=1}^{N^*}\sum_{1 \le |t|\le 2} \dfrac{1}{t!} \left(\Delta M_{h' h,i}\right)^{t_1}\left(\Delta a_{h' h,i,k' k}\right)^{t_2}
    \dfrac{\partial^{|t_1|+|t_2|} u}{\partial M^{t_1} \partial a^{t_2}}(x;M^*_{h,i}, a^*_{h,i,k}) \varphi_{1n}^{h,k}(x)\\
&+ \dfrac{1}{2}\sum_{|V_{h,k}| > 1}\Bigg[\sum_{i=1}^{N^*}\sum_{ |t| = 1} \dfrac{1}{t!} \left(\Delta M_{h' h,i}\right)^{t_1}\left(\Delta a_{h' h,i,k' k}\right)^{t_2}
    \dfrac{\partial^{|t_1|+|t_2|} u}{\partial M^{t_1} \partial a^{t_2}}(x;M^*_{h,i}, a^*_{h,i,k})\Bigg]^2\varphi_{2n}^{h,k}(x)\\
    &+ \gR_{n,2}^{h' h, k' k}(x),
\end{align*}
where $\gR_{n,2}^{h' h, k' k}(x)$ is the remainder such that $\gR^{h' h, k' k}_{n,2}(x) / \gL_{2n} \to 0$ when $n \to \infty$ and 
\begin{align*}
    \varphi_{1n}^{h,k}(x) &:= \dfrac{1}{E^h_*(x)}\varphi'\left(\dfrac{1}{E^h_*(x)}\sum_{i=1}^{N^*}u(x;M^*_{h,i}, a^*_{h,i,k})\right); \\
    \varphi_{2n}^{h,k}(x) &:= \dfrac{1}{E^h_*(x)^2} \varphi''\left(\dfrac{1}{E^h_*(x)}\sum_{i=1}^{N^*}u(x;M^*_{h,i}, a^*_{h,i,k})\right)
\end{align*}
Next, we represent $A_n^{h' h, k' k}(x)$ as follows 

\begin{align*}
A_n^{h' h, k' k}(x)=\,& \sum_{|V_{h,k}| = 1}\sum_{i=1}^{N^*}\sum_{ |t| = 1} \dfrac{1}{t!} \left(\Delta M_{h' h,i}\right)^{t_1}\left(\Delta a_{h' h,i,k' k}\right)^{t_2}
    \dfrac{\partial^{|t_1|+|t_2|} u}{\partial M^{t_1} \partial a^{t_2}}(x;M^*_{h,i}, a^*_{h,i,k})\varphi_{1n}^{h,k}(x)\\
&+   \sum_{|V_{h,k}| > 1}\sum_{i=1}^{N^*}\sum_{1 \le |t|\le 2} \dfrac{1}{t!} \left(\Delta M_{h' h,i}\right)^{t_1}\left(\Delta a_{h' h,i,k' k}\right)^{t_2}
    \dfrac{\partial^{|t_1|+|t_2|} u}{\partial M^{t_1} \partial a^{t_2}}(x;M^*_{h,i}, a^*_{h,i,k}) \varphi_{1n}^{h,k}(x)\\ 
& + \dfrac{1}{2}\sum_{|V_{h,k}| > 1}\sum_{i_1=1}^{N^*}\sum_{i_2=1}^{N^*}\sum_{|t_1| = 1;|t_2| =1 }  \left(\Delta M_{h' h,i_1}\right)^{t_{1,1}}\left(\Delta M_{h' h,i_2}\right)^{t_{2,1}}\left(\Delta a_{h' h,i_1,k' k}\right)^{t_{1,2}}\left(\Delta a_{h' h,i_2,k' k}\right)^{t_{2,2}}\\
& \qquad \times
    \dfrac{\partial^{|t_{1,1}| + |t_{1,2}|} u}{\partial M^{t_{1,1}} \partial a^{t_{1,2}}}(x;M^*_{h,i_1}, a^*_{h,i_1,k})\cdot \dfrac{\partial^{|t_{2,1}| + |t_{2,2}|} u}{\partial M^{t_{2,1}} \partial a^{t_{2,2}}}(x;M^*_{h,i_2}, a^*_{h,i_2,k})\varphi_{2n}^{h,k}(x) + \gR_{n,2}^{h' h,k' k}(x).
\end{align*}
Similarly, by applying the same arguments for decomposing $A_n^{h' h, k' k}(x)$, we can express $B_n^{h' h, k' k}(x)$ as 
\begin{align*}
    &B_n^{h' h, k' k}(x) \\
=\,& 
\sum_{|V_{h,k}| = 1}\sum_{i=1}^{N^*}\sum_{ |r| = 1} \dfrac{1}{r!} \left(\Delta M_{h' h,i}\right)^{r}
    \dfrac{\partial^{|r|} v_n^{h',k'}}{\partial M^{r}}(x;M^*_{h,i})\varphi_{3n}^{h',k'}(x) \\
    &+   \sum_{|V_{h,k}| > 1}\sum_{i=1}^{N^*} \sum_{1 \le |r|\le 2} \dfrac{1}{r!} \left(\Delta M_{h' h,i}\right)^{r}
    \dfrac{\partial^{|r|} v_n^{h',k'}}{\partial M^{r}}(x;M^*_{h,i}) \varphi_{3n}^{h',k'}(x)\\
    &+ \dfrac{1}{2}\sum_{|V_{h,k}| > 1}\sum_{i_1=1}^{N^*}\sum_{i_2=1}^{N^*}\sum_{\substack{|r_1| = 1,|r_2| =1 }}  \left(\Delta M_{h' h,i_1}\right)^{r_1}\left(\Delta M_{h' h,i_2}\right)^{r_2}
    \dfrac{\partial^{|r_1|} v_n^{h',k'}}{\partial M^{r_1}}(x;M^*_{h,i_1}) \dfrac{\partial^{|r_2|} v_n^{h',k'}}{\partial M^{r_2} }(x;M^*_{h,i_2})\varphi_{4n}^{h',k'}(x) \\
    &+\gR_{n,3}^{h' h,k' k}(x) ,
\end{align*}
where $r,r_1,r_2 \in \mathbb{N}^{\overline{d}\times \overline{d}}$ and $\gR_{n,3}^{h' h, k' k}(x)$ is Taylor remainder such that $\gR_{n,3}^{h' h, k' k}(x)/\gL_{2n} \to 0$ as $n \to \infty$ and 
\begin{align*}
    \varphi_{3n}^{h',k'}(x) &:= \dfrac{1}{E^h_*(x)}\varphi'\left(\dfrac{1}{E^h_*(x)}\sum_{i=1}^{N^*}v_n^{h',k'}(x;M^*_{h,i})\right) \\
    \varphi_{4n}^{h',k'}(x) &:= \dfrac{1}{E^h_*(x)^2} \varphi''\left(\dfrac{1}{E^h_*(x)}\sum_{i=1}^{N^*}v_n^{h',k'}(x;M^*_{h,i})\right)
\end{align*}
Next, let us denote $\gJ_t := \{(t_1,t_2) \in \mathbb{N}^{\overline{d}\times \overline{d}}\times \mathbb{N}^{\overline{d}}: |t_1| + |t_2| = |t|\}$. Then, putting all the above results together, the function $\gD_n^{h' h,k' k}(x)$ can be represented as 
\begin{align}
    &\gD_n^{h' h, k' k}(x) \nonumber\\
    &= \sum_{i=1}^{N^*}\sum_{|t|=1}^{1 + \mathbf{1}_{|V_{h,k}| > 1}}\sum_{(t_1,t_2) \in \gJ_t}\overline{Y}_{t_1,t_2}^{h' h, k' k}(i)\dfrac{\partial^{|t_1|+|t_2|} u}{\partial M^{t_1} \partial a^{t_2}}(x;M_{h,i}^*, a^*_{h,i,k})\varphi_{1n}^{h,k}(x)\nonumber\\
    &\qquad - \sum_{i=1}^{N^*}\sum_{|r|=1}^{1 + \mathbf{1}_{\{|V_{h,k}| > 1\}}}\overline{Z}_{r}^{h' h}(i)\dfrac{\partial^{|r|} v_n^{h',k'}}{\partial M^{r}}(x;M_{h,i}^*)\varphi_{3n}^{h',k'}(x) \nonumber\\ 
    &\qquad +\sum_{\substack{|V_{h,k}| > 1, \\ i_1 \in [N^*], i_2 \in [N^*]}}\;\sum_{\substack{(t_{1,1}, t_{1,2}) \in \gJ_{1},\\ (t_{2,1}, t_{2,2}) \in \gJ_1}}\overline{P}^{h' h, k' k}_{
    \substack{
    t_{1,1},t_{1,2},\\t_{2,1}, t_{2,2}}}(i_1,i_2) \nonumber\\
    &\qquad \quad \times \dfrac{\partial^{|t_{1,1}| + |t_{1,2}|} u}{\partial M^{t_{1,1}} \partial a^{t_{1,2}}}(x;M^*_{h,i_1}, a^*_{h,i_1,k}) \dfrac{\partial^{|t_{2,1}| + |t_{2,2}|} u}{\partial M^{t_{2,1}} \partial a^{t_{2,2}}}(x;M^*_{h,i_2}, a^*_{h,i_2,k})\varphi_{2n}^{h,k}(x) \nonumber\\ 
    &\qquad - \sum_{\substack{|V_{h,k}| > 1, \\ i_1 \in [N^*], i_2 \in [N^*]}}\;\sum_{|r_1|=1, |r_2| = 1 }\overline{Q}_{r_1,r_2}^{h' h}(i_1,i_2)\dfrac{\partial^{|r_1|} v_n^{h',k'}}{\partial M^{r_1}}(x;M_{h,i_1}^*)\dfrac{\partial ^{|r_2|}v_n^{h',k'}}{\partial M^{r_2}}(x;M_{h,i_2}^*) 
    \varphi_{4n}^{h',k'}(x) \nonumber\\
    &\qquad + \gR_{n,4}^{h' h, k' k}(x), \label{appendixSDPA:second-level decompose}
\end{align}
where $\gR_{n,4}^{h' h, k' k}(x) := \gR_{n,2}^{h' h, k' k}(x) - \gR_{n,3}^{h' h, k' k}(x)$ and we denote
\begin{align*}
    \overline{Y}_{t_1,t_2}^{h' h, k' k}(i) &= \dfrac{1}{t!}\left(\Delta M_{h' h,i}\right)^{t_1}\left(\Delta a_{h' h,i,k' k}\right)^{t_2}, \\
    \overline{Z}_{r}^{h' h}(i) &= \dfrac{1}{r!}\left(\Delta M_{h' h,i}\right)^{r} \\
    \overline{P}_{\substack{
    t_{1,1},t_{1,2},\\t_{2,1}, t_{2,2}}}^{h' h, k' k}(i_1,i_2) &= \dfrac{1}{2}\left(\Delta M_{h' h,i_1}\right)^{t_{1,1}}\left(\Delta M_{h' h,i_2}\right)^{t_{2,1}}\left(\Delta a_{h' h,i_1,k' k}\right)^{t_{1,2}}\left(\Delta a_{h' h,i_2,k' k}\right)^{t_{2,2}}\\
    \overline{Q}_{r_1,r_2}^{h' h}(i_1,i_2) &= \dfrac{1}{2}\left(\Delta M_{h' h,i_1}\right)^{r_1}\left(\Delta M_{h' h,i_2}\right)^{r_2}
\end{align*}
\textbf{Step 2 (Non-vanishing coefficients). } 
It is worth noting that $\dfrac{\partial^{|t_1|+|t_2|}}{\partial M^{t_1}\partial a^{t_2}}(X;M_{h,i}^*, a_{h,i,k}^*) \equiv 0$ if $(t_1,t_2) \in (\mathbf{0}_{\overline{d} \times \overline{d}}, e_{u} + e_v)$ for all $u,v \in [\overline{d}]$, where $e_u$ denote the $u$-th  canonical basis vector in $\mathbb{R}^{\overline{d}}$ with a $1$ in the $u$-th component and $0$ elsewhere. Therefore,
the equations \eqref{appendixSDPA:first-level decompose} and \eqref{appendixSDPA:second-level decompose} show that the ratio $\gD_n(x)/ \gL_{2n}(G_n,G^*)$ can be decomposed as a linear combination of the following functions 
\begin{align*}
    &\Bigg\{\varphi\left(f_{\Gs}^{h,k}(x)\right),\dfrac{\partial^{|t_1|+{|t_2|}} u}{\partial M^{t_1} \partial a^{t_2}}(x;M_{h,i}^*, a^*_{h,i,k})\varphi_{1n}^{h,k}(x), \dfrac{\partial^{|r|} v_n^{h',k'}}{\partial M^{r}}(x;M_{h,i}^*)\varphi_{3n}^{h,k'}(x) \\ 
    & \dfrac{\partial^{|t_{1,1}| + |t_{1,2}|} u}{\partial M^{t_{1,1}} \partial a^{t_{1,2}}}(x;M^*_{h,i_1}, a^*_{h,i_1,k})\cdot \dfrac{\partial^{|t_{2,1}| + |t_{2,2}|} u}{\partial M^{t_{2,1}} \partial a^{t_{2,2}}}(x;M^*_{h,i_2}, a^*_{h,i_2,k})\varphi_{2n}^{h,k}(x), \\ 
    & \dfrac{\partial^{|r_1|} v_n^{h',k'}}{\partial M^{r_1}}(x;M_{h,i_1}^*)\dfrac{\partial^{|r_2|} v_n^{h',k'}}{\partial M^{r_2}}(x;M_{h,i_2}^*) 
    \varphi_{4n}^{h,k'}(x)\Bigg\},
\end{align*}
for any $h \in [H^*], k \in [K^*],(h',k') \in V_{h,k},(t_1,t_2) \in \gJ_1, \gJ_2 / \{(\mathbf{0}_{\overline{d} \times \overline{d}}, e_{u} + e_v): u,v \in [\overline{d}]\}$, $(t_{1,1}, t_{1,2}) \in \gJ_1, (t_{2,1},t_{2,2}) \in \gJ_1, |r| \in \{1,2\},|r_1|=1,$ and $|r_2| = 1$.  
In this step, we will prove that at least one coefficient of these functions does not go to $0$ as $n \to \infty$. Assume by contrary that all these coefficients of these linear independent functions go to $0$. Taking the summation with respect to the coefficient of $\varphi\left(f_{\Gs}^{h,k}(x)\right)$ for all $1 \le h \le H^*, 1 \le k \le K^*$, we have 
\begin{equation}\label{appendix3:term1}
    \dfrac{1}{\gL_2n}\sum_{h=1}^{H^*}\sum_{k=1}^{K^*} |\sum_{(h',k') \in V_{h,k}}\omega_{h',k'}^n - \omega_{h,k}^*| \to 0.
\end{equation}
For index $(h,k) \in [H^*]\times [K^*]$ such that $|V_{h,k}| = 1$, taking $(t_1,t_2) = (e_{uv},\mathbf{0}_{\overline{d}})$ and $ (t_1,t_2) = (\mathbf{0}_{\overline{d}\times \overline{d}},e_u)$, in which $e_{uv}$ denotes the canonical basis matrix in $\mathbb{R}^{\overline{d}\times \overline{d}}$ with a $1$ in the $(u,v)$-th entry and $0$ elsewhere. Taking the summation with respect to the limits $\overline{Y}_{t_1,t_2}^{h' h, k' k}(i)/\gL_2 \to 0$ for all $i \in [N^*]$, we have 
\begin{equation}\label{appendix3:term2}
    \dfrac{1}{\gL_2}\sum_{h=1}^{H^*}\sum_{k=1}^{K^*}\sum_{\substack{(h',k') \in V_{h,k}, \\ |V_{h,k}| = 1}} \omega_{h',k'}^{n}\Big(\sum_{i=1}^{N^*}\|\Delta M_{h' h,i}\|+\|\Delta a_{h' h,i,k' k}\|\Big)\to 0
\end{equation}
For index $(h,k) \in [H^*]\times [K^*]$ such that  $|V_{k | h}| > 1$, taking $(t_{1,1},t_{1,2}) = (e_{uv}, \mathbf{0}_{\overline{d}})$ and $(t_{2,1},t_{2,2}) = (e_{uv}, \mathbf{0}_{\overline{d}})$ for all $u \in [\overline{d}]$. Taking the summation with respect to the limits $\overline{P}_{\substack{
t_{1,1},t_{1,2},\\ t_{2,1}, t_{2,2}
}}^{h' h, k' k}(i_1,i_1)/\gL_2 \to 0$ for all $i_1 \in [N^*]$, we have     
\begin{equation}\label{appendix3:term3}
    \dfrac{1}{\gL_2}\sum_{h=1}^{H^*}\sum_{k=1}^{K^*}\sum_{\substack{(h',k') \in V_{h,k}, \\ |V_{h,k}| > 1}} \omega_{h',k'}^{n}\sum_{i=1}^{N^*}\|\Delta M_{h' h,i}\|^2 \to 0
\end{equation}
Similarly, taking $(t_{1,1},t_{1,2}) = (\mathbf{0}_{\overline{d}\times \overline{d}}, e_u)$ and $(t_{2,1},t_{2,2}) = (\mathbf{0}_{\overline{d}\times \overline{d}}, e_u)$. Taking the summation with respect to the limits $\overline{P}_{\substack{
t_{1,1},t_{1,2},\\ t_{2,1}, t_{2,2}
}}^{h' h, k' k}(i_1,i_1)/\gL_2 \to 0$ for all $i_1 \in [N^*]$, we have 
\begin{equation}\label{appendix3:term4}
    \dfrac{1}{\gL_2}\sum_{h=1}^{H^*}\sum_{k=1}^{K^*}\sum_{\substack{(h',k') \in V_{h,k}, \\ |V_{h,k}| > 1}} \omega_{h',k'}^{n}\sum_{i=1}^{N^*}\|\Delta a_{h' h,i,k' k}\|^2 \to 0
\end{equation}
By putting all the results in Equations \eqref{appendix3:term1}, \eqref{appendix3:term2}, \eqref{appendix3:term3}, \eqref{appendix3:term4} together, we achieve that $ 1 = \dfrac{\gL_2}{\gL_2} \to 0$, which is a contradiction. As a result, at least one of the coefficients of the linearly independent functions $\gD_n(x)/\gL_2$ does not vanish as $n \to \infty$. 

\textbf{Step 3 (Application of the Fatou's lemma). } Denote $\overline{m}_n$ as the maximum of the absolute values of the coefficients of the linear independent functions in  $\gD_n(x)/\gL_{2n}$. Given that at least one of these coefficients does not vanish, we have $1/\overline{m}_n \not \to 0$ as $n \to \infty$. By invoking the Fatou's lemma, we have that 
$$
0 = \lim_{n\to\infty}\dfrac{\|g_{G_n} - g_{G^*}\|_{L^2(\mu)}}{\overline{m}_n\gL_{2n}} \ge \int\liminf_{n \to \infty}\dfrac{|g_{G_n} - g_{G^*}|}{\overline{m}_n\gL_{2n}} d\mu(x) \ge 0.
$$
As a consequence, we achieve that 
$$
\liminf_{n \to \infty}\dfrac{|g_{G_n} - g_{G^*}|}{\overline{m}_n\gL_{2n}} = 0.
$$
When $n \to \infty$,  we denote
\begin{align*}
    \dfrac{\sum_{(h',k') \in V_{h,k}}\omega_{h',k'}^n  - \omega_{h,k}^*}{\overline{m}_n\gL_{2n}} &\to \overline{\lambda}^{h,k}_{\omega}; \\
\dfrac{\overline{Y}_{t_1,t_2}^{h' h, k' k}(i)}{\overline{m}_n\gL_{2n}} &\to \overline{\lambda}^{h' h, k' k}_{y,t_1,t_2,i}; \\
\dfrac{\overline{Z}_r^{h' h}(i)}{\overline{m}_n\gL_{2n}} &\to \overline{\lambda}^{h' h}_{z,r,i};\\
\dfrac{\overline{P}_{\substack{t_{1,1},t_{1,2},\\ t_{2,1},t_{2,2}}}^{h' h, k' k}(i_1,i_2)}{\overline{m}_n\gL_{2n}} &\to \overline{\lambda}_{p,\substack{t_{1,1},t_{1,2},\\t_{2,1},t_{2,2}},i_1,i_2}^{h' h, k' k}; \\
\dfrac{\overline{Q}_{r_1,r_2}^{h' h}(i_1,i_2)}{\overline{m}_n\gL_{2n}} &\to \overline{\lambda}_{q,r_1,r_2,i_1,i_2}^{h' h},
\end{align*}
for any $h \in [H^*], k \in [K^*],(h',k') \in V_{h,k},(t_1,t_2) \in \gJ_1, \gJ_2 / \{(\mathbf{0}_{\overline{d} \times \overline{d}}, e_{u} + e_v): u,v \in [\overline{d}]\}$, $(t_{1,1}, t_{1,2}) \in \gJ_1, (t_{2,1},t_{2,2}) \in \gJ_1, |r|\in \{1,2\},|r_1|=1,$ and $|r_2| = 1$.

Now, since $\sum_{(h',k') \in V_{h,k}}\omega_{h',k'}^n \to \omega_{h,k}^*$ and $f_{G_n}^{h' ,k'}(x) \to f_{G^*}^{h,k}(x)$ as $n\to \infty$ for all $(h',k') \in V_{h,k}$, we have
\begin{align*}
    v_n^{h',k'}(x;M) &\to \exp(x^\top Mx) \cdot f_{G^*}^{h,k}(x):= v^{h,k}_*(x;M); \\
    \varphi_{1n}^{h,k}(x) &\to \dfrac{1}{E^h_*(x)}\varphi'\left(f_{G^*}^{h,k}(x)\right); \\ 
    \varphi_{2n}^{h,k}(x) &\to \dfrac{1}{E_*^h(x)^2}\varphi''\left(f_{G^*}^{h,k}(x)\right); \\
    \varphi_{3n}^{h',k'}(x) &\to \dfrac{1}{E^h_*(x)}\varphi'\left(f_{G^*}^{h,k}(x)\right); \\ 
    \varphi_{4n}^{h',k'}(x) &\to \dfrac{1}{E_*^h(x)^2}\varphi''\left(f_{G^*}^{h,k}(x)\right).
\end{align*}
 Given the above notation, the limit $\displaystyle\liminf_{n \to \infty}\dfrac{|g_{G_n} - g_{G^*}|}{\overline{m}_n\gL_2}$ can be expressed as 
\begin{align}
&\sum_{h=1}^{H^*}\sum_{k=1}^{K^*}\overline{\lambda}_{\omega}^{ h, k}\varphi\left(f_{G^*}^{h,k}(x)\right) \nonumber\\
&+ \sum_{h=1}^{H^*}\sum_{k=1}^{K^*}\sum_{(h',k') \in V_{h,k}} \omega_{h',k'}^{n}\cdot \Bigg[ \sum_{i=1}^{N^*}\sum_{|t|=1}^{1 + \mathbf{1}_{\{|V_{h,k}| > 1\}}}\sum_{\substack{(t_1,t_2) \in \gJ_t \\ |t_2| \ne 2 }}\overline{\lambda}_{y,t_1,t_2,i}^{h' h, k' k}\dfrac{\partial^{|t_1|+|t_2|} u}{\partial M^{t_1} \partial a^{t_2}}(x;M_{h,i}^*, a^*_{h,i,k})\dfrac{1}{E_*^h(x)}\varphi'\left(f^{h,k}_{G^*}(x)\right) \nonumber\\
    & - \sum_{i=1}^{N^*}\sum_{|r|=1}^{1 + \mathbf{1}_{\{|V_{h,k}| > 1\}}}\overline{\lambda}_{z,r,i}^{h' h}\dfrac{\partial^{|r|} v_*^{h,k}}{\partial M^{r}}(x;M_{h,i}^*)\dfrac{1}{E_*^h(x)}\varphi'\left(f^{h,k}_{G^*}(x)\right) \nonumber \\ 
    & +\sum_{\substack{|V_{h,k}| > 1, \\ i_1 \in [N^*], i_2 \in [N^*]}}\sum_{\substack{(t_{1,1}, t_{1,2}) \in \gJ_{1},\\ (t_{2,1}, t_{2,2}) \in \gJ_1}}\overline{\lambda}_{p,\substack{t_{1,1},t_{1,2},\\t_{2,1},t_{2,2}},i_1,i_2}^{h' h, k' k}\dfrac{\partial^{|t_{1,1}| + |t_{1,2}|} u}{\partial M^{t_{1,1}} \partial a^{t_{1,2}}}(x;M^*_{h,i_1}, a^*_{h,i_1,k}) \dfrac{\partial^{|t_{2,1}| + |t_{2,2}|} u}{\partial M^{t_{2,1}} \partial a^{t_{2,2}}}(x;M^*_{h,i_2}, a^*_{h,i_2,k})\nonumber \\
    &\hspace{12cm}\times \dfrac{1}{E_*^h(x)^2}\varphi''\left(f^{h,k}_{G^*}(x)\right) \nonumber \\ 
    &\quad - \sum_{\substack{|V_{h,k}| > 1, \\ i_1 \in [N^*], i_2 \in [N^*]}}\;\sum_{|r_1|=1, |r_2| = 1 }\overline{\lambda}_{q,r_1,r_2,i_1,i_2}^{h' h}\dfrac{\partial ^{|r_1|}v_*^{h,k}}{\partial M^{r_1}}(x;M_{h,j_1}^*)\dfrac{\partial^{|r_2|} v_*^{h,k}}{\partial M^{r_2}}(x;M_{h,j_2}^*) 
    \dfrac{1}{E_*^h(x)^2}\varphi''\left(f^{h,k}_{G^*}(x)\right)\Bigg] = 0, \label{appendix:step3SDPA}
\end{align}
for almost every $x$. Since the function $\varphi(\cdot)$ is type-2 strong identifiable, the set 
\begin{align*}
    &\Bigg\{\varphi\left(f_{\Gs}^{h,k}(x)\right),\dfrac{\partial^{|t_1|+{|t_2|}} u}{\partial M^{t_1} \partial a^{t_2}}(x;M_{h,i}^*, a^*_{h,i,k})\dfrac{1}{E_*^h(x)}\varphi'\left(f_{\Gs}^{h,k}(x)\right), \dfrac{\partial^{|r|} v_*^{h,k}}{\partial M^{r}}(x;M_{h,i}^*)\dfrac{1}{E_*^h(x)}\varphi'\left(f_{\Gs}^{h,k}(x)\right), \\ 
    & \dfrac{\partial^{|t_{1,1}| + |t_{1,2}|} u}{\partial M^{t_{1,1}} \partial a^{t_{1,2}}}(x;M^*_{h,i_1}, a^*_{h,i_1,k})\cdot \dfrac{\partial^{|t_{2,1}| + |t_{2,2}|} u}{\partial M^{t_{2,1}} \partial a^{t_{2,2}}}(x;M^*_{h,i_2}, a^*_{h,i_2,k})\dfrac{1}{E_*^h(x)^2}\varphi''\left(f_{\Gs}^{h,k}(x)\right), \\ 
    & \dfrac{\partial^{|r_1|} v_*^{h,k}}{\partial M^{r_1}}(x;M_{h,i_1}^*)\dfrac{\partial^{|r_2|} v_*^{h,k}}{\partial M^{r_2}}(x;M_{h,i_2}^*) \dfrac{1}{E_*^h(x)^2}
    \varphi''\left(f_{\Gs}^{h,k}(x)\right)\Bigg\},
\end{align*}
is linearly independent for any $h \in [H^*], k \in [K^*],(t_1,t_2) \in \gJ_1, \gJ_2 / \{(\mathbf{0}_{\overline{d} \times \overline{d}}, e_{u} + e_v): u,v \in [\overline{d}]\}$, $(t_{1,1}, t_{1,2}) \in \gJ_1, (t_{2,1},t_{2,2}) \in \gJ_1, |r|\in \{1,2\},|r_1|=1,$ and $|r_2| = 1$. 
Therefore, equation \eqref{appendix:step3SDPA} indicates that all the coefficients $$\Big\{\overline{\lambda}^{h,k}_{\omega}, \overline{\lambda}^{h' h, k' k}_{y,t_1,t_2,i}, \overline{\lambda}^{h' h}_{z,r,i}, \overline{\lambda}_{p,\substack{t_{1,1},t_{1,2},\\ t_{2,1}, t_{2,2}},i_1,i_2}^{hk'}, \overline{\lambda}_{q,r_1,r_2,i_1,i_2}^{h' h}\Big\}$$ are $0$'s, which is a contradiction. Hence, we obtain that 
$$
\lim_{\varepsilon \to 0} \inf_{G \in \gG_{H^*,K,N^*}(\Theta): \gL_2(G,G^*) \le \varepsilon} \|g_G - g_{G^*}\|_{L^2(\mu)}/ \gL_2(G,G^*) > 0. 
$$
\textbf{Proof of the global part (Equation~\eqref{theorem3:global}).}

Following the same approach as in the proof of the global part in Theorem~\ref{theorem:nonlinear_at_V} at Appendix~\ref{appendix:nonlinear_at_V}, we will demonstrate the identifiability of mixing measure in $\gG_{H^*, K, N^*}$. In particular, we prove that the equality $g_G(x) = g_{G^*}(x)$ for almost every $x$ implies that $\gL_2(G,G^*) = 0$.  Now, we express the equation $g_{G}(x) = g_{G^*}(x)$ as 
$$
\sum_{h'=1}^{H^*}\sum_{k'=1}^{K}\omega_{h',k'}\varphi\left(f_G^{h',k'}(x)\right) = \sum_{h=1}^{H^*}\sum_{k=1}^{K^*}\omega_{h,k}^*\varphi\left(f_{G^*}^{h,k}(x)\right)
$$
Since the set of functions $x \to \varphi\left(f_G^{h,k}(x)\right) $ is linearly independent with different parameter $\{(M_{h,i}, a_{h,i,k})\}_{i=1}^{N^*}$, we deduce that for each pair $(h,k) \in [H^*] \times [K^*]$, there exists a set $V_{h,k} = \left\{(h',k') \in [H^*] \times [K]: \varphi(f_{G}^{h',k'}(x)) = \varphi\left(f_{G^*}^{h,k}(x)\right)\right\}$. Since the function $\varphi(\cdot)$ is injective, we deduce that for each pair $(h,k) \in [H^*] \times [K^*]$, we have 
\begin{equation}\label{theorem3:identify1}
    f_{G}^{h',k'}(x) = f_{\Gs}^{h,k}(x), \text{for almost every $x$}, \qquad \text{and}\qquad \sum_{(h',k') \in [H^*]\times [K^*]} \omega_{h',k'} = \omega_{h,k}, \forall \, (h',k') \in V_{h,k}.
\end{equation}
We now prove that if $f_G^{h',k'}(x) = f_{G^*}^{h,k}(x)$ for almost every $x$, then we deduce \begin{equation}\label{theorem3:identify2}
    \{M_{h',i}, a_{h',i,k'}\}_{i=1}^{N^*} \equiv \{M_{h,i}, a_{h,i,k}\}_{i=1}^{N^*}. 
\end{equation}
For notational simplicity, we will prove that if 
\begin{equation}\label{eq:theorem3_f_G}
\sum_{i=1}^{N}\dfrac{\exp(x^\top M_{i}'x)}{\sum_{j=1}^N\exp(x^\top M_{j}'x)}\cdot (a_{i}')^\top x = \sum_{i=1}^{N}\dfrac{\exp(x^\top M_{i}x)}{\sum_{j=1}^{N}\exp(x^\top M_{j}x )}\cdot (a_{i})^\top x, \text{for almost every $x$},
\end{equation}
where every element in each set $\{M_i'\}_{i=1}^N, \{M_i\}_{i=1}^N, \{a'_{i}\}_{i=1}^N$, and $ \{a_i\}_{i=1}^N$ is pairwise distinct, then $\{(M_{i},a_i)\}_{i=1}^N \equiv \{(M_i',a_i')\}_{i=1}^N$.  The  equation \eqref{eq:theorem3_f_G} is equivalent to 
\begin{equation}\label{eq:theorem3_induction}
   \sum_{1 \le i, j \le N}\exp(x^\top (M'_{i}+M_{j})x)\cdot a_{i}' \\
=\sum_{1 \le i,j \le N}\exp(x^\top (M'_j+M_{i})x)\cdot a_{i}, 
\end{equation}
for almost every $x$. Since every element in each set $\{M_i'\}_{i=1}^N$ and $\{M_i\}_{i=1}^N$ is pairwise distinct, 
without loss of generality, there exists an open set $\gU \in \mathbb{R}^{\overline{d}} $ such that $\exp(u^\top M_1u) > \exp(u^\top M_2 u) > \ldots > \exp(u^\top M_N u)$ and $\exp(u^\top M_1'u) > \exp(u^\top M_2' u) > \ldots > \exp(u^\top M_N' u)$ for any $u \in \gU$. Therefore, if there exist two matrices $A \in \{M_i\}_{i=1}^N$ and $B \in \{M_i'\}_{i=1}^N$ such that $\exp(u^\top A u) = \exp(u^\top B u),\forall \, u \in \gU$, we can deduce $A = B$. It is also worth noting that the set of functions $x \to e^{\eta x^2}$ is linearly independent with different parameters $\eta$.  Now, we choose $x = pu$, where $p \in \mathbb{R}$, and we denote $m_i = \exp(u^\top M_iu), m_i' = \exp(u^\top M_i'u)$ for all $i \in [N]$, the equation \eqref{eq:theorem3_induction} can be rewritten as 
\begin{equation}\label{eq:theorem3_induction2}
    \sum_{1 \le i,j\le N} e^{(m_i' + m_j)p^2} a_i' = \sum_{1 \le i,j\le N} e^{(m_i' + m_j)p^2} a_j,  
\end{equation}
for almost every $p$. Now, we will prove by induction that $a_t = a_t'$ and $m_t = m_t'+C$ for all $t = 1,2,\ldots, N$, where $C$ is a fixed scalar.  

For $t=1$, since $m_1 + m_1' > m_i + m_j', \forall \, (i,j') \ne (1,1)$, the coefficient of $e^{(m_1+m_1')p^2}$ is equal to $0$, which leads to $a_1 = a_1'$. 

For $t =2$, we first remove the term $e^{(m_1+m_1')p^2}a_1$ out of equation \eqref{eq:theorem3_induction2}.  Then, the maximal value of the remaining $m_i + m_{j}'$ is achieved by either $m_1' + m_2$ or $m_2' + m_1$. If $m_1' + m_2 > m_2' + m_1$, the coefficient of $e^{(m_1' + m_2)p^2}$ deduces that $a_1' = a_2$, which is a contradiction, since $a_2 \ne a_1$. Then, $m_1' + m_2 \le m_2' + m_1$. Similarly, we can also deduce that $m_2' + m_1 \le m_1' + m_2$. Consequently, we have $m_1' + m_2 = m_2' + m_1$. Consider the coefficient of $e^{(m_1'+m_2)p^2}$, we have $a_1' + a_2' = a_1 + a_2$, which leads to $a_2' = a_2$. 

Assume the claim holds for $t \ge 1$. We prove that it also holds for $t+1$. We remove the terms $e^{(m_i+m_j')p^2}a_i$ for any $1 \le i,j \le t$ out of equation \eqref{eq:theorem3_induction2}. Then, the maximal value of remaining $m_i + m_j'$ is achieved by either $m_1 + m_t'$ or $m_1' + m_t$. By following the same line of reasoning as in the case $t=2$, we deduce that $a_{t+1} = a'_{t+1}$ and $m_{t+1}-m_{t+1}' = m_1 - m_1'$. 

Since $M_{N} = M_{N}' = \mathbf{0}_{\overline{d}\times \overline{d}}$, we deduce that $m_t = m_t'$ for any $t \in [N]$, which leads to $M_t = M_{t'}$ for any $t \in [N]$. The claim in \eqref{theorem3:identify2} is completely proved. From \eqref{theorem3:identify1} and \eqref{theorem3:identify2}, we deduce that $\gL_2(G,G^*) = 0$, which completes our proof.  
\section{Proofs of Auxiliary Results}
\label{appendix:auxiliary_results}

\subsection{Proof of Proposition~\ref{prop:regression_rate_mha}}
\label{appendix:regression_rate_mha}
We begin the proof by defining notation. To begin with, we define
$\mathcal{F}_{H^*, K, N^*}(\Theta)$ as the set of regression functions of all mixing measures in
$\mathcal{G}_{H^\ast, K, N^\ast}(\Theta)$, or
\[
\mathcal{F}_{H^*, K, N^*}(\Theta) := \{f_G(x): G \in \mathcal{G}_{H^\ast, K, N^\ast}(\Theta)\}.
\]
Given $\delta > 0$, the local $L^2(\mu)$ ball centered around the regression function
$f_{G^\ast}(x)$ and intersected with the set $\mathcal{F}_{H^*, K, N^*}(\Theta)$ is
\[
\mathcal{F}_{H^*, K, N^*}(\Theta, \delta)
:= \left\{ f \in \mathcal{F}_{H^*, K, N^*}(\Theta): \|f - f_{G^\ast}\|_{L^2(\mu)} \le \delta \right\}.
\]


To quantify the complexity of this set, we employ the bracketing integral introduced by Geer et al.~\cite{vandeGeer-00}:
\begin{align}
J_B(\delta, \mathcal{F}_{H^*, K, N^*}(\Theta, \delta))
:= \int_{\delta^2/2}^{\delta}
H_B^{1/2}\big(t, \mathcal{F}_{H^*, K, N^*}(\Theta, t), \|\cdot\|_{L^2(\mu)}\big) \, dt \vee \delta,
\label{eq:JB-def}
\end{align}
where $H_B(t, \mathcal{F}_{H^*, K, N^*}(\Theta, t), \|\cdot\|_{L^2(\mu)})$ denotes the bracketing entropy~\cite{vandeGeer-00}
of $\mathcal{F}_{H^*, K, N^*}(\Theta, t)$ under the $L^2$-norm, and $t \vee \delta := \max\{t, \delta\}$.
Adapting the arguments of Theorem 7.4 and Theorem 9.2 in~\cite{vandeGeer-00} to our setting yields the following lemma:

\begin{lemma} \label{lem:geer-like}
Let $\Psi(\delta)$ be such that $\Psi(\delta) \geq J_B(\delta, \mathcal{F}_{H^*, K, N^*}(\Theta, \delta))$ and $\Psi(\delta)/\delta^2$ is non-increasing in $\delta$. Then there exists a universal constant $c$ and a sequence $(\delta_n)$ satisfying $\sqrt{n}\delta_n^2 \geq c\Psi(\delta_n)$ for which
\[
\mathbb{P}\Big( \|f_{\widehat{G}_n} - f_{G^\ast}\|_{L^2(\mu)} > \delta \Big)
\leq c \exp\Big( -\frac{n\delta^2}{c^2} \Big),
\]
for all $\delta \geq \delta_n$.
\end{lemma}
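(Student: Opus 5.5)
This is a standard peeling argument for least squares over a function class, adapted from Theorems 7.4 and 9.2 of \cite{vandeGeer-00}. The plan is to start from the basic inequality. Since $\widehat{G}_n$ minimizes the empirical squared loss and $G^*$ is feasible (after padding it with zero-weight components, so that $f_{G^*}\in\mathcal{F}_{H^*,K,N^*}(\Theta)$), expanding $Y_i=f_{G^*}(X_i)+\varepsilon_i$ gives
\begin{align*}
\|f_{\widehat{G}_n}-f_{G^*}\|_n^2\le \frac{2}{n}\sum_{i=1}^n\varepsilon_i\big(f_{\widehat{G}_n}(X_i)-f_{G^*}(X_i)\big),
\end{align*}
where $\|\cdot\|_n$ is the empirical norm. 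Assumption (A.1), together with the boundedness of the softmax weights, makes every $f\in\mathcal{F}_{H^*,K,N^*}(\Theta)$ uniformly bounded by a constant $B$. This lets us compare $\|\cdot\|_n$ with $\|\cdot\|_{L^2(\mu)}$ uniformly over the class, at the cost of constants.

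Next I would peel the class into shells $S_s=\{f:2^{s}\delta<\|f-f_{G^*}\|_{L^2(\mu)}\le 2^{s+1}\delta\}$ for $s\ge0$. On the event $\|f_{\widehat{G}_n}-f_{G^*}\|>\delta$, some shell $s$ must satisfy
\begin{align*}
\sup_{f\in S_s}\frac{1}{\sqrt n}\sum_i\varepsilon_i(f-f_{G^*})(X_i)\gtrsim \sqrt n\,2^{2s}\delta^2 ,
\end{align*}
or else the empirical and population norms must deviate by a constant factor on that shell. The Gaussian noise term is controlled conditionally on the $X_i$ by a chaining bound with bracketing entropy, namely the Bernstein-type inequality of Theorem 8.13 / Lemma 8.5 in \cite{vandeGeer-00}. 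The resulting bound is $c\exp(-n2^{4s}\delta^4/(c^2 2^{2s+2}\delta^2))$, valid provided $\sqrt n(2^s\delta)^2\ge cJ_B(2^{s+1}\delta,\mathcal{F}(\Theta,2^{s+1}\delta))$. The norm-equivalence term is handled by the uniform law of Theorem 5.11 for bounded classes, which rests on the same entropy integral.

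The key step, and the main obstacle, is to verify that the entropy condition holds simultaneously for all shells once $\delta\ge\delta_n$. This is exactly where the monotonicity of $\Psi(\delta)/\delta^2$ enters. From $\sqrt n\delta_n^2\ge c\Psi(\delta_n)$ and the fact that $\Psi(t)/t^2$ is non-increasing, we get $\sqrt n t^2\ge c\Psi(t)\ge cJ_B(t)$ for every $t\ge\delta_n$, and in particular for $t=2^{s+1}\delta$. Some care is needed with the constants, because the shell radius $2^{s+1}\delta$ differs from the threshold $2^{s}\delta$, and the lower integration limit $t^2/2$ must be matched to the form of $J_B$ in \eqref{eq:JB-def}. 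I would absorb these discrepancies into $c$.

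Finally, summing the shell bounds gives the geometric series $\sum_s c\exp(-n2^{2s}\delta^2/c^2)\le c'\exp(-n\delta^2/c'^2)$, and renaming constants yields the stated tail bound.
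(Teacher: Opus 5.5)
The paper gives no argument for this lemma beyond saying it follows by adapting Theorems 7.4 and 9.2 of van de Geer. Your outline is the standard way to carry out that adaptation, so the route is the same. Its ingredients are the basic inequality, peeling in $L^2(\mu)$, chaining for the noise term, and a uniform law comparing $\|\cdot\|_n$ with $\|\cdot\|_{L^2(\mu)}$ on each shell. Monotonicity of $\Psi(\delta)/\delta^2$ then covers all shells at once, and the geometric summation works because $n\delta^2\gtrsim 1$, which follows from the ``$\vee\,\delta$'' in $J_B$.

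The step that does not go through is absorbing the lower-limit mismatch into $c$. The hypothesis $\Psi\ge J_B$ controls $H_B(u,\mathcal{F}_{H^*,K,N^*}(\Theta,t))$ only for $u\ge t^2/2$. The hypothesis at other radii does not help: for $t'>t$ the lower limit $t'^2/2$ is even larger, and for $t'<t$ the ball $\mathcal{F}_{H^*,K,N^*}(\Theta,t')$ is smaller than the one you must cover. Yet on the shell of outer radius $t$, your chaining must be carried down to a scale $\eta$ that is a small multiple of $t^2/\sigma$. The reason is that the unchained remainder $n^{-1/2}\sum_i|\varepsilon_i|(U-L)(X_i)$ can be as large as $\sqrt{n}\,\sigma\,\eta$. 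The deviation level from the basic inequality is only of order $\sqrt{n}\,t^2$, about $\sqrt{n}\,t^2/16$ with dyadic shells and a factor $1/2$ in the norm comparison. Unless $\sigma$ is small, this $\eta$ lies below $t^2/2$ and the entropy on $[\eta,t^2/2]$ is unconstrained. A constant-factor change in the lower limit of an entropy integral cannot in general be traded for a constant factor in front of it.

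For the same reason $c$ cannot be universal; it must depend on $\sigma$ and on your uniform bound $B$. Already for a two-point class $\{f_{G^*},f_{G^*}+t_0\}$, least squares picks the wrong point with probability $\mathbb{P}(\bar\varepsilon>t_0/2)$, where $\bar\varepsilon\sim\mathcal{N}(0,\sigma^2/n)$. This decays only like $\exp(-nt_0^2/(8\sigma^2))$.

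What your argument actually proves is the lemma with $J_B$ defined using a lower limit $\delta^2/c$ and with $c=c(\sigma,B)$. That is also the form of van de Geer's results; Theorem 7.4 uses $\delta^2/2^{13}$. You should state it that way rather than claim the discrepancy is absorbed. This weaker version is all Proposition~\ref{prop:regression_rate_mha} needs, since $\int_{\delta^2/c}^{\delta}\sqrt{\log(1/u)}\,du\lesssim\delta\sqrt{\log(1/\delta)}$ for small $\delta$. Hence $\Psi(\delta)=\delta\sqrt{\log(1/\delta)}$ and $\delta_n=\sqrt{\log(n)/n}$ still work.

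A smaller point concerns the conditional chaining. Conditionally on the $X_i$, the noise process is sub-Gaussian with respect to $\|\cdot\|_n$, so $L^2(\mu)$-brackets do not by themselves give the covers a conditional chaining argument needs. It is cleaner to work unconditionally. With $\varepsilon^{\pm}=\max(\pm\varepsilon,0)$, the map $(x,\varepsilon)\mapsto\varepsilon\,(f-f_{G^*})(x)$ lies in the bracket $[\varepsilon^+L-\varepsilon^-U,\ \varepsilon^+U-\varepsilon^-L]$ of width $|\varepsilon|(U-L)$. That width has Bernstein norm $\lesssim_{\sigma,B}\|U-L\|_{L^2(\mu)}$, because $\varepsilon$ is Gaussian and independent of $X$ and $U-L\le 2B$. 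A Bernstein-norm bracketing inequality such as van de Geer's Theorem 5.11 then gives your shell bound.
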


We now demonstrate that when the expert functions are Lipschitz continuous, the following bound holds:
\begin{align}
H_B\big(\varepsilon, \mathcal{F}_{H^*, K, N^*}(\Theta), \|\cdot\|_{L^2(\mu)}\big) \lesssim \log(1/\varepsilon),
\label{eq:bracketing-bound}
\end{align}
for any $0 < \varepsilon \leq 1/2$. Indeed, for any function $f_G \in \mathcal{F}_{H^*, K, N^*}(\Theta)$,
since the expert functions are bounded, we obtain that $f_G(x) \leq M$ for almost everywhere $x$,
where $M > 0$ is some bounded constant of the expert functions.
Choose $\tau \leq \varepsilon$ and take ${\xi_1, \dots, \xi_k}$ as a $\tau$-cover of $\mathcal{F}_{H^*, K, N^*}(\Theta)$ under the $L^\infty$ norm
of the set $\mathcal{F}_{H^*, K, N^*}(\Theta)$ where $k := N(\tau, \mathcal{F}_{H^*, K, N^*}(\Theta), \|\cdot\|_{L^\infty})$
is the $\tau$-covering number of the metric space $(\mathcal{F}_{H^*, K, N^*}(\Theta), \|\cdot\|_{L^\infty})$.
For each $i \in [k]$, define brackets $[L_i, U_i]$ by
\begin{align*}
L_i(x) &:= \max\{\xi_i(x) - \tau, 0\}, \\
U_i(x) &:= \min\{\xi_i(x) + \tau, M\}.
\end{align*}
This construction ensures that
$\mathcal{F}_{H^*, K, N^*}(\Theta) \subset \bigcup_{i=1}^k [L_i(x), U_i(x)]$ and $U_i(x) - L_i(x) \leq \min\{2\tau, M\}$.
Consequently,
\[
\|U_i - L_i\|_{L^2(\mu)}^2 = \int (U_i - L_i)^2 \, d\mu(x) \leq \int 4\tau^2 \, d\mu(x) = 4\tau^2,
\]
which implies that $\|U_i - L_i\|_{L^2(\mu)} \leq 2\tau$. From the definition of bracketing entropy we therefore obtain
\begin{align}
H_B\big(2\tau, \mathcal{F}_{H^*, K, N^*}(\Theta), \|\cdot\|_{L^2(\mu)}\big)
\leq \log k
= \log N(\tau, \mathcal{F}_{H^*, K, N^*}(\Theta), \|\cdot\|_{L^\infty}).
\label{eq:bracket-cover-rel}
\end{align}

Consequently, we have to construct an upper bound the covering number
$N(\tau, \mathcal{F}_{H^*, K, N^*}(\Theta), \|\cdot\|_{L^\infty})$.
To this end, we decompose the parameter space $\Theta$ into the following
marginal parameter sets:
\[
\Delta := \{M\in\mathbb{R}^{d\times d}:\ (M,a,\omega)\in\Theta\},
\]
\[
\Psi := \{a\in\mathbb{R}^{d}:\ (M,a,\omega)\in\Theta\},
\qquad
\Omega := \{\omega\in\mathbb{R}:\ (M,a,\omega)\in\Theta\}.
\]
The compactness of $\Theta$ implies $\Delta$, $\Psi$, and $\Omega$ are compact. Therefore, for any $\tau>0$, there exist finite $\tau$-covers
$\Delta_\tau$, $\Psi_\tau$, and $\Omega_\tau$ of $\Delta$, $\Psi$, and $\Omega$,
respectively.
We have
\[
|\Delta_\tau|
\le O\!\left(\tau^{-d^2H^\ast N^\ast}\right),\qquad
|\Psi_\tau|
\le O\!\left(\tau^{-dH^\ast K N^\ast}\right),\qquad
|\Omega_\tau|
\le O\!\left(\tau^{-H^\ast K}\right).
\]

For each mixing measure
\[
G = \sum_{h=1}^{H^*}\sum_{k=1}^{K}\omega_{h,k}\sum_{i=1}^{N^*}\delta_{(M_{h,i},a_{h,i,k})},
\]
we define the softmax weights by
\[
\sigma_{h,i}(x;M_h)
:=
\frac{\exp(x^\top M_{h,i}x)}{\sum_{j=1}^N\exp(x^\top M_{h,j}x)},
\qquad h\in[H^\ast],\ i\in[N].
\]
We then consider the following three mixing measures:
\begin{align*}
\check{G} &:= \sum_{h=1}^{H^\ast}\sum_{k=1}^{K}\omega_{h,k}
\sum_{i=1}^{N^\ast}\delta_{(M_{h,i},\bar{a}_{h,i,k})}, \\
\tilde{G} &:= \sum_{h=1}^{H^\ast}\sum_{k=1}^{K}\bar{\omega}_{h,k}
\sum_{i=1}^{N^\ast}\delta_{(M_{h,i},\bar{a}_{h,i,k})}, \\
\bar{G} &:= \sum_{h=1}^{H^\ast}\sum_{k=1}^{K}\bar{\omega}_{h,k}
\sum_{i=1}^{N^\ast}\delta_{(\bar{M}_{h,i},\bar{a}_{h,i,k})}.
\end{align*}
Here $\bar{a}_{h,i,k} \in \Psi\tau$ is the nearest point to $a_{h,i,k}$, $\bar{\omega}{h,k} \in \Omega\tau$ is the nearest to $\omega_{h,k}$, and $(\bar{M}_{h,i}, \bar{c}_{h,i}) \in \Delta_\tau$ is the nearest to $(M_{h,i}, c_{h,i})$.

From these definitions, we obtain
\begin{align*}
\|f_G - f_{\check{G}}\|_{L^\infty}
&= \sup_{x \in \mathcal{x}} \Bigg|
\sum_{h=1}^{H^\ast}\sum_{k=1}^{K}\omega_{h,k}
\sum_{i=1}^{N^\ast} \sigma_{h,i}(x;M_h)\cdot
\Big[\,(a_{h,i,k})^\top x - (\bar a_{h,i,k})^\top x\,\Big]
\Bigg| \\
&\leq \sum_{h=1}^{H^\ast}\sum_{k=1}^{K}\sum_{i=1}^{N^\ast}
\sup_{x\in\mathcal X}
\Big|\omega_{h,k}\Big|\,
\sigma_{h,i}(x;M_h)\,
\Big|\big(a_{h,i,k}-\bar a_{h,i,k}\big)^\top x\Big| \\
&\leq \sum_{h=1}^{H^\ast}\sum_{k=1}^{K}\sum_{i=1}^{N^\ast}
\sup_{x\in\mathcal X}
\Big|\omega_{h,k}\Big|\,
\Big\|a_{h,i,k}-\bar a_{h,i,k}\Big\|_2\,\|x\|_2 \\
&\lesssim \sum_{h=1}^{H^\ast}\sum_{k=1}^{K}\sum_{i=1}^{N^\ast}
\Big\|a_{h,i,k}-\bar a_{h,i,k}\Big\|_2
\lesssim \tau.
\end{align*}
The first inequality uses the triangle inequality, the second uses that the softmax weight is at most $1$, and the final bound follows because $|x| \le B$ for some $B>0$ and $\omega$ is bounded over $\Theta$.

Next, we have
\begin{align*}
\|f_{\check{G}} - f_{\tilde{G}}\|_{L^\infty}
&= \sup_{x \in \mathcal{x}} \Bigg|
\sum_{h=1}^{H^\ast}\sum_{k=1}^{K}\sum_{i=1}^{N^\ast}
\Big(\omega_{h,k}-\bar\omega_{h,k}\Big)\,
\sigma_{h,i}(x;M_h)\,
(\bar a_{h,i,k})^\top x
\Bigg| \\
&\le
\sum_{h=1}^{H^\ast}\sum_{k=1}^{K}\sum_{i=1}^{N^\ast}
\sup_{x\in\mathcal X}
\Big|\omega_{h,k}-\bar\omega_{h,k}\Big|\,
\Big|(\bar a_{h,i,k})^\top x\Big| \\
&\lesssim
\sum_{h=1}^{H^\ast}\sum_{k=1}^{K}
\Big|\omega_{h,k}-\bar\omega_{h,k}\Big|
\lesssim \tau.
\end{align*}
Again the triangle inequality gives the first bound, and the second uses that each expert $(\bar a_{h,i,k})^\top x$ is uniformly bounded.

Finally, we have
\begin{align*}
\|f_{\tilde{G}} - f_{\bar{G}}\|_{L^\infty}
&= \sup_{x \in \mathcal{x}} \Bigg|
\sum_{h=1}^{H^\ast}\sum_{k=1}^{K}\sum_{i=1}^{N^\ast}
\bar\omega_{h,k}\,
\Big[
\sigma_{h,i}(x;M_h)
-
\sigma_{h,i}(x;\bar M_h)
\Big]\,
(\bar a_{h,i,k})^\top x
\Bigg| \\
&\leq
\sum_{h=1}^{H^\ast}\sum_{k=1}^{K}\sum_{i=1}^{N^\ast}
\sup_{x\in\mathcal X}
|\bar\omega_{h,k}|\,
\Big|\sigma_{h,i}(x;M_h)-\sigma_{h,i}(x;\bar M_h)\Big|\,
\Big|(\bar a_{h,i,k})^\top x\Big|.
\end{align*}
For this term, we use the Lipschitz continuity of the softmax function.
In particular, for each fixed $h$ and $i$, the map
\[
(M_h)\mapsto \sigma_{h,i}(x;M_h)
=
\frac{\exp(x^\top M_{h,i}x)}{\sum_{j=1}^N\exp(x^\top M_{h,j}x)}
\]
is Lipschitz uniformly over $x\in\mathcal X$ since $\mathcal X$ is bounded and $(M)$ ranges over a compact set. Therefore,
\begin{align*}
\|f_{\tilde{G}} - f_{\bar{G}}\|_{L^\infty}
&\lesssim
\sum_{h=1}^{H^\ast}\sum_{i=1}^{N^\ast}
\Big(
\|M_{h,i}-\bar M_{h,i}\|_F\cdot \|x\|^2
\Big)
\lesssim \tau .
\end{align*}
Above, the last inequality occurs as the input space is bounded, that is, $\|x\|\le B$, and
$(\bar M_{h,i},\bar c_{h,i})$ is chosen from a $\tau$-cover.

According to the triangle inequality, we have
\[
\|f_G - f_{\bar{G}}\|_{L^\infty}
\le
\|f_G - f_{\check{G}}\|_{L^\infty}
+
\|f_{\check{G}} - f_{\tilde{G}}\|_{L^\infty}
+
\|f_{\tilde{G}} - f_{\bar{G}}\|_{L^\infty}
\lesssim \tau.
\]

By definition of the covering number, we deduce that
\begin{align}
N(\tau, \mathcal{F}_{H^*, K, N^*}(\Theta), \|\cdot\|_{L^\infty})
&\leq |\Delta_\tau| \times |\Psi_\tau| \times |\Omega_\tau| \nonumber\\
&\leq
O\!\left(\tau^{-d^2H^\ast N^\ast}\right)
\times
O\!\left(\tau^{-d H^\ast K N^\ast}\right)
\times
O\!\left(\tau^{- H^\ast K}\right)
\nonumber\\
&\leq
O\!\left(
\tau^{-\big[d^2H^\ast N + d H^\ast K N + H^\ast K\big]}
\right).
\label{eq:cover-size}
\end{align}

Substituting the covering number bound~\eqref{eq:cover-size} into
\eqref{eq:bracket-cover-rel}, we obtain
\begin{align*}
H_B\big(2\tau, \mathcal{F}_{H^*, K, N^*}(\Theta), \|\cdot\|_{L^2(\mu)}\big)
&\le
\log N(\tau, \mathcal{F}_{H^*, K, N^*}(\Theta), \|\cdot\|_{L^\infty}) \\
&\le
\log\!\Big(
C\,\tau^{-\big[d^2H^\ast N^\ast + d H^\ast K N^\ast + H^\ast K\big]}
\Big) \\
&\le
C_1 \log(1/\tau),
\end{align*}
where $C_1>0$ is a constant depending on $(H^\ast,K,N^\ast,d)$.
Replacing $2\tau$ by $\varepsilon$ yields
\[
H_B\big(\varepsilon, \mathcal{F}_{H^*, K, N^*}(\Theta), \|\cdot\|_{L^2(\mu)}\big)
\lesssim \log(1/\varepsilon).
\]
As a result, it follows that
\begin{align}
J_B(\delta, \mathcal{F}_{H^*, K, N^*}(\Theta, \delta))
= \int_{\delta^2/2}^{\delta}
H_B^{1/2}\big(t, \mathcal{F}_{H^*, K, N^*}(\Theta, t), \|\cdot\|_{L^2(\mu)}\big) \, dt \vee \delta
\lesssim \int_{\delta^2/2}^{\delta} \sqrt{\log(1/t)} \, dt \vee \delta.
\label{eq:JB-bound}
\end{align}

Define $\Psi(\delta) = \delta \sqrt{\log(1/\delta)}$. Then $\Psi(\delta)/\delta^2$ is non-increasing in $\delta$.
Equation \eqref{eq:JB-bound} shows that $\Psi(\delta) \geq J_B(\delta, \mathcal{F}_{H^*, K, N^*}(\Theta, \delta))$.
Take $\delta_n = \sqrt{\log(n)/n}$. Then for some universal constant $c$, we have $\sqrt{n}\delta_n^2 \geq c\Psi(\delta_n)$.
Finally, applying Lemma \ref{lem:geer-like} yields the statement of the theorem.

\section{Experiment Details}
\label{appendix:experiment}

For all settings, the gating matrices $M^*$ are specified as follows:
\begin{align*}
M_{0,0}^* &= \begin{bmatrix} 2.0 & 0.0 \\ 0.0 & 0.5 \end{bmatrix}, \quad
M_{0,1}^* = \begin{bmatrix} 0.5 & 0.0 \\ 0.0 & 2.0 \end{bmatrix}, \\
M_{1,0}^* &= \begin{bmatrix} 1.5 & 0.2 \\ 0.2 & 0.3 \end{bmatrix}, \quad
M_{1,1}^* = \begin{bmatrix} 0.3 & -0.1 \\ -0.1 & 1.2 \end{bmatrix}.
\end{align*}
In addition, we use the ground-truth parameter for gating parameters $\omega_{h,k}^*$ and expert parameters $a_{h,i,k}^* \in \mathbb{R}^2$ with $H^* = 2$ heads, $N^* = 2$ experts per head, and $K^* = 2$ channels as in Table~\ref{tab:true_params}.
\begin{table}[!h]
\caption{Ground-truth parameters for gating weights $\omega_{h,k}^*$ and expert coefficients $a_{h,i,k}^* \in \mathbb{R}^2$ with $H^* = 2$ heads, $N^* = 2$ experts per head, and $K^* = 2$ channels.}
\label{tab:true_params}
\centering
{
\begin{tabular}{ccccc}
\hline
Head $h$ & Expert $i$ & Channel $k$ & $\omega_{h,k}^*$ & $a_{h,i,k}^*$ \\
\hline
0 & 0 & 0 & 1.0 & $(1.0, -0.5)$ \\
0 & 0 & 1 & 0.5 & $(0.5, 0.8)$ \\
0 & 1 & 0 & 1.0 & $(-1.0, 0.8)$ \\
0 & 1 & 1 & 0.5 & $(0.2, -0.3)$ \\
\hline
1 & 0 & 0 & 0.8 & $(0.6, 0.4)$ \\
1 & 0 & 1 & 0.3 & $(-0.2, 0.5)$ \\
1 & 1 & 0 & 0.8 & $(-0.7, -0.2)$ \\
1 & 1 & 1 & 0.3 & $(0.3, -0.4)$ \\
\hline
\end{tabular}
}
\end{table}

\bibliography{references}
\bibliographystyle{abbrv}
\end{document}

%% file: math_commands.tex
\usepackage{eqnarray,amsmath}
\usepackage[utf8]{inputenc} 
\usepackage[T1]{fontenc}    
\usepackage{booktabs}       
\usepackage{amsfonts}       
\usepackage{nicefrac}       
\usepackage{microtype}      

\usepackage[colorlinks,linkcolor=magenta,citecolor=blue, pagebackref=true]{hyperref}
\renewcommand*{\backrefalt}[4]{%
    \ifcase #1 \footnotesize{(Not cited.)}%
    \or        \footnotesize{(Cited on page~#2.)}%
    \else      \footnotesize{(Cited on pages~#2.)}%
    \fi}
\usepackage[nameinlink,capitalize,noabbrev]{cleveref}

\usepackage{caption}
\usepackage{subcaption}

\usepackage{epsf}
\usepackage{epsfig}
\usepackage{fancyhdr}
\usepackage{graphics}
\usepackage{graphicx}
\usepackage{psfrag}

\usepackage{url}

\usepackage{color}

\usepackage{amsthm}
\usepackage{amsmath}
\usepackage{amssymb,bbm}
\usepackage[justification=centering]{caption}
\usepackage{algorithmic}
\usepackage{algorithm}
\usepackage{textcomp}
\usepackage{siunitx}
\usepackage{wrapfig}
\usepackage{algorithmic}
\usepackage{algorithm}
\usepackage{multirow}
\usepackage{multicol}

\def\1{\bm{1}}

\DeclareMathAlphabet{\mathsfit}{\encodingdefault}{\sfdefault}{m}{sl}
\SetMathAlphabet{\mathsfit}{bold}{\encodingdefault}{\sfdefault}{bx}{n}

\def\gD{{\mathcal{D}}}

\def\gG{{\mathcal{G}}}

\def\gJ{{\mathcal{J}}}

\def\gL{{\mathcal{L}}}

\def\gO{{\mathcal{O}}}

\def\gR{{\mathcal{R}}}

\def\gU{{\mathcal{U}}}

\DeclareMathOperator*{\argmin}{arg\,min}

\usepackage{eqnarray,amsmath}

\usepackage{amsthm}
\usepackage{amsmath}
\usepackage{amssymb,bbm}

\allowdisplaybreaks

\newcommand{\bbE}{\mathbb{E}}

\newcommand{\var}{\mathrm{Var}}

\newcommand{\Gs}{G^*}

%% file: references.bib
@inproceedings{
qiu2025gated,
title={Gated Attention for Large Language Models: Non-linearity, Sparsity, and Attention-Sink-Free},
author={Zihan Qiu and Zekun Wang and Bo Zheng and Zeyu Huang and Kaiyue Wen and Songlin Yang and Rui Men and Le Yu and Fei Huang and Suozhi Huang and Dayiheng Liu and Jingren Zhou and Junyang Lin},
booktitle={The Thirty-ninth Annual Conference on Neural Information Processing Systems},
year={2025}
}

@inproceedings{vaswani2017attention,
 author = {Vaswani, Ashish and Shazeer, Noam and Parmar, Niki and Uszkoreit, Jakob and Jones, Llion and Gomez, Aidan N and Kaiser, \L ukasz and Polosukhin, Illia},
 booktitle = {Advances in Neural Information Processing Systems},
 publisher = {Curran Associates, Inc.},
 title = {Attention is All you Need},
 volume = {30},
 year = {2017}
}

@article{yan2025sigmoid,
      title={Sigmoid Self-Attention is Better than Softmax Self-Attention: A Mixture-of-Experts Perspective},
      author={Fanqi Yan and Huy Nguyen and Pedram Akbarian and Nhat Ho and Alessandro Rinaldo},
      journal={arXiv preprint arXiv:2502.00281},
      year={2025}
}

@article{yu97lecam,
author = "B. Yu",
title = "Assouad, {F}ano, and {L}e {C}am",
journal = "Festschrift for Lucien Le Cam",
editor = "D. Pollard and E. Torgersen and G. Yang",
pages = "423--435",
year = "1997"
}

@inproceedings{nguyen2024gaussian,
    author = {Nguyen, Huy and Nguyen, TrungTin and Nguyen, Khai and Ho, Nhat},
    title = {Towards Convergence Rates for Parameter Estimation in {Gaussian}-gated Mixture of Experts},
    booktitle = {Proceedings of The 27th International Conference on Artificial Intelligence and Statistics},
    year = 2024
}

@inproceedings{
gu2024attentionsink,
title={When Attention Sink Emerges in Language Models: An Empirical View},
author={Xiangming Gu and Tianyu Pang and Chao Du and Qian Liu and Fengzhuo Zhang and Cunxiao Du and Ye Wang and Min Lin},
booktitle={The Thirteenth International Conference on Learning Representations},
year={2025}
}

@inproceedings{
xiao2024efficient,
title={Efficient Streaming Language Models with Attention Sinks},
author={Guangxuan Xiao and Yuandong Tian and Beidi Chen and Song Han and Mike Lewis},
booktitle={The Twelfth International Conference on Learning Representations},
year={2024}
}

@inproceedings{ramapuram2024sigmoidattention,
  title={Theory, Analysis, and Best Practices for Sigmoid Self-Attention},
  author={Ramapuram, Jason and Danieli, Federico and Dhekane, Eeshan Gunesh and Weers, Floris and Busbridge, Dan and Ablin, Pierre and Likhomanenko, Tatiana and Digani, Jagrit and Gu, Zijin and Shidani, Amitis and others},
  booktitle={The Thirteenth International Conference on Learning Representations},
  year = 2025
}

@article{jiang_approximation_1999,
	title = {On the {Approximation} {Rate} of {Hierarchical} {Mixtures}-of-{Experts} for {Generalized} {Linear} {Models}},
	volume = {11},
	issn = {0899-7667},
	url = {https://doi.org/10.1162/089976699300016403},
	doi = {10.1162/089976699300016403},
	number = {5},
	journal = {Neural Computation},
	author = {Jiang, Wenxin and Tanner, Martin A},
	month = jul,
	year = {1999},
	pages = {1183--1198},
}

@BOOK{vandeGeer-00,
AUTHOR = "S. van de Geer",
TITLE = "Empirical processes in M-estimation",
PUBLISHER = "Cambridge University Press",
YEAR = "2000"
}

@article{Jacobs1991adaptive,
  title={Adaptive Mixtures of Local Experts},
  author={Robert A. Jacobs and Michael I. Jordan and Steven J. Nowlan and Geoffrey E. Hinton},
  journal={Neural Computation},
  year={1991},
  volume={3},
  pages={79-87}
}

@article{ho2022gaussian,
  author  = {Nhat Ho and Chiao-Yu Yang and Michael I. Jordan},
  title   = {Convergence Rates for {G}aussian Mixtures of Experts},
  journal = {Journal of Machine Learning Research},
  year    = {2022},
  volume  = {23},
  number  = {323},
  pages   = {1--81},
}

@article{Jordan1993hmoe,
  title={Hierarchical Mixtures of Experts and the EM Algorithm},
  author={Michael I. Jordan and Robert A. Jacobs},
  journal={Neural Computation},
  year={1993},
  volume={6},
  pages={181-214}
}

@InProceedings{radford2021clip,
  title = 	 {Learning Transferable Visual Models From Natural Language Supervision},
  author =       {Radford, Alec and Kim, Jong Wook and Hallacy, Chris and Ramesh, Aditya and Goh, Gabriel and Agarwal, Sandhini and Sastry, Girish and Askell, Amanda and Mishkin, Pamela and Clark, Jack and Krueger, Gretchen and Sutskever, Ilya},
  booktitle = 	 {Proceedings of the 38th International Conference on Machine Learning},
  pages = 	 {8748--8763},
  year = 	 {2021},
  editor = 	 {Meila, Marina and Zhang, Tong},
  volume = 	 {139},
  series = 	 {Proceedings of Machine Learning Research},
  month = 	 {18--24 Jul},
  publisher =    {PMLR}
  
}

@INPROCEEDINGS{liu2021swin,
  author={Liu, Ze and Lin, Yutong and Cao, Yue and Hu, Han and Wei, Yixuan and Zhang, Zheng and Lin, Stephen and Guo, Baining},
  booktitle={2021 IEEE/CVF International Conference on Computer Vision (ICCV)}, 
  title={Swin Transformer: Hierarchical Vision Transformer using Shifted Windows}, 
  year={2021},
  pages={9992-10002},
}

@InProceedings{manole22refined,
  title = 	 {Refined Convergence Rates for Maximum Likelihood Estimation under Finite Mixture Models},
  author =       {T. Manole and N. Ho},
  booktitle = 	 {Proceedings of the 39th International Conference on Machine Learning},
  pages = 	 {14979--15006},
  year = 	 {2022},
  volume = 	 {162},
  series = 	 {Proceedings of Machine Learning Research},
  month = 	 {17--23 Jul},
  publisher =    {PMLR}
}

@article{nguyen2024hmoe,
      title={On Expert Estimation in Hierarchical Mixture of Experts: Beyond Softmax Gating Functions}, 
      author={Huy Nguyen and Xing Han and Carl William Harris and Suchi Saria and Nhat Ho},
      year={2024},
      Journal = {arxiv preprint arxiv 2410.02935}
}

@article{jiang1999hmoe,
author = {Wenxin Jiang and Martin A. Tanner},
title = {{Hierarchical mixtures-of-experts for exponential family regression models: approximation and maximum likelihood estimation}},
volume = {27},
journal = {The Annals of Statistics},
number = {3},
publisher = {Institute of Mathematical Statistics},
pages = {987 -- 1011},
year = {1999}
}

@article{deepseekv3,
  title={Deepseek-v3 technical report},
  author={DeepSeek-AI and others},
  journal={arXiv preprint arXiv:2412.19437},
  year={2024}
}

@article{qwen2025,
  title={Qwen2.5 Technical Report},
  author={Qwen and others},
  journal={arXiv preprint arXiv:2412.15115},
  year={2025}
}

@inproceedings{li2025hmoe,
  title     = {Hierarchical Mixture of Experts: Generalizable Learning for High-Level Synthesis},
  author    = {Li, Wenqi and Wang, Ding and Ding, Zijian and Sohrabizadeh, Atefeh and Qin, Zongyue and Cong, Jason and Sun, Yizhou},
  booktitle = {Proceedings of the AAAI Conference on Artificial Intelligence},
  year      = {2025}
}

@inproceedings{liao2025hmora,
  title     = {{HMoRA}: Making {LLMs} More Effective with Hierarchical Mixture of LoRA Experts},
  author    = {Liao, Mengqi and Chen, Wei and Shen, Junfeng and Guo, Shengnan and Wan, Huaiyu},
  booktitle = {International Conference on Learning Representations},
  year      = {2025}
}

@article{openai2024gpt4technicalreport,
      title={GPT-4 Technical Report}, 
      author={OpenAI and others},
      journal={arXiv preprint arXiv:2303.08774},
      year={2024}
}

@article{comanici2025gemini25pushingfrontier,
      title={Gemini 2.5: Pushing the Frontier with Advanced Reasoning, Multimodality, Long Context, and Next Generation Agentic Capabilities}, 
      author={GeminiTeam},
      journal={arXiv preprint arXiv:2507.06261},
      year={2025}
}

@article{grattafiori2024llama3,
  title={The Llama 3 Herd of Models},
  author={Aaron Grattafiori and Abhimanyu Dubey and Abhinav Jauhri and Abhinav Pandey and Abhishek Kadian and Ahmad Al-Dahle and Aiesha Letman and Akhil Mathur and others},
  journal={arXiv preprint arXiv:2407.21783},
  year={2024}
}

@inproceedings{chen2022theory,
 author = {Chen, Zixiang and Deng, Yihe and Wu, Yue and Gu, Quanquan and Li, Yuanzhi},
 booktitle = {Advances in Neural Information Processing Systems},
 editor = {S. Koyejo and S. Mohamed and A. Agarwal and D. Belgrave and K. Cho and A. Oh},
 pages = {23049--23062},
 publisher = {Curran Associates, Inc.},
 title = {Towards Understanding the Mixture-of-Experts Layer in Deep Learning},
 volume = {35},
 year = {2022}
}

@inproceedings{nguyen2023demystifying,
      title={Demystifying Softmax Gating Function in {G}aussian Mixture of Experts}, 
      author={Huy Nguyen and TrungTin Nguyen and Nhat Ho},
      booktitle = "Advances in Neural Information Processing Systems",
      year={2023}
}

@inproceedings{oldfield2024specialize,
      title={Multilinear Mixture of Experts: Scalable Expert Specialization through Factorization}, 
      author={James Oldfield and Markos Georgopoulos and Grigorios G. Chrysos and Christos Tzelepis and Yannis Panagakis and Mihalis A. Nicolaou and Jiankang Deng and Ioannis Patras},
      booktitle = "Advances in Neural Information Processing Systems",
      year={2024}
}

@inproceedings{nguyen2024general,
      title={A General Theory for Softmax Gating Multinomial Logistic Mixture of Experts}, 
      author={Huy Nguyen and Pedram Akbarian and TrungTin Nguyen and Nhat Ho},
      booktitle ="Proceedings of the 41st International Conference on Machine Learning",
      year={2024}
}

@inproceedings{dosovitskiy_image_2021,
	title = {An Image is Worth 16x16 Words: Transformers for Image Recognition at Scale},
	url = {https://openreview.net/forum?id=YicbFdNTTy},
	booktitle = {International {Conference} on {Learning} {Representations}},
	author = {A. Dosovitskiy and L. Beyer and A. Kolesnikov and D. Weissenborn and X. Zhai and T. Unterthiner and M. Dehghani and M. Minderer and G. Heigold and S. Gelly and J. Uszkoreit and N. Houlsby},
	year = {2021},
}

@inproceedings{liu2023llava,
author      = {Liu, Haotian and Li, Chunyuan and Wu, Qingyang and Lee, Yong Jae},
title       = {Visual Instruction Tuning},
booktitle   = {NeurIPS},
year        = {2023}
}

@inproceedings{bhojanapalli2021lowrank,
  title     = {Low-Rank Bottleneck in Multi-Head Attention Models},
  author    = {Bhojanapalli, Srinadh and Yun, Chulhee and Rawat, Ankit Singh and Reddi, Sashank and Kumar, Sanjiv},
  booktitle = {International Conference on Machine Learning},
  year      = {2020},
}

@inproceedings{katharopoulos2020transformers,
  title     = {Transformers are {RNNs}: Fast Autoregressive Transformers with Linear Attention},
  author    = {Katharopoulos, Angelos and Vyas, Apoorv and Pappas, Nikolaos and Fleuret, Fran{\c{c}}ois},
  booktitle = {Proceedings of the 37th International Conference on Machine Learning},
  year      = {2020},
  pages     = {5156--5165},
}

@inproceedings{choromanski2021rethinking,
  title     = {Rethinking Attention with {Performers}},
  author    = {Choromanski, Krzysztof and Likhosherstov, Valerii and Dohan, David and Song, Xingyou and Gane, Andreea and Sarlos, Tamas and Hawkins, Peter and Davis, Jared and Mohiuddin, Afroz and Kaiser, Lukasz and Belanger, David},
  booktitle = {International Conference on Learning Representations},
  year      = {2021},
}
